\begin{document}

%

%

\twocolumn[

\aistatstitle{Sketch In, Sketch Out: Accelerating both Learning and Inference for Structured Prediction with Kernels}

\aistatsauthor{ Tamim El Ahmad \And Luc Brogat-Motte \And  Pierre Laforgue \And Florence d'Alch\'e-Buc }

\aistatsaddress{  T\'el\'ecom Paris \And  CentraleSup\'elec \And University of Milan \And T\'el\'ecom Paris } ]

\begin{abstract}
    Leveraging the kernel trick in both the input and output spaces, surrogate kernel methods are a flexible and theoretically grounded solution to structured output prediction.
    If they provide state-of-the-art performance on complex data sets of moderate size (e.g., in chemoinformatics), these approaches however fail to scale.
    We propose to equip surrogate kernel methods with sketching-based approximations, applied to both the input and output feature maps.
    We prove excess risk bounds on the original structured prediction problem, showing how to attain close-to-optimal rates with a reduced sketch size that depends on the eigendecay of the input/output covariance operators.
    From a computational perspective, we show that the two approximations have distinct but complementary impacts: sketching the input kernel mostly reduces training time, while sketching the output kernel decreases the inference time.
    %
    %
    Empirically, our approach is shown to scale, achieving state-of-the-art performance on benchmark data sets where non-sketched methods are intractable.
\end{abstract}


%

\section{INTRODUCTION}
\label{sec:intro}

Ubiquitous in real-world applications, structured objects have attracted a great deal of attention in machine learning \citep{bakir2007,Gartner08,nowozin2011,deshwal2019struct}.
Depending on their role, i.e., either as input or output variables, they raise distinct challenges.
Classification and regression from structured \emph{inputs} generally rely on a continuous representation learned by a deep neural network \citep{defferrard2016gcn}, or implicitly defined through a dedicated kernel \citep{collins2001, borgwardt20}.
In contrast, structured \emph{output} prediction calls for a more involved approach, since the discrete nature of the outputs impacts the definition of the loss function \citep{nowak19a, ciliberto2020general, cabannes2021fast}, and therefore the learning problem itself.

To handle this problem, several methods have been developed to relax the combinatorial problems that appear both at training and inference.
Energy-based approaches convert structured prediction into learning a scalar score function \citep{tsochantaridis2005large, lecun2007, belanger2016structured, deshwal2019struct}.
End-to-end learning typically exploits a differentiable model, together with a differentiable loss, to run gradient descent \citep{long2015fully,niculae18,berthet2020learning}.
Surrogate methods \citep{ciliberto2020general} solve a regression problem in a Hilbert space where outputs have been implicitly embedded, shortcutting the inference during learning.
%

Rare are the methods that enjoy both scalability at learning/inference steps and statistical guarantees \citep{osokin17,cabannes2021fast}.
In this work, we focus on surrogate approaches and their implementation as kernel methods, i.e., the input output kernel regression framework \citep{cortes2005,brouard2016input}.
Recent works \cite{Ciliberto2016, ciliberto2020general} have shown that they enjoy consistency, their excess risk being governed by that of the surrogate regression.
Moreover, they are well appropriate to make prediction from one structured modality to another, since kernels can be leveraged in both the input and output spaces.
Overall, they offer a general, theoretically grounded, and simple-to-implement solution to structured prediction, providing state-of-the-art results in applications such as molecule identification \citep{Schymanski2016}.

However, contrary to deep neural networks, they do not scale neither in memory nor in time without further approximation.
The aim of this paper is to equip these methods with kernel approximations to obtain a drastic complexity reduction while maintaining their statistical properties.
Several works have highlighted the power of kernel approximations, from Random Fourier Features \citep{rahimi2007, brault2016random, rudi2017generalization, li2021unified}, to general low-rank approaches \citep{bach2013sharp, meanti2020kernel}.

In this work we focus on sketching \citep{mahoney2011randomized,woodruff14}, a general dimension reduction method based on linear random projections.  
Applied to kernel approximation, sketching 
has been widely studied through Nystr\"{o}m's sub-sampling approximation \citep{WilliamsNystromNIPS2000, elalaoui_NIPS2015, rudi2015less}, and  further explored using Gaussian or Randomized Orthogonal Systems \citep{Yang2017, lacotte2020adaptive}. Interpreted as a way to provide data-dependent random features \citep{WilliamsNystromNIPS2000, Yang_NIPS2012_621bf66d,Kpotufe2020}, this approach has allowed to scale up kernel PCA \citep{stergeNKPCA22}, kernel mean embedding \citep{chatalicaistats22,chatalicicml22} or independence tests \citep{kalinke2023nystr} while enjoying statistical guarantees.
%
However, sketching has been limited so far to scalar kernel machines. No current approach covers both sides of the coin, i.e., applying approximations to both the input and output kernels.
Motivated by surrogate structured prediction, we close this gap and make the following contributions:
\begin{itemize}[topsep=0pt,parsep=0pt,itemsep=3pt]
\item We apply sketching to the vector-valued kernel regression problem solved in structured prediction, both on inputs and outputs, which accelerates respectively learning and inference.
\item We derive excess risk bounds controlled by the properties of the sketched projection operators.
\item We prove that sub-Gaussian sketches provide \mbox{close-to-optimal} rates with small sketch sizes.
\item 
We empirically show that our algorithms maintain good accuracy on moderate-size datasets while enabling kernel surrogate methods on large datasets where the standard approach is simply intractable.
\end{itemize}

\paragraph{Notations.}
We introduce now generic notations for the input (output) space and kernel, detailed in \cref{apx:not_def}.
If $\bmZ$ denotes a generic Polish space, $\kernelz$ is a positive definite kernel over $\bmZ$ and $\psiz(z)\coloneqq\kernelz(\cdot, z)$ is the canonical feature map of $\kernelz$.
$\Hz$ denotes the Reproducing Kernel Hilbert Space (RKHS) associated to $\kernelz$.
$\SZ : f \in \Hz \mapsto (1/\sqrt{n}) (f(z_1), \ldots, f(z_n))^\top$ is the sampling operator over $\Hz$ \citep{Smale_2007}.
%

\section{BACKGROUND}
\label{sec:IOKR}

We now recall the structured prediction setting based on a kernel-induced loss, and a state-of-the-art surrogate approach to solve it.
We also provide reminders about sketching as a way to scale-up kernel methods.

\paragraph{Structured prediction with surrogate kernel methods.}
Let $\bmX$ be the input space and $\bmY$ a structured output space. In general, $\bmY$ is finite and extremely large. Define a positive definite kernel $\kernely:\bmY \times \bmY \to \reals$, that measures how close two objects from $\bmY$ are. We consider the loss function induced by $\kernely$, defined as $\ell:(y,y') \rightarrow  \|\psiy(y) - \psiy(y')\|^2_{\Hy}$.
Note that it can be computed using the kernel trick. 
%
%
%
%
%
Given an unknown joint probability distribution $\rho$ defined on $\bmX \times \bmY$, the goal of structured prediction is to approximate
\begin{equation}\label{eq:sp}
f^* =\underset{f: \bmX \rightarrow  \bmY}{\arg \min} ~ \bmR(f)\,,
\end{equation}
where $\bmR(f) = \E_{(x,y) \sim \rho}\left[\| \psiy(y) - \psiy(f(x)) \|_{\Hy}^{2} \right]$, using only an i.i.d. sample $\{(x_1,y_1), \ldots ,(x_n,y_n)\}$ drawn from $\rho$.
Estimating directly $f^*$ is not tractable, such that many works \citep{cortes2005, geurts2006, Brouard_icml11, Ciliberto2016} have proposed instead the following two-step approach:

{\bf 1. Surrogate Regression:} Find an estimator $\hat{h}$ of the surrogate target $h^* \colon x \mapsto  \E_y[\psiy(y) | x]$ such that
\[
h^* = \argmin_h ~ \mathbb{E}_{(x, y)}\left[\left\|h\left(x\right)-\psiy\left(y\right)\right\|_{\Hy}^2\right]\,.
\]

{\bf 2. Pre-image:} Define $\hat{f}$ by decoding $\hat{h}$, i.e.,
\[
\hat{f}(x) = d(\hat{h}(x)) \coloneqq \argmin _{y \in \mathcal{Y}}\big\|\hat{h}(x)-\psiy(y)\big\|_{\Hy}^2\,.
\]

The surrogate regression in Step 1 is much easier to handle than the initial structured prediction problem: it avoids learning $f$ through the composition with the implicit feature map $\psiy$, and relegates the difficulty of handling structured objects to Step 2, i.e. at inference.
In addition, vector-valued regression into infinite-dimensional spaces is a well-studied problem, that can be solved by using the kernel trick in the output space.
This two-step approach belongs to the general framework of SELF \citep{Ciliberto2016} and ILE \citep{ciliberto2020general} and enjoys valuable theoretical guarantees.
It is Fisher consistent, i.e., $h^*$ yields $f^*$ after decoding, and the excess risk of $\hat{f}$ is controlled by that of $\hat{h}$.

\paragraph{Input Output ridge Kernel Regression. }A common choice to tackle in practice the surrogate regression problem consists in solving a {\it kernel ridge regression problem}, leveraging kernels in both input and output spaces. The hypothesis space is chosen as a vector-valued Reproducing Kernel Hilbert Space (vv-RKHS) \citep{Senkene1973, micchelli2005learning, carmeli2006vector, carmeli2010vector}. 
In the same way that RKHS are based on positive symmetric definite kernels, vv-RKHS are based on Operator-Valued Kernels (OVK). In our setting,  we define an OVK $\bmK$, as a mapping $\bmK: \bmX \times \bmX \rightarrow  \bmL(\Hy)$, where $ \bmL(\Hy)$ is the set of bounded linear operators on $\Hy$, and that satisfies the properties recalled in \Cref{sec-apx:vvRKHS}.
An OVK $\bmK$ is uniquely associated with a vv-RKHS $\bmH$, i.e. a Hilbert space of functions from $\bmX$ to $\Hy$ that enjoys the reproducing kernel property (see \Cref{sec-apx:vvRKHS}).

%
%

In what follows, we opt for the identity decomposable OVK $\bmK: \bmX \times \bmX \to \bmL(\Hy)$, defined as: $\bmK\left(x, x^{\prime}\right)=\kernelx\left(x, x^{\prime}\right) I_{\Hy}$, where $\kernelx: \mathcal{X} \times \mathcal{X} \rightarrow \reals$ is a p.d. scalar-valued kernel on $\mathcal{X}$.
%
%
In {\it Input Output Kernel Ridge Regression} (IOKR for short, \citealt{Brouard_icml11, kadri2013generalized, brouard2016input, ciliberto2020general}, also introduced as Kernel Dependency Estimation by \citet{westonNIPS2002}), the estimator of the surrogate regression is obtained by solving the following Ridge regression problem within $\bmH$, given a regularisation penalty $\lambda > 0$,
\begin{equation}\label{pb:iokr-ridge}
    \hat{h} = \argmin _{h \in \bmH} ~ \frac{1}{n} \sum_{i=1}^n\left\|\psiy(y_i) - h(x_i)\right\|_{\Hy}^2+\lambda\|h\|_{\bmH}^2\,.
\end{equation}
Interestingly, the unique solution to the above problem can be expressed in different ways.
From one hand, we can derive from the representer theorem in vv-RKHSs \citep{micchelli2005learning} the following expression:
\begin{equation}\label{eq:ridge_est_alpha}
    \hat{h}(x) = \sum_{i=1}^n \hat{\alpha}_i(x) \psiy(y_i), 
\end{equation}
with $\hat{\alpha}(x)=(\GramX+n \lambda I_n)^{-1} \kernelvectX \coloneqq \widehat{\Omega} \kernelvectX$, where $\GramX=\left(\kernelx\left(x_i, x_j\right)\right)_{i, j=1}^n$ and $\kernelvectX = \big(\kernelx(x, x_1), \ldots,\kernelx(x, x_n)\big)$.
On the other hand, using an operator view one obtains
\begin{equation}\label{eq:ridge_est_H}
    \hat{h}(x) =  \widehat{H} \psix(x)\,, 
    \end{equation}
where $\widehat{H}=\SY^\# \SX \big(\empcovx + \lambda I\big)^{-1}$.
The latter expression can be seen as a re-writing of the first \citep{Ciliberto2016}, echoing the KDE equations with finite-dimensional feature maps \citep{cortes2005}.
It can also be related to the conditional kernel empirical mean embedding \citep{grunewalder2012}.

%
%
%
%
%
%

The final estimator $\hat{f}$ is computed using the expression in \eqref{eq:ridge_est_alpha}, in order to benefit from the kernel trick:
\begin{equation}\label{eq:hatf}
\hat{f}(x) = \underset{y \in \bmY}{\arg \min} ~ \kernely(y, y) - 2 {\kernelvectX}^T \widehat{\Omega} {\kernelvectY}\,,
\end{equation}
where ${\kernelvectY} = \left(\kernely\left(y, y_1\right), \ldots, \kernely\left(y, y_n\right)\right)^\top$.
The training phase thus involves the inversion of a $n \times n$ matrix, whose cost without any approximation is $\bmO(n^3)$.
Besides, it implies storing $n^2$ values in memory, which induces a heavy space complexity as well.
In practice, decoding is performed by searching in a candidate set $\bmY_c \subseteq \bmY$ of size $n_c$. 
Hence, performing predictions on a test set $X_{\textnormal{te}}$ of size $n_{\textnormal{te}}$ mainly implies computing
\begin{equation}\label{eq:IOKR_pred}
    \underbrace{\GramX^{\textnormal{te}, \textnormal{tr}}}_{n_{\textnormal{te}} \times n} \, \underbrace{\widehat{\Omega}}_{n \times n}  \, \underbrace{\GramY^{\textnormal{tr}, c}}_{n \times n_c}\,,
\end{equation}
where $\GramX^{\textnormal{te}, \textnormal{tr}} = \left(\kernelx(x_i^{\textnormal{te}}, x_j)\right)_{1 \leq i \leq n_{\textnormal{te}}, 1 \leq j \leq n} \in \reals^{n_{\textnormal{te}} \times n}$, and $\GramY^{\textnormal{tr}, c} = \left(\kernely(y_i, y_j^c)\right)_{1 \leq i \leq n, 1 \leq j \leq n_c} \in \reals^{n \times n_c}$. The complexity of the decoding part is $\bmO\left(n_{\textnormal{te}} n n_c\right)$, considering $n_{\textnormal{te}} < n \leq n_c$.
IOKR thus suffers from both heavy time and space computational costs.
%
To cope with this limitation, we develop a general sketching approach that applies to both input and output feature spaces, accelerating both training and decoding.

\begin{figure*}[!t]
\centering
\subfigure{\label{fig:IOKR}\includegraphics[width=0.4\textwidth]{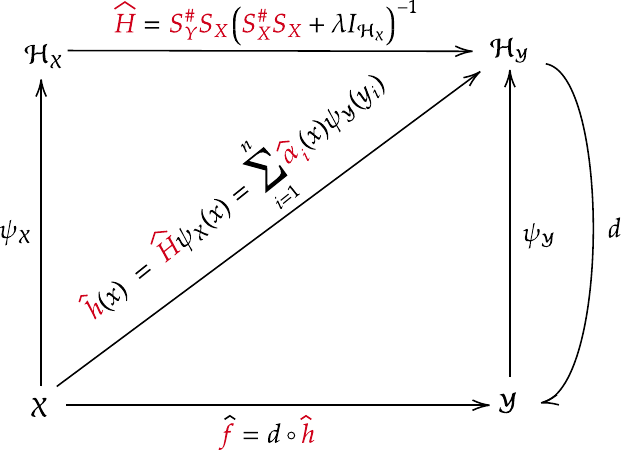}}
\qquad\quad
\subfigure{\label{fig:SISOKR}\includegraphics[width=0.4\textwidth]{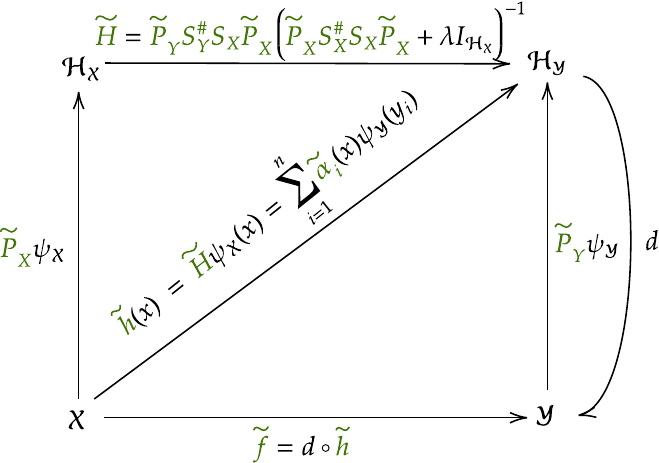}}
\caption{IOKR (left) and SISOKR (right) in the KDE setting.
Note that SISOKR consists in IOKR when kernels $\kernelz$ are replaced with their projected versions $\tilde{k}_\bmZ(\cdot, \cdot) = \langle \psiz(\cdot), \projZ \psiz(\cdot) \rangle_{\Hz}$.
%
However, this new output kernel changes the pre-image problem, and consequently the estimator $\tilde{f}$.
In the paper, we modify $\widetilde{H}$ (and not the kernels) in order to use the comparison inequality from \citet{ciliberto2020general},
see the proof of \cref{corollary:SISOKR_lr}.}
\label{fig:IOKR_SISOKR}
\end{figure*}

\paragraph{Sketching for kernel methods.} 
%
Applied to kernel methods to reduce their dependency in $n$, sketching can be seen as linear projections induced by a random matrix $R$ (the sketching matrix) drawn from a probability distribution over $\reals^{m \times n}$, where $m \ll n$.
Classic examples include Nystr\"{o}m's approximation, where each row of $R$ is randomly drawn from the rows of the identity matrix $I_n$, and Gaussian sketches, where all entries of $R$ are i.i.d. Gaussian random variables.
%
%
Nystr\"{o}m's approximation acts as a random training data sub-sampler, but it can be interpreted in many ways.
In \citet{drineas2005, bach2013sharp}, it is shown to generate a low-rank approximation of the Gram matrix, while in \citet{WilliamsNystromNIPS2000, Yang_NIPS2012_621bf66d}, it is seen as a way to construct data-dependent finite-dimensional random features. In \citet{rudi2015less}, instead, it is presented as a projection onto a small subspace of the RKHS.
%
%
For other sketching schemes such as Gaussian or Randomized Orthogonal Systems, most of the works adopt an optimization viewpoint, where a variable substitution is operated after the application of a Representer theorem \citep{Yang2017, lacotte2020adaptive}.
An interesting view provided in \citet{Kpotufe2020} explores the construction of random features based on Gaussian sketching.
%
All these works are however limited to sketching the \emph{input} kernel, in scalar regression problems.
%
In this work: (1) we generalize input sketching to vector-valued problems, (2) we sketch the outputs, which is critical to scale-up surrogate methods with kernelized outputs.

\section{SKETCHED INPUT SKETCHED OUTPUT KERNEL REGRESSION}\label{sec:SISOKR}

The goal of this section is to construct a low-rank estimator of $\hat{h}$ by using sketching on both the input and output kernels.
%
%
Note that sketching the feature maps is not desirable here: if we replace the output features $\psiy(y_i) \in \Hy$ with some sketch-dependent approximations $\tilde{\psi}_\bmY(y_i) \in \reals^m$ we become unable to compare the resulting $\tilde{h}$ to the target $h^*$.
Indeed, $\tilde{h}$ is an approximation of $x \mapsto \E_{y}[\tilde{\psi}_\bmY(y)|x]$, which is a biased version of $h^*$ due to the sketch realization.
Instead, as we show below, seeing sketching as orthogonal projections provides a natural way to solve our problem.
%
%
%
Ultimately, this gives rise to an estimator $\tilde{f}$ for structured prediction which is versatile, easy-to-implement, theoretically-based and scalable to large data sets.

\paragraph{Low-rank estimator.} Given two orthogonal projection operators $\projX$ and $\projY$, we start from \eqref{eq:ridge_est_H} and replace the sampling operators on both sides, $\SX$ and $\SY$, by their projected counterparts, $\SX \projX$ and $\SY \projY$, so as to encode dimension reduction.
The proposed low-rank estimator is expressed as follows:
\begin{equation*}
    \tilde{h}(x) = \projY \SY^{\#}\SX \projX\Big(\projX \empcovx \projX + \lambda I_{\Hx}\Big)^{-1} \psix(x)\,.
\end{equation*}
%
%
%
%
%
We now show how to design the projection operators using sketching and then derive the novel expression of the low-rank estimator in terms of a weighted combination of the training outputs: $\tilde{h}(x) = \sum_{i=1}^n \tilde{\alpha}_i \psiy(y_i)$, yielding a reduced computational cost.
IOKR and SISOKR approaches are illustrated on \cref{fig:IOKR_SISOKR}.
\paragraph{Sketching.} In this work, we chose to leverage sketching to obtain random projectors within the input and output feature spaces.
Indeed, sketching consists of approximating a feature map $\psiz : \bmZ \to \Hz$ by projecting it thanks to a random projection operator $\projZ$ defined as follows.
Given a random matrix $\sketchz \in \reals^{\mz \times n}$, $n$ data $(z_i)_{i=1}^n \in \bmZ$ and $\mz \ll n$, the linear subspace defining $\projZ$ is constructed as the linear subspace generated by the span of the following $\mz$ random vectors
\[
    \sum_{j=1}^n (\sketchz)_{ij} \psiz(z_j) \in \Hz, \quad i=1, \dots, \mz\,.
\]
%
%
%
One can show (\cref{prop:proj_exp} in \cref{apx:preliminary_res}) that the corresponding orthogonal projector writes 
\begin{equation}\label{eq:proj_core}
\projZ = (\sketchz \SZ)^{\#} \left(\sketchz \SZ (\sketchz \SZ)^{\#}\right)^{\dagger} \sketchz \SZ\,.    
\end{equation}
    %
    %
    %
    %

\begin{table*}[!ht]
\caption{Time and space complexities at training and inference for the IOKR and SISOKR algorithms with sub-sampling, $p$-sparsified ($p \in (0, 1]$) or Gaussian sketching, for a test set of size $n_{te}$  and a candidate set of size $n_c$, such that $n_{te} \leq \mx, \my < n \leq n_c$. For the sake of simplicity, we omit the $\bmO(\cdot)$ in the following.}
\begin{adjustbox}{center}
\begin{small}
\begin{tabular}{c|cc|cc}
    \toprule
    & \multicolumn{2}{c|}{Training} & \multicolumn{2}{c}{Inference} \\ 
    Method & Time & Space & Time & Space \\
    \midrule
    IOKR & $n^3$ & $n^2$ & $n_{te} n n_c$ & $n n_c$  \\
    SISOKR (sub-sampling) & $\max(\mx, \my) n$ & $\max(\mx, \my) n$ & $n_{te} \my n_c$ & $\my n_c$ \\
    SISOKR ($p$-sparsified) & $\max(\mx, \my)^2 p n$ & $\max(\mx, \my) p n$ & $\max(n_{te}, n \my p) \my n_c$ & $n p \my n_c$ \\
    SISOKR (Gaussian) & $\max(\mx, \my) n^2$ & $n^2$ & $n \my n_c$ & $n n_c$ \\
    \bottomrule
\end{tabular}
\end{small}
\end{adjustbox}
\label{table:complexities}
\end{table*}

\paragraph{Sketched Input Sketched Output Kernel Regression (SISOKR).}
The SISOKR estimator is the low-rank estimator $\tilde{h}$, where both $\projX$ and $\projY$ have been chosen as \eqref{eq:proj_core}, for some random sketches $\sketchx$ and $\sketchy$.
%
It also admits the following expression based on a linear combination of the $\psiy(y_i)$.
The proof of the following proposition is given in \cref{apx:preliminary_res}.
\begin{restatable}[Expression of SISOKR]{proposition}{propsisokr}\label{prop:SISOKR_exp}
$\forall\,x \in \bmX$,
\[
\tilde{h}\left(x\right)=\sum_{i=1}^n \tilde{\alpha}_i\left(x\right) \psiy\left(y_i\right)\,,
\]
where $\tilde{\alpha}\left(x\right) = \sketchy^\top \widetilde{\Omega} \sketchx \kernelvectX$ and
\[
\widetilde{\Omega} = \sketchGramY^\dagger \sketchy \GramY \GramX \sketchx^\top (\sketchx \GramX^2 \sketchx^\top + n \lambda \sketchGramX)^{\dagger}\,,
\]
with $\sketchGramX = \sketchx \GramX \sketchx^\top$ and \,$\sketchGramY = \sketchy \GramY \sketchy^\top$.
\end{restatable}
Note that the matrix quantity that we recover above, $\GramX \sketchx^\top \big(\sketchx \GramX^2 \sketchx^\top + n \lambda \sketchx \GramX \sketchx^\top\big)^{\dagger} \sketchx \kernelvectX$, is typical to sketched kernel Ridge regression \citep{rudi2015less, Yang2017}.
It allows to reduce the size of the matrix to invert, which is now an $\mx \times \mx$ matrix.
This is the main reason for the reduction of the learning step's complexity and is due to the input sketching.
Nonetheless, we still need to perform matrix multiplication $\sketchx \GramX$, whose efficiency depends on the sketch used).
Note that output sketching also requires additional operations, but the overall cost of computing $\tilde{\alpha}$ remains negligible compared to $\bmO(n^3)$, see ``training time'' column in \cref{table:complexities}.
As an example, with input/output Gaussian sketching which is the less efficient one, the time complexity is of order $\max(\mx,\my)n^2$, where $\mx,\my \ll n$.
We obtain the corresponding structured prediction estimator $\tilde{f}$ by decoding $\tilde{h}$, i.e., by replacing $\widehat{\Omega}$ by $\widetilde{\Omega}$ in \eqref{eq:hatf}.
In fact, the main quantity we have to compute for prediction is now
\begin{equation}\label{eq:SISOKR_pred}
    \underbrace{\GramX^{\textnormal{te}, \textnormal{tr}} \sketchx^\top}_{n_{\textnormal{te}} \times \mx} \underbrace{\widetilde{\Omega}}_{\mx \times \my} \underbrace{\sketchy \GramY^{\textnormal{tr}, c}}_{\my \times n_c}\,.
\end{equation}
The time complexity of this operation is $\bmO(n_{\textnormal{te}} m_\bmY n_c)$ if $n_{\textnormal{te}} \leq \mx, \my < n \leq n_c$, which is a significant complexity reduction (the dependence in $n$ vanishes), governed by the output sketch size $\my$, see \cref{table:complexities} for more details.

\section{THEORETICAL ANALYSIS}
\label{sec:theory}
In this section, we present a statistical analysis of the proposed estimators $\tilde h$ and $\tilde f$.
After introducing the assumptions on the learning task, we upper bound the excess-risk of the sketched kernel ridge estimator, highlighting the approximation errors due to sketching.
We then provide bounds for these approximation error terms.
Finally, we study under which setting the proposed estimators $\tilde h$ and $\tilde f$ obtain substantial computational gains, while still benefiting from a close-to-optimal learning rates.
We consider the following set of common assumptions in the kernel literature \citep{bauer2007, steinwart2009optimal, rudi2015less, pillaud2018statistical, fischer2020sobolev, ciliberto2020general, brogat-mottejmlr22}.

\begin{assumption}[Attainability]\label{asm:attainable_case}
We assume that $h^* \in \bmH$, i.e., that there is a linear operator $H : \Hx \rightarrow \Hy$, with $\left\|H\right\|_{\HS} < + \infty$, s.t. $h^*(x) = H \psix(x)$, $\forall\, x \in \bmX$.
\end{assumption}

This is a standard assumption in the context of least-squares regression \citep{caponnetto2007optimal}, making the target $h^*$ belong to the hypothesis space.
Note that relaxing this assumption is possible, although it would add a bias term that still requires some knowledge about $h^*$ to be bounded.
For instance, if $h^*$ is supposed to be square-integrable, one usually chooses a RKHS associated with a universal operator-valued kernel, which is dense in the space of the square-integrable functions \citep[Section~4]{carmeli2010vector}.
We now describe a set of generic assumptions that have to be satisfied by both input and output kernels $\kernelx$ and $\kernely$.

\begin{assumption}[Bounded kernel]\label{asm:bounded_input_output_kernels} 
There exists $\kappaz > 0$ such that $\kernelz(z,z) \leq \kappaz^2$,  $\forall\, z \in \bmZ$. We note $\kappax, \kappay > 0$ for the input and output kernels $\kernelx$ and $\kernely$ respectively.
\end{assumption}

\begin{assumption}[Capacity condition]\label{asm:capacity} There exists $\gammaz \in [0,1]$ such that $ \Qz := \Tr(\covz^{\gammaz}) < +\infty$.
\end{assumption}

Note that \Cref{asm:capacity} is always verified for $\gammaz = 1$, as $\Tr(\covz) = \E[\|\psiz(z)\|_{\Hz}^2] < + \infty $ from \cref{asm:bounded_input_output_kernels}, and that the smaller $\gammaz$ the faster the eigendecay of $\covz$, with $\gammaz = 0$ when $\covz$ is of finite rank.
More generally, this assumption is for instance verified for a Sobolev kernel and a marginal distribution whose density is upper-bounded \citep[Assumption~2]{ciliberto2020general}.

\begin{assumption}[Embedding property]\label{asm:emb} There exist $\bz >0$ and $\muz \in [0,1]$ such that $\psiz(z) \otimes \psiz(z) \preceq \bz \covz^{1-\muz}$ almost surely.
\end{assumption}

Note that \Cref{asm:emb} is always verified for $\muz = 1$, as $\psiz(z) \otimes \psiz(z) \preceq \kappaz^2 I_{\Hz}$ by \cref{asm:bounded_input_output_kernels}, and that the smaller $\muz$, the stronger the assumption, with $\muz=0$ when $\covz$ is of finite.
It allows to control the regularity of the functions in $\Hz$ with respect to the $L^{\infty}$-norm, as it implies $\|h\|_{L_\infty} \leq \bz^{1/2} \|h\|_{\Hz}^{\mu} \E[h(z)^2]^{(1-\mu)/2}$ \citep{pillaud2018statistical}.
For instance, an absolutely continuous distribution whose density is lower-bounded almost everywhere and a Mat\'{e}rn kernel verifies \Cref{asm:emb} \citep[Example~2]{pillaud2018statistical}.

\paragraph{SISOKR Excess-Risk.}
We can now provide a bound on the excess-risk of SISOKR.

\begin{restatable}[SISOKR excess-risk bound]{theorem}{thmsisokr}\label{th:sketched_ridge}
Let $\delta \in (0,1]$, $n \in \mathbb{N}$ such that $\lambda = n^{-1/(1+\gammax)} \geq \frac{9\kappax^2}{n} \log(\frac{n}{\delta})$.
Under \Cref{asm:attainable_case,asm:bounded_input_output_kernels,asm:capacity,asm:emb}, with probability $1-\delta$ we have
\begin{align}
\E_x\Big[\|\tilde h(x) &- h^*(x)\|^{2}_{\Hy}\Big]^{\frac{1}{2}}\nonumber\\
&\leq  S(n, \delta) + c_2 A_{\rhox}^{\psix}(\widetilde P_X) + A_{\rhoy}^{\psiy}(\widetilde P_Y)\,,\label{eq:decompo_excess_risk}
\end{align}
where $S(n, \delta) = c_1 \log(4/\delta)\,n^{-\frac{1}{2(1 + \gammax)}}$ and
\[
A_{\rhoz}^{\psiz}(\widetilde P_Z) = \E_z\Big[\|(\projZ -I_{\Hz})\psiz(z)\|_{\Hz}^2\Big]^{\frac{1}{2}}\,,
\]
with $c_1, c_2 >0$ constants independent of $n$ and $\delta$.
\end{restatable}

\begin{proof}[Proof sketch.]
The proof relies on a decomposition of the operator $\widetilde{H}$ such that $\tilde{h}(x) = \widetilde{H}\psi_\mathcal{X}(x)$, see \eqref{eq:decompo_SISOKR}.
The first term in \eqref{eq:decompo_excess_risk} corresponds to the non-sketched kernel Ridge regression error, and the second term to the input sketching error.
The latter extends both the results of \citet{ciliberto2020general} to sketched estimators, and that of \citet{rudi2015less} to the vector vector-valued case.
The third term, i.e., the output sketching error is specific to our framework and derives from the expression of $h^*$ and Jensen's inequality.
\end{proof}

The learning rate of the first term, i.e., the non-sketched kernel Ridge regression error, has been shown to be optimal under our set of assumptions in a minimax sense \citep{caponnetto2007optimal}.
The second and the third terms are approximation errors due to the sketching of the input and the output kernels, respectively.
In particular, they write as \textit{reconstruction errors} \citep{blanchard2007statistical} associated to the random projection $\projX$ and $\projY$ of the feature maps $\psix$ and $\psiy$ through the input and output marginal distributions.


\paragraph{Sketching Reconstruction Error.}
In \cref{th:sketching}, we give bounds on the sketching reconstruction error for the family of sub-Gaussian sketches, enlarging the scope of sketching distributions whose reconstruction error's bound is known ---it was previously limited to uniform and approximate leverage scores sub-sampling sketches \citep{rudi2015less}.
More generally, note that are admissible in our theoretical framework all sketching distributions for which concentration bounds on the induced empirical covariance operators can be derived, since quantity $A_{\rhoz}^{\psiz}(\widetilde P_Z)$ is then easily controlled.
We now recall the definition of sub-Gaussian sketches, and show how to bound their reconstruction error.

\begin{definition}\label{def:subG_sketch}
    A sub-Gaussian sketch $\sketchz \in \reals^{\mz \times n}$ is composed of i.i.d. entries such that $\E\left[\sketchzij\right] = 0$, $\E\left[\sketchzij^2\right] = 1/\mz$ and $\sketchzij$ is $\frac{\nuz^2}{\mz}$-sub-Gaussian, for all $1 \leq i \leq \mz$ and $1 \leq j \leq n$, where $\nuz \geq 1$.
\end{definition}

Recall that a standard normal r.v. is $1$-sub-Gaussian.
Moreover, by Hoeffding's lemma, any r.v. taking values in a bounded interval $[a, b]$ is $(b-a)^2/4$-sub-Gaussian.
Hence, any sketch matrix composed of i.i.d. Gaussian or bounded r.v. is a sub-Gaussian sketch.
Finally, note that $p$-sparsified sketches \citep{elahmad2023fast} are sub-Gaussian with $\nuz^2 = 1/p$, with $p \in ]0,1]$.

\begin{restatable}[sub-Gaussian sketching reconstruction error]{theorem}{thmsketching}\label{th:sketching}  For $\delta \in \left(0, 1/e\right]$, $n \in \mathbb{N}$ sufficiently large such that $\frac{9}{n} \log(n / \delta) \leq n^{-\frac{1}{1+\gammaz}} \leq \|\covz\|_{\op}/2$, then if 
\begin{align}\label{eq:m_lower_bound}
    \mz \geq c_4 \max\left(\nu_\mathcal{Z}^2\,n^{\frac{\gammaz + \muz}{1+\gammaz}}, \nu_\mathcal{Z}^4 \log\left(1/\delta\right)\right)\,,
\end{align}
with probability $1-\delta$ we have
\begin{align}\label{eq:sketch_rec_error}
     \E_z\Big[\|(\projZ -I_{\Hz})\psiz(z)\|_{\Hz}^2\Big] &\leq c_3\,n^{-\frac{1-\gammaz}{1+\gammaz}}\,,
\end{align}
where $c_3, c_4 >0$ are constants independents of $n, \mz, \delta$.
\end{restatable}

\begin{proof}[Proof sketch]
The proof essentially consists in bounding the difference between the empirical covariance operator and its sketched counterpart in operator norm, see \eqref{eq:diff_op_norms}. The latter rewrites as a sum of sub-Gaussian random variables in a separable Hilbert space, and we invoke \citet[Theorem~9]{Koltchinskii_2017}.
\end{proof}

Hence, depending on the regularity of the distribution (defined through our set of assumptions), one can obtain a small reconstruction error even with a small sketching size.
For instance, if $\muz = \gammaz = 1/3$, one obtains a reconstruction error of order $n^{-1/2}$ by using a sketching size of order $n^{1/2} \ll n$.
As a limiting case, when $\muz = \gammaz = 0$, one obtains a reconstruction error of order $n^{-1}$ when using a constant sketching size.
\medskip

\begin{remark}[Comparison to Nystr\"{o}m's approximation]
Note that the rate in \Cref{th:sketching} is the same as that obtained with Nystr\"{o}m's approximation.
However, our lower bound on the sketching size is slightly better.
Recall that for uniform Nystr\"{o}m it is of order $\max\Big(n^{\frac{\gammaz + \muz}{1+\gammaz}}, 1\Big) \left(\log(n) + \log\left(4 \kappaz^2/\delta\right)\right)$.
\end{remark}

\begin{remark}[Relaxation of \cref{asm:emb}]
\cref{asm:emb} allows to derive an upper bound of $\bmN_{\bmZ}^{\infty}(t)$, with $t = n^{-\frac{1}{1+\gammaz}}$, that appears in the lower bound of the sketching size $\mz$, see \cref{lem:emb} in \cref{apx:aux_res} and the proof of \cref{th:sketching} in \cref{apx:th2}.
However, we also have that $\bmN_{\bmZ}^{\infty}(t) \leq t^{-1}$, hence, if $\muz + \gammaz \geq 1 + \frac{\log(\bz \Qz)(1+\gammaz)}{\log(n)}$, we can relax \cref{asm:emb} and rather obtain
\begin{equation}
    \mz \geq c_4 \max\left(\nu_\mathcal{Z}^2\,n^{\frac{1}{1+\gammaz}}, \nu_\mathcal{Z}^4 \log\left(1/\delta\right)\right)\,,
\end{equation}
as a lower bound.
\end{remark}

\paragraph{Learning rates for SISOKR with sub-Gaussian sketches.} For the sake of presentation, we use $\lesssim$ to keep only the dependencies in $n, \delta, \nu, \gamma, \mu$. We note $a \lor b \coloneqq \max(a,b)$.

\begin{corollary}[SISOKR learning rates]\label{corollary:SISOKR_lr}
Consider the Assumptions of \cref{th:sketched_ridge,th:sketching}, that $\|\psiy(y)\|_{\Hy} = \kappay$ for all $ y \in \bmY$, and $n \in \mathbb{N}$ such that $\frac{9}{n} \log(n / \delta) \leq n^{-\frac{1}{1+\gamma_\mathcal{Z}}} \leq \|\textnormal{C}_\mathcal{Z}\|_{\op}/2$ for $\mathcal{Z} \in \{\mathcal{X}, \mathcal{Y}\}$.
Set
\begin{equation}
    \mz \gtrsim \max\left(\nu_\mathcal{Z}^2\,n^{\frac{\gammaz + \muz}{1+\gammaz}}, \nu_\mathcal{Z}^4 \log\left(1/\delta\right)\right)
\end{equation}
for $\mathcal{Z} \in \{\mathcal{X}, \mathcal{Y}\}$. Then with probability $1-\delta$
\begin{align}\label{eq:ex_ri_f}
    \bmR(\tilde f) - \bmR(f^*) &\lesssim \log\left(4/\delta\right) n^{-\frac{1-\gammax \lor \gammay}{2(1+\gammax \lor \gammay)}}\,.
\end{align}
\end{corollary}


\begin{proof}
Using \Cref{th:sketched_ridge,th:sketching} to bound $A_{\rhox}^{\psix}(\widetilde P_X)$ and $A_{\rhoy}^{\psiy}(\widetilde P_Y)$ gives that with probability $1 -\delta$ it holds $\E_x\big[\|\tilde h(x) - h^*(x)\|^{2}_{\Hy}\big]^{\frac{1}{2}} \lesssim \log\left(4/\delta\right) n^{-\frac{1-\gammax \lor \gammay}{2(1+\gammax \lor \gammay)}}$.
We then apply the comparison inequality \citep{ciliberto2020general} to the loss $\Delta(y,y') = \|\psiy(y) - \psiy(y')\|^2_{\Hy}$.
\end{proof}

This corollary shows that under strong enough regularity assumptions, the proposed estimators benefit from a close-to-optimal learning rate, even with small input and output sketching sizes. For instance, if $\mux = \muy = \gammax = \gammay = 1/3$, one obtains a learning rate of $\bmO(n^{-1/4})$, instead of the optimal rate of $\bmO(n^{-3/8})$ under the same assumptions, but only requiring sketching sizes $\mx, \my$ of order $n^{1/2} \ll n$. As a limiting case, when $\mux = \muy = \gammax = \gammay = 0$, one attains the optimal $\bmO(n^{-1/2})$ learning rate using constant sketching sizes.


\begin{remark}[Other Sketches]
Although we focused on sub-Gaussian sketches, any sketching distribution admitting concentration bounds for operators on separable Hilbert spaces allows to bound the quantity $A_{\rhoz}^{\psiz}(\widetilde P_Z)$ and is then admissible for our theoretical framework. For instance, as showed in \cite{rudi2015less}, uniform and approximate leverage scores sub-sampling schemes fit into the presented theory.
\end{remark}


%
%
%
%
%

\section{EXPERIMENTS}
\label{sec:expes}

\begin{figure*}[!t]
\centering
%
\includegraphics[width=0.288\textwidth]{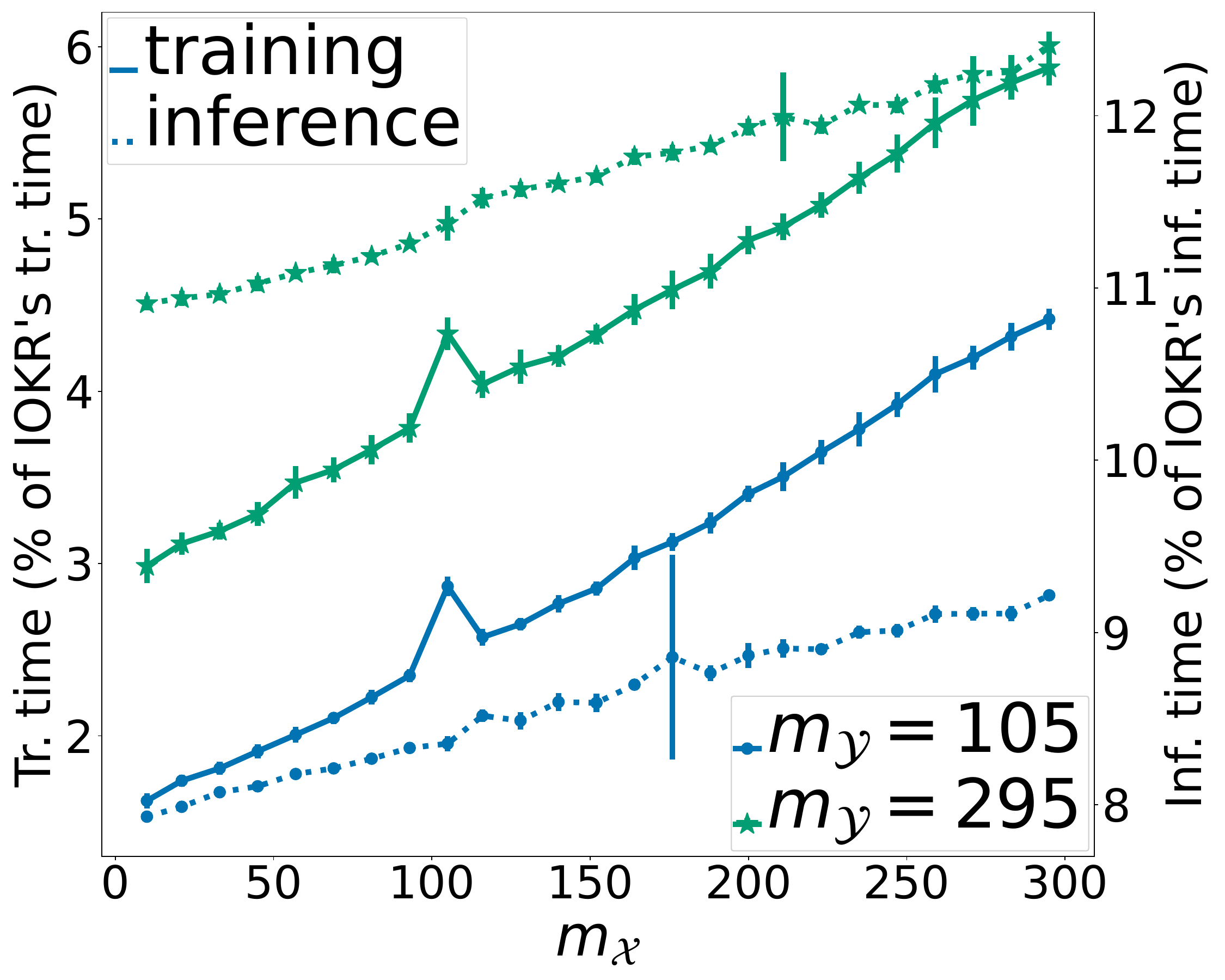}
\hfill
\includegraphics[width=0.288\textwidth]{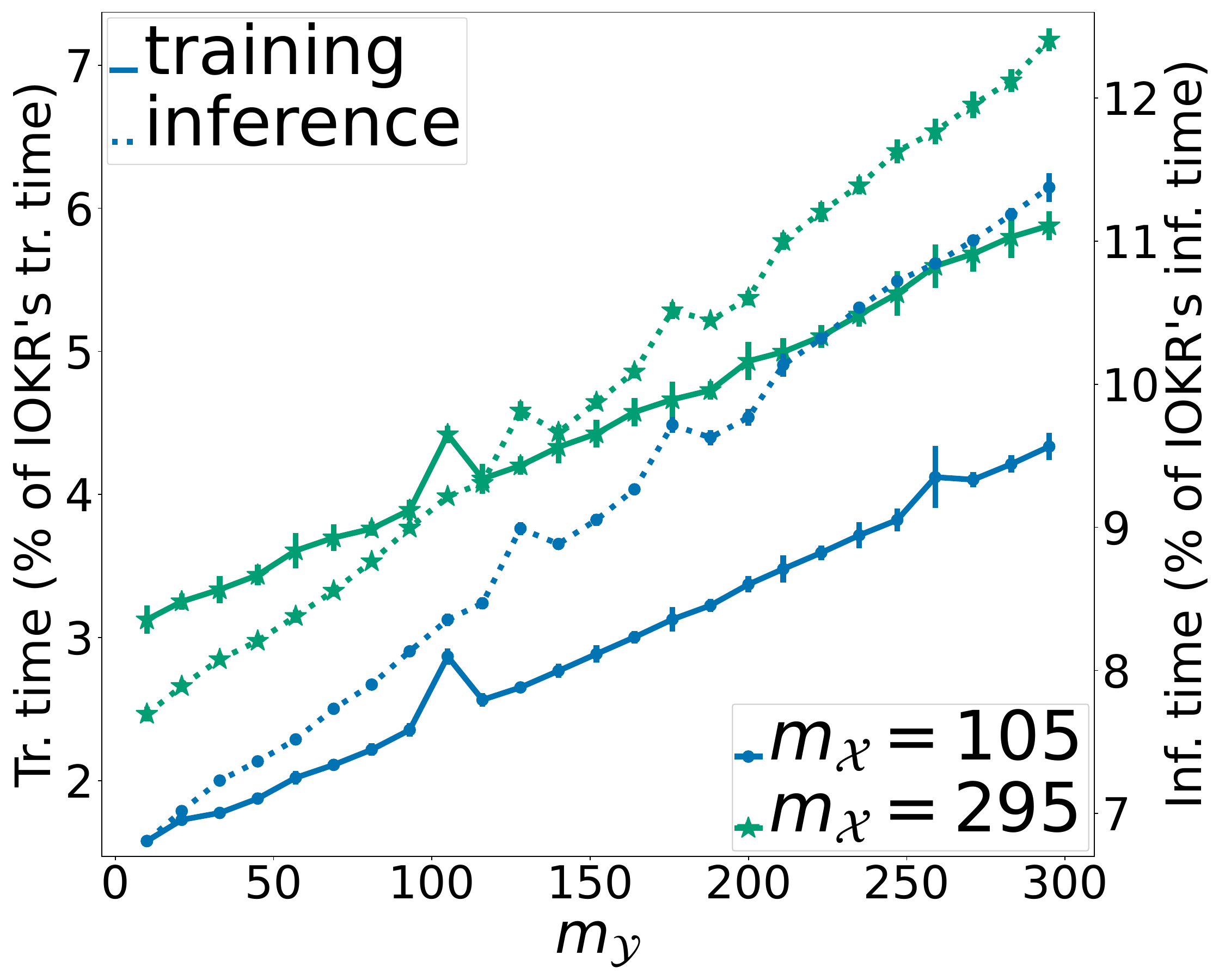}
\hfill
\includegraphics[width=0.288\textwidth]{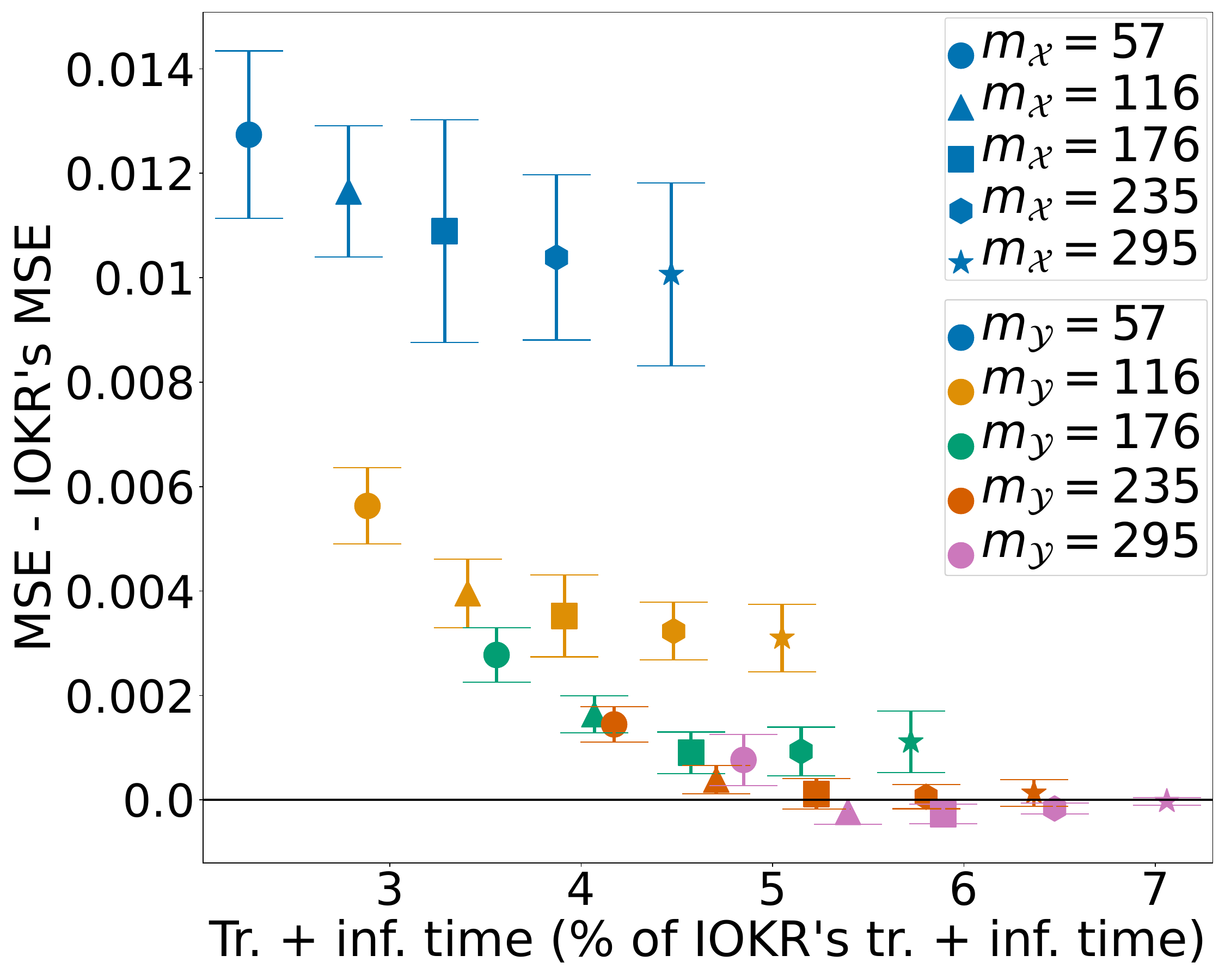}
\caption{Variation of training and inference time w.r.t. $\mx$ and $\my$ (left and center), and trade-off performance against computational time (right) for SISOKR with $(2 \cdot 10^{-3})$-SR input/output sketches on synthetic data.}
\label{fig:toy_figures}
\end{figure*}

In this section, we present experiments on synthetic and real-world data sets.
SIOKR and ISOKR denote the models with sketching leveraged only on the inputs (resp. outputs).
Results are averaged over 30 replicates, unless for the metabolite's experiments (5 replicates).
%

\paragraph{On the choice of the sketching types and its hyper-parameters.} We focus on uniform sub-sampling \citep{rudi2015less} and $p$-sparsified ($p$-SR/SG) \citep{elahmad2023fast} sketches, which are covered by our theory.
Sub-sampling is the most efficient approach computationally, but we empirically observe that $p$-SR/SG sketching is more accurate statistically.
For SIOKR/ISOKR, we privilege accuracy and $p$-SR/SG sketching, as it is already providing substantial training/inference accelerations.
Regarding SISOKR, we want the method to be the fastest both in training and inference.
However, since output sketching adds training computations, we compensate and use input sub-sampling to remain faster in training than SIOKR.
Regarding the input/output sketching sizes $\mx$ and $\my$, the first way consists of leveraging the theoretical lower bounds derived for $\mx$ and $\my$, see Equation~\eqref{eq:m_lower_bound}.
Indeed, by computing the Singular Value Decomposition of the input/output Gram matrix, one may determine their eigendecay (i.e., $\gamma_\mathcal{Z}$, $\mu_\mathcal{Z}$, $\nu_\mathcal{Z}$) and set $\mx$ and $\my$ accordingly.
However, computing the SVD is very expensive, hence one can rather compute the approximate leverage scores as in \citet{elalaoui_NIPS2015} for instance.
In the following, we instead adopt an empirical routine.
Given training and/or inference time budgets (corresponding e.g., to IOKR's training/inference times or the hardware limitations), we start from small $\mx$ and $\my$, which we progressively increase to maximize accuracy while respecting the budget.
For the $p$-SR/SG sketches, we always set $p = 20/n$.

\paragraph{Synthetic Least Squares Regression.} We generate a synthetic data set of least-squares regression, with $n=10\,000$ training data points, $\bmX = \bmY = \reals^d$, $d = 300$, and use input and output linear kernels, hence $\Hx = \Hy = \reals^d$.
We construct covariance matrices $\covx$ and $E$ by drawing randomly their eigenvectors such that their eigenvalues are $\sigma_k(\covx) = k^{-3/2}$ and $\sigma_k(E) = 0.2\,k^{-1/10}$.
We draw $H_0 \in \reals^{d \times d}$ with i.i.d. coefficients from the standard normal distribution and set $H = \covx H_0$.
For $i \leq n$, we generate inputs $x_i \sim \bmN(0, \covx)$, noise $\epsilon_i \sim \bmN(0, E)$ and outputs $y_i = H x_i + \epsilon_i$.
We generate validation and test sets of $n_\textnormal{val}=n_\textnormal{te}=1000$ points in the same way.
Such choices for $\covx$ (with a polynomial eigenvalue decay), $E$ (with very low eigenvalues and eigenvalue decay), and $H = \covx H_0$ enforce a high eigenvalue decay for $\covy$ (since it will have a similar eigendecay as $\covx$) while being a favorable setting to deploy sketching, as the true regression function $H$ is low rank.
%
%
We select the regularisation penalty $\lambda$ via 1-fold cross-validation.
We learn the SISOKR model for different values of $\mx$ and $\my$ (from 10 to 295) and $(2 \cdot 10^{-3})$-SR input and output sketches.
Note that for such a problem where $\bmY = \Hy$, no decoding step is needed for inference.
We still perform an artificial pre-image problem to illustrate the computational benefit of sketching during this phase.

\cref{fig:toy_figures} (left and center) presents computational training (solid lines) and inference (dotted lines) time (as a percentage of IOKR's training/inference time) w.r.t. $\mx$ (resp. $\my$) for two values of $\my$ (resp. $\mx$).
First, since $\mx, \my \leq 295 \ll n = 10\,000$, note that SISOKR's training and inference times are significantly smaller than IOKR's (between 2 and 6\% of IOKR's training time and 8 and 12\% IOKR's inference time).
On \Cref{fig:toy_figures} (left) the slopes of the training time's lines are higher than the inference time's ones, while the opposite happens on \Cref{fig:toy_figures} (center).
This confirms that training complexity is more sensitive to $\mx$, while inference complexity is governed by $\my$.
\Cref{fig:toy_figures} (right) presents the difference with IOKR's test errors, in terms of Mean Squared Error (MSE), for some choices of $\mx$ and $\my$, as a function of the sum of the training and inference times.
The MSE decreases as the sketch sizes increase and at a faster rate with respect to $\mx$.
This might be due to the fact that we directly control the eigendecay of $\covx$, whereas $\covy = \covx H_0 \covx H_0^\top \covx + E$, such that its range is not totally controlled by $\covx$.
SISOKR obtains better MSE performance than IOKR for $\mx \geq 116$ and $\my = 295$, which is consistent with the results obtained when applying sketching to the input (resp. output) kernel only, see \cref{apx:synthetic}.

\begin{table}[!t]
\centering
\captionsetup{singlelinecheck = false, justification=raggedright, font=normalsize}
\caption{$F_1$ scores on tag prediction from text data.}
\label{table:multi-label-F1scores}
\begin{scriptsize}
\begin{tabular}{ c c c c }
    \toprule
    Method & Bibtex & Bookmarks & Mediamill \\ 
    \hline
    LR & $37.2$ & $30.7$ & NA \\
    SPEN & $42.2$ & $34.4$ & NA \\
    PRLR & $44.2$ & $34.9$ & NA \\
    DVN & $44.7$ & $37.1$ & NA \\
    \hline
    SISOKR & $44.1 \pm 0.07$ & $\textbf{39.3} \pm 0.61$ & $57.26 \pm 0.04$ \\
    ISOKR & $44.8 \pm 0.01$ & NA & $58.02 \pm 0.01$ \\
    SIOKR & $44.7 \pm 0.09$ & $39.1 \pm 0.04$ & $57.33 \pm 0.04$  \\
    IOKR & $\textbf{44.9}$ & NA & $\textbf{58.17}$ \\
    \bottomrule
\end{tabular}
\end{scriptsize}
\end{table}

\paragraph{Multi-Label Classification.} We compare our models to state-of-the-art multi-label and structured prediction methods, namely IOKR \citep{brouard2016input}, logistic regression (LR) trained independently for each label \citep{Lin_multi-labellearning}, the multi-label approach Posterior-Regularized Low-Rank (PRLR) \citep{Lin_multi-labellearning}, the energy-based model Structured Prediction Energy Networks  (SPEN) \citep{belanger2016structured} and Deep Value Networks (DVN) \citep{Gygli2017}.
%
Results are taken from the cited articles.
Data sets Bibtex and Bookmarks are tag recommendation problems, in which the objective is to propose a relevant set of tags (e.g., url, description, journal volume) to users when they add a new Bookmark or Bibtex entry to the social bookmarking system Bibsonomy.
%
The MediaMill Challenge \citep{snoek_mediamill} is a multi-label classification problem, where the goal is to detect the presence of semantic concepts in a video.
They contain respectively $n = 4880$, $n = 60\,000$ and 
$n = 30\,993$ training points, see \cref{apx:multi-label} for details.
We use the train-test splits available at \url{https://mulan.sourceforge.net/datasets-mlc.html}.

\begin{table*}[!t]
\centering
\caption{Training/inference times (in seconds).}
\label{table:times_multi_label}
\begin{scriptsize}
\begin{tabular}{ c c c c }
    \toprule
    Method & Bibtex & Bookmarks & Mediamill \\ 
    \midrule
    SISOKR & $\textbf{1.41} \pm \textbf{0.03}$ / $\textbf{0.46} \pm \textbf{0.01}$ & $\textbf{118} \pm \textbf{1.5}$ / $\textbf{20} \pm \textbf{0.2}$ & $\textbf{66} \pm \textbf{0.1}$ / $\textbf{4} \pm \textbf{0.01}$ \\
    ISOKR & $2.51 \pm 0.06$ / $0.58 \pm 0.01$ & NA & $636 \pm 3.7$ \ $9 \pm 0.2$ \\
    SIOKR & $1.99 \pm 0.07$ / $1.22 \pm 0.03$ & $354 \pm 2.1$ / $297 \pm 2.1$ & $199 \pm 0.1$ / $121 \pm 0.02$ \\
    IOKR & $2.54$ / $1.18$ & NA & $621$ / $204$ \\
    \bottomrule
\end{tabular}
\end{scriptsize}
\end{table*}

\begin{table*}[!t]
\caption{Standard errors for the metabolite identification problem and computation times (in seconds).}
\begin{adjustbox}{center}
\begin{scriptsize}
\begin{tabular}{ c c c c c }
    \toprule
    Method & kernel loss & Top-$1$ | $5$ | $10$ accuracies & training & inference \\ 
    \midrule
    SPEN & $0.537 \pm 0.008$ & $25.9 \%$ | $54.1 \%$ | $64.3 \%$ & NA & NA \\
    \hline
    SISOKR & $0.566 \pm 0.007$ & $25.1 \%$ | $54.2 \%$ | $64.7 \%$ & $4.05 \pm 0.05$ & $\textbf{1112} \pm \textbf{29}$ \\
    ISOKR & $0.509 \pm 0.009$ & $28.0 \%$ | $58.9 \%$ | $68.9 \%$ & $6.25 \pm 50.31$ & $1133 \pm 32$ \\
    SIOKR & $0.492 \pm 0.008$ & $29.5 \%$ | $61.3 \%$ | $70.9 \%$ & $\textbf{1.25} \pm \textbf{0.02}$ & $1179 \pm 37$ \\
    IOKR & $\textbf{0.486} \pm \textbf{0.008}$ & $\textbf{29.6} \%$ | $\textbf{61.6} \%$ | $\textbf{71.4}\%$ & $3.54 \pm 0.15$ & $1191 \pm 38$ \\
    \bottomrule
\end{tabular}
\end{scriptsize}
\end{adjustbox}
\label{table:metabolites_results}
\end{table*}

For all multi-label experiments, we use Gaussian input and output kernels with widths $\sigma_\textnormal{in}^2$ and $\sigma_\textnormal{out}^2$.
We use $p$-SG input (resp. output) sketches for SIOKR (resp. ISOKR), uniform sub-sampling input sketches and $p$-SG output sketches for SISOKR.
For Bibtex experiments, we choose $\mx = 2250$ and $\my = 200$, for Bookmarks experiments, $\mx = 13\,000$ and $\my = 750$, and for Mediamill experiments, $\mx = 8\,000$ and $\my = 500$.
%
All the training data are used as candidate sets.
The performance is measured by example-based F1 score, and hyper-parameters are selected on logarithmic grids by 5-fold cross-validation.
The results in \cref{table:multi-label-F1scores} show that surrogate methods (last four columns) compete with SOTA methods, including deep-learning-based methods such as SPEN or DVN.
On Bibtex, sketched models preserve good performance compared to IOKR (which performs best) while being faster to train (SIOKR and SISOKR) and significantly faster for inference (ISOKR and SISOKR), see \cref{table:times_multi_label}.
Since the Bookmarks data set is too large, storing the whole $n^2$-Gram matrix $\GramX$ exceeds CPU's space limitations, which highlights the necessity of efficient sketching approximations such that sub-sampling or $p$-SR/SG sketches for kernel methods.
Hence, we can only test SIOKR and SISOKR models on this data set, which outperforms other methods.
SISOKR's inference phase is notably faster than SIOKR's (20 seconds vs. 5 minutes).
Similarly, on the Mediamill problem, our approximated approaches are shown to be significantly faster to run while suffering a minimal reduction in $F_1$ score.
Note that, with the same sketch matrix $\sketchx$, SIOKR's training is faster than SISOKR's as there is no additional computation on Gram matrix $\GramY$.
In \cref{table:times_multi_label}, SISOKR is faster to train as it uses a more efficient input sketching (sub-sampling vs. $p$-SG).

\paragraph{Metabolite Identification.}
Metabolite identification consists here of predicting small molecules, called metabolites, from their tandem mass spectrum.
%
%
The metabolite structure is represented as a binary vector of length $d = 7593$, called a fingerprint.
Each entry of the fingerprint encodes the presence or absence of a molecular property.
IOKR is the SOTA method for this problem \citep{Brouard_ismb2016}.
The data set consists of $n = 6974$ training labeled mass spectra, the median size of the candidate sets is $292$ and the largest candidate set contains $36\,918$ fingerprints.
This metabolite identification problem thus involves high-dimensional complex outputs, for which the choice of the output kernel is crucial, and a large number of candidates, making the inference step long.

Our experimental protocol is similar to that of \citet{Brouard_ismb2016} (5-CV Outer / 4-CV Inner loops).
We use probability product input kernel for mass spectra and Gaussian-Tanimoto output kernel \citep{RALAIVOLA20051093} -- with width $\sigma^2$ -- for the molecular fingerprints.
We select hyper-parameters $\lambda$ and $\sigma^2$ in logarithmic grids based on MSE in $\Hy$ (hence no decoding is needed during selection).
For the sketched models, we use $p$-SR input (resp. output) sketches for SIOKR (resp. ISOKR), and uniform sub-sampling input sketches and $p$-SR output sketches for SISOKR, with $\mx = 1500$, and $\my = 800$.

We compare our sketched models with IOKR and SPEN, see \cref{table:metabolites_results}.
Results for SPEN are taken from \citet{brogat-mottejmlr22}.
SIOKR obtains results similar to IOKR while being slightly faster in both the training and inference phases.
ISOKR is slightly less accurate, but outperforms (S)IOKR in terms of inference time, while SISOKR has the fastest inference phase and still competes with SPEN statistically.
We observe here that it is difficult to reduce significantly the inference time while keeping a good accuracy and to reduce both the training and inference time.
This is due to the particular setting of the metabolite data set.
Indeed, each molecule is associated with a specific candidate set, so when performing predictions one has to run through each element one by one to pick its candidate set.
When performing predictions, one has to compute the matrix multiplication \eqref{eq:SISOKR_pred}, which has a smaller complexity than \eqref{eq:IOKR_pred}, given that $\sketchy \GramY^{\textnormal{tr}, c}$ is already known.
However, in the case of metabolite identification, one has to perform it for each test data, which takes most of the inference for both ISOKR and SISOKR models.
As an example, for the 1133 (resp. 1112) seconds-long ISOKR's (resp. SISOKR) inference phase, computing $\sketchy \GramY^{\textnormal{tr}, c}$ takes 940 (resp. 917) seconds.
Since we have access to all candidate sets for each molecule, one could pre-process these data beforehand and perform these matrix multiplications during training, leading to a high training time, but a very small inference time, which could be of interest according to the practitioner's wish.
When candidate sets are known and fixed (e.g., in multi-label prediction), sketching the output kernel is of particular interest as no additional operation is needed for each prediction.

\section{CONCLUSION}
\label{sec:conclu}

In this paper, we scale-up surrogate methods for structured prediction based on kernel Ridge regression by using random projections for both inputs and outputs.
%
%
%
An interesting avenue for future work is the study of non-parametric estimators with kernelized outputs that do not benefit from the Ridge regression closed-form.

\subsubsection*{Acknowledgments}

This work was supported by the Télécom Paris research chair on Data Science and Artificial Intelligence for Digitalized Industry and Services (DSAIDIS) and the French National Research Agency (ANR) through ANR-18-CE23-0014 APi (Apprivoiser la Pré-image).
Funded by the European Union. Views and opinions expressed are however those of the authors only and do not necessarily reflect those of the European Union or European Commission. Neither the European Union nor the granting authority can be held responsible for them.
 This work received funding from the European Union’s Horizon Europe research and innovation program under grant agreement 101120237 (ELIAS).

\bibliography{references}
\bibliographystyle{apalike}

\section*{Checklist}

 \begin{enumerate}

 \item For all models and algorithms presented, check if you include:
 \begin{enumerate}
   \item A clear description of the mathematical setting, assumptions, algorithm, and/or model. [Yes] See \cref{sec:IOKR,sec:SISOKR,sec:theory,apx:not_def,sec-apx:vvRKHS,apx:preliminary_res}.
   \item An analysis of the properties and complexity (time, space, sample size) of any algorithm. [Yes] See \cref{sec:IOKR,sec:SISOKR,sec:expes,table:complexities}.
   \item (Optional) Anonymized source code, with specification of all dependencies, including external libraries. [Yes]
 \end{enumerate}

 \item For any theoretical claim, check if you include:
 \begin{enumerate}
   \item Statements of the full set of assumptions of all theoretical results. [Yes] See \cref{sec:theory}.
   \item Complete proofs of all theoretical results. [Yes] See \cref{apx:preliminary_res,apx:th1,apx:th2,apx:prob_bound,apx:aux_res}.
   \item Clear explanations of any assumptions. [Yes] See \cref{sec:theory}.     
 \end{enumerate}

 \item For all figures and tables that present empirical results, check if you include:
 \begin{enumerate}
   \item The code, data, and instructions needed to reproduce the main experimental results (either in the supplemental material or as a URL). [Yes] See \cref{sec:expes}.
   \item All the training details (e.g., data splits, hyperparameters, how they were chosen). [Yes] See \cref{sec:expes,apx:multi-label}.
   \item A clear definition of the specific measure or statistics and error bars (e.g., with respect to the random seed after running experiments multiple times). [Yes] See \cref{sec:expes}.
   \item A description of the computing infrastructure used. (e.g., type of GPUs, internal cluster, or cloud provider). [No]
 \end{enumerate}

 \item If you are using existing assets (e.g., code, data, models) or curating/releasing new assets, check if you include:
 \begin{enumerate}
   \item Citations of the creator If your work uses existing assets. [Yes] See \cref{sec:expes}.
   \item The license information of the assets, if applicable. [Not Applicable]
   \item New assets either in the supplemental material or as a URL, if applicable. [Not Applicable]
   \item Information about consent from data providers/curators. [Not Applicable]
   \item Discussion of sensible content if applicable, e.g., personally identifiable information or offensive content. [Not Applicable]
 \end{enumerate}

 \item If you used crowdsourcing or conducted research with human subjects, check if you include:
 \begin{enumerate}
   \item The full text of instructions given to participants and screenshots. [Not Applicable]
   \item Descriptions of potential participant risks, with links to Institutional Review Board (IRB) approvals if applicable. [Not Applicable]
   \item The estimated hourly wage paid to participants and the total amount spent on participant compensation. [Not Applicable]
 \end{enumerate}

 \end{enumerate}

\clearpage
\onecolumn
\appendix

\section{NOTATIONS AND DEFINITIONS}\label{apx:not_def}

In this section, we remind some important notations and definitions.

\paragraph{Setting.} In the following, we consider $\bmX$ and $\bmY$ to be Polish spaces. We denote by $\rho$ the unknown data distribution on $\bmX \times \bmY$. We denote by $\rhox$ and $\rhoy$ the marginal distributions of the inputs and outputs, respectively.

\paragraph{Linear algebra notation. } For an operator $A$, $A^{\#}$ is its adjoint, $\sigma_{\max}(A)$ its largest eigenvalue, and $\sigma_k(A)$ its $k^{\text{th}}$ largest eigenvalue (if $A$ admits an eigendecomposition). Let $\bmB\left(E\right)$ be the space of bounded linear operators in a separable Hilbert space $E$, given positive semi-definite operators $A, B \in \bmB\left(E\right)$, $A \preceq B$ if $B - A$ is positive semidefinite. For any $t > 0$ and $A : E \rightarrow E$, $A_t = A + t I_E$. Let $M$ be a matrix, $M_{i:}$ denotes its $i^{\text{th}}$ row and $M_{:j}$ its $j^{\text{th}}$ column, and $M^\dagger$ denotes its Moore-Penrose inverse.

\paragraph{Notation for simplified bounds.} To keep the dependencies of a bound only in the parameters of interest, for $a, b \in \reals $ we note $a \lesssim b$ as soon as there exists a constant $c >0$ independents of the parameters of interest such that $a \leq c \times b$.

\paragraph{Least-squares notation. }

For any function $h : \bmX \rightarrow \Hy$, its least-squares expected risk is given by
\begin{equation}
    \bmE\left(h\right) = \E_\rho\left[\left\|h\left(x\right)-\psiy\left(y\right)\right\|_{\Hy}^2\right]\,.
\end{equation}
The measurable minimizer of $\bmE$ is given by $h^*\left(x\right)=\E_{\rho\left(y|x\right)}\left[\psiy\left(y\right)\right]$ \citep[Lemma A.2]{ciliberto2020general}.

\paragraph{RKHS notation.} We denote by $\Hx$ and $\Hy$ the RKHSs associated to the input $\kernelx:\bmX \times \bmX \to \reals$ and output $\kernely: \bmY \times \bmY \to \reals$ kernels, respectively. We denote by $\psix: \bmX \to \Hx$ and $\psiy: \bmY \to \Hy$ the canonical feature maps $\psix(x) = \kernelx(x,.)$ and $\psiy(y) =  \kernely(y,.)$, respectively. We denote by $\bmH$ the vv-RKHS associated to the operator-valued kernel $\bmK = k I_{\Hy}$. We denote $\hat{h} \in \bmH$ the KRR estimator trained with $n$ couples $(x_i, y_i)_{i=1}^n$ i.i.d. from $\rho$.

\vspace{1em}
\paragraph{Kernel ridge operators. } We define the following operators.
\begin{itemize}
    \item $S : f \in \Hx \mapsto \langle f, \psix(\cdot)\rangle_{\Hx} \in L^2\left(\bmX, \rhox\right)$
    \item $T : f \in \Hy \mapsto \langle f, h^*(\cdot)\rangle_{\Hy} \in L^2\left(\bmX, \rhox\right)$
    \item $\covx = \E_x\left[\psix(x) \otimes \psix(x)\right]$ and $\covy = \E_y\left[\psiy(y) \otimes \psiy(y)\right]$,
    \item $\SX : f \in \Hx \mapsto \frac{1}{\sqrt{n}} \left(f\left(x_1\right), \ldots, f\left(x_n\right)\right)^\top \in \reals^n$,
    \item $\SX^{\#} : \alpha \in \reals^n \mapsto \frac{1}{\sqrt{n}} \sum_{i=1}^n \alpha_i \psix(x_i) \in \Hx$,
    \item $\SY : f \in \Hy \mapsto \frac{1}{\sqrt{n}} \left(f\left(y_1\right), \ldots, f\left(y_n\right)\right)^\top \in \reals^n$,
    \item $\SY^{\#} : \alpha \in \reals^n \mapsto \frac{1}{\sqrt{n}} \sum_{i=1}^n \alpha_i \psiy(y_i) \in \Hx$,
\end{itemize}

\paragraph{Sketching operators. }

\begin{itemize}
\item We denote $\sketchx \in \reals^{\mx \times n}$ and $\sketchy \in \reals^{\my \times n}$ the input and output sketch matrices with $\mx < n$ and $\my < n$,
\item $\sketchempcovx = \SX^{\#} \sketchx^\top \sketchx \SX$ and $\sketchempcovy = \SY^{\#} \sketchy^\top \sketchy \SY$,
\item $\sketchGramX = \sketchx \GramX \sketchx^\top$ and $\sketchGramY = \sketchy \GramY \sketchy^\top$.
\end{itemize}

\section{REMINDERS ABOUT VECTOR-VALUED REPRODUCING KERNEL HILBERT SPACES AND OPERATOR-VALUED KERNELS}\label{sec-apx:vvRKHS}

%

We recall the definitions of an OVK and its vv-RKHS.
Let $\bmF$ be a Hilbert space and $\bmL(\bmF)$ the set of bounded linear operators on $\bmF$.
\begin{definition}[Operator-valued kernel]
    An OVK is a mapping $\bmK: \bmX \times \bmX \rightarrow \bmL(\bmF)$ such that
    \begin{itemize}
        \item $\bmK\left(x, x^{\prime}\right)=\bmK\left(x^{\prime}, x\right)^\#$ for all $(x, x') \in \bmX^2$;
        \item $\sum_{i, j=1}^{n} \left\langle \varphi_{i}, \bmK\left(x_{i}, x_{j}\right) \varphi_{j}) \right\rangle_{\bmF} \geqslant 0$ for all $n \in \mathbb{N}$ and $\left(x_{i}, \varphi_{i})\right)_{i=1}^{n} \in(\bmX \times \bmF)^{n}$.
    \end{itemize}
\end{definition}
Similarly to the scalar case, an OVK is uniquely associated to a vv-RKHS $\bmH$.
\begin{theorem}[vector-valued RKHS]
    Let $\bmK$ be an OVK. There is a unique Hilbert space $\bmH$ of functions from $\bmX$ to $\bmF$, the vv-RKHS of $\bmK$, such that for all $x\in\bmX$, $\varphi \in \bmF$ and $f \in \bmH$
    \begin{itemize}
        \item $x^\prime \mapsto \bmK\left(x, x^\prime\right) \varphi \in \bmF$;
        \item $\langle f, \bmK\left(\cdot, x\right) \varphi \rangle_{\bmH} = \left\langle f(x), \varphi \right\rangle_{\bmF}$ (reproducing property).
    \end{itemize}
\end{theorem}

\section{PRELIMINARY RESULTS}
\label{apx:preliminary_res}

In this section, we present useful preliminary results about kernel ridge operators and sketching properties, as well as the proof \cref{prop:SISOKR_exp} that give the expressions of the SISOKR estimator.

\paragraph{Useful kernel ridge operators properties. } The following results hold true.
\begin{itemize}
    \item $\empcovx = \frac{1}{n} \sum_{i=1}^n \psix(x_i) \otimes \psix(x_i) = \SX^{\#} \SX$ and $\empcovy = \frac{1}{n} \sum_{i=1}^n \psiy(y_i) \otimes \psiy(y_i) = \SY^{\#} \SY$,
    \item $\GramX = n \SX \SX^{\#}$ and $\GramY = n \SY \SY^{\#}$,
    \item Under the attainability assumption \cite[Lemma B.2, B.4, B.9]{ciliberto2020general} show that:
    \begin{itemize}[label={\tiny $\blacksquare$}]
        \item For all $x \in \bmX$, $\hat{h}(x) = \widehat{H} \psix(x)$, where $\widehat{H} = \SY^{\#} \SX \empcovxlambda^{-1}$.
        \item $\E[\|\hat h(x) - h^*(x)\|^{2}]^{1/2} = \|(\widehat H - H)S^\#\|_{\HS}$.

    \end{itemize}

\end{itemize}

\paragraph{Useful sketching properties. } We remind some useful notations and provide the expression of $\projZ$, leading to the expression of the SISOKR estimator.
%



\paragraph{Expression of $\projZ$.} Let $\left\{\big(\sigma_i(\sketchGramZ), \tilde{\mathbf{v}}_{i}^Z\big), i \in[\mz]\right\}$ be the eigenpairs of $\sketchGramZ$ ranked in descending order of eigenvalues, $\ranksketchGramZ = \operatorname{rank}\big(\sketchGramZ\big)$, and for all $1 \leq i \leq \ranksketchGramZ$, $\tilde{e}_i^Z = \sqrt{\frac{n}{\sigma_i(\sketchGramZ)}} \SZ^{\#} \sketchz^\top \tilde{\mathbf{v}}_{i}^Z$.
\begin{proposition}\label{prop:proj_exp}
    The $\tilde{e}_i^Z$s are the eigenfunctions, associated to the eigenvalues $\sigma_i(\sketchGramZ)/n$ of $\sketchempcovz$. Furthermore, let $\widetilde{\bmH}_\bmZ = \spn\left(\tilde{e}_1^z, \ldots, \tilde{e}_{\ranksketchGramZ}^z\right)$, the orthogonal projector $\projZ$ onto $\widetilde{\bmH}_\bmZ$ writes as
    \begin{equation}\label{eq:proj_exp}
        \projZ = (\sketchz \SZ)^{\#} \left(\sketchz \SZ (\sketchz \SZ)^{\#}\right)^{\dagger} \sketchz \SZ\,.
    \end{equation}
\end{proposition}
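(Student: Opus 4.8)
The plan is to introduce the shorthand $A := R_z S_Z \colon \bmH_z \to \reals^{m_z}$, whose adjoint is $A^\# = S_Z^\# R_z^\top$, and to reduce everything to the interplay between $A A^\#$ (a finite $m_z \times m_z$ matrix) and $A^\# A$ (a finite-rank operator on $\bmH_z$). First I would record the two identities that drive the proof: using $K_Z = n\, S_Z S_Z^\#$ one gets $\widetilde{K}_Z = R_z K_Z R_z^\top = n\, A A^\#$, while directly from the definition $\widetilde{C}_Z = S_Z^\# R_z^\top R_z S_Z = A^\# A$. Moreover each spanning vector $\sum_{j=1}^n (R_z)_{ij}\chi(z_j)$ equals $\sqrt{n}\,A^\# e_i$, so the sketched subspace is exactly $\widetilde{\bmH}_z = \operatorname{range}(A^\#)$.

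For the eigenfunction claim, I would start from $A A^\# \tilde{\mathbf v}_i^z = \tfrac1n \widetilde{K}_Z \tilde{\mathbf v}_i^z = \tfrac{\sigma_i(\widetilde K_Z)}{n}\tilde{\mathbf v}_i^z$ and apply $\widetilde C_Z = A^\# A$ to $\tilde e_i^z = \sqrt{n/\sigma_i(\widetilde K_Z)}\,A^\#\tilde{\mathbf v}_i^z$. This yields $\widetilde C_Z \tilde e_i^z = \sqrt{n/\sigma_i(\widetilde K_Z)}\,A^\#(A A^\#)\tilde{\mathbf v}_i^z = \tfrac{\sigma_i(\widetilde K_Z)}{n}\tilde e_i^z$, i.e. the eigenvalue $\sigma_i(\widetilde K_Z)/n$. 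Orthonormality then follows from $\langle A^\#\tilde{\mathbf v}_i^z, A^\#\tilde{\mathbf v}_j^z\rangle_{\bmH_z} = \langle \tilde{\mathbf v}_i^z, A A^\# \tilde{\mathbf v}_j^z\rangle = \tfrac{\sigma_j(\widetilde K_Z)}{n}\delta_{ij}$ combined with the chosen normalisation, so the $\tilde e_i^z$ with $i \le p_z$ form an orthonormal family of eigenfunctions associated with the nonzero eigenvalues of $\widetilde C_Z$.

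For the projector, I would first observe that $p_z = \operatorname{rank}(\widetilde K_Z) = \operatorname{rank}(A A^\#) = \operatorname{rank}(A^\#)$, so $\{\tilde e_i^z\}_{i=1}^{p_z}$ is an orthonormal basis of $\operatorname{range}(A^\#) = \widetilde{\bmH}_z$ and hence $\widetilde P_Z = \sum_{i=1}^{p_z}\tilde e_i^z \otimes \tilde e_i^z$. I would then substitute the eigendecomposition $A A^\# = \sum_{i=1}^{p_z} \tfrac{\sigma_i(\widetilde K_Z)}{n}\tilde{\mathbf v}_i^z (\tilde{\mathbf v}_i^z)^\top$ and its Moore--Penrose inverse $(A A^\#)^\dagger = \sum_{i=1}^{p_z}\tfrac{n}{\sigma_i(\widetilde K_Z)}\tilde{\mathbf v}_i^z(\tilde{\mathbf v}_i^z)^\top$ into $A^\#(A A^\#)^\dagger A$, obtaining $\sum_{i=1}^{p_z} \tfrac{n}{\sigma_i(\widetilde K_Z)}(A^\#\tilde{\mathbf v}_i^z)\otimes(A^\#\tilde{\mathbf v}_i^z) = \sum_{i=1}^{p_z} \tilde e_i^z\otimes \tilde e_i^z = \widetilde P_Z$. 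Since $A^\#(A A^\#)^\dagger A = (R_z S_Z)^\#\big(R_z S_Z (R_z S_Z)^\#\big)^\dagger R_z S_Z$, this is exactly \eqref{eq:proj_exp}.

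The computations are short; the only genuine care needed is the finite-rank bookkeeping around the Moore--Penrose inverse, namely restricting to the $p_z$ nonzero eigenvalues so that $\spn(\tilde e_1^z,\dots,\tilde e_{p_z}^z)$ truly coincides with $\operatorname{range}(A^\#)$ rather than a strict subspace, and tracking the factor $n$ induced by the $1/\sqrt n$ normalisation in $S_Z$. A cleaner alternative for the second part is to invoke the standard identity $A^\dagger = A^\#(A A^\#)^\dagger$ for finite-rank $A$, whence $A^\#(A A^\#)^\dagger A = A^\dagger A$ is immediately the orthogonal projector onto $\operatorname{range}(A^\#)$; I would nonetheless keep the explicit eigendecomposition above to make the argument self-contained.
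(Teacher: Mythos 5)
Your proof is correct and follows essentially the same route as the paper: the same direct computation for the eigenvalue equation and orthonormality, with the identities $\widetilde{K}_Z = n\,R_zS_Z(R_zS_Z)^\#$ and $\widetilde{C}_Z = (R_zS_Z)^\#R_zS_Z$ doing the work. The only difference is that you spell out, via the eigendecomposition of $AA^\#$ (or the identity $A^\dagger = A^\#(AA^\#)^\dagger$), the final identification of $\sum_{i=1}^{p_z}\tilde e_i^z\otimes\tilde e_i^z$ with $(R_zS_Z)^\#\left(R_zS_Z(R_zS_Z)^\#\right)^\dagger R_zS_Z$, which the paper leaves as "easy to check."
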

\begin{proof}
    For $1 \leq i \leq \ranksketchGramZ$
    \begin{align}
        \sketchempcovz ~ \tilde{e}_i^Z &= \SZ^{\#} \sketchz^\top \sketchz \SZ \left(\sqrt{\frac{n}{\sigma_i(\sketchGramZ)}} \SZ^{\#} \sketchz^\top \tilde{\mathbf{v}}_{i}^Z\right) \\
        &= \sqrt{\frac{n}{\sigma_i(\sketchGramZ)}} \SZ^{\#} \sketchz^\top \left(\frac{1}{n} \sketchGramZ\right) \tilde{\mathbf{v}}_{i}^Z \\
        &= \frac{1}{\sqrt{n \sigma_i(\sketchGramZ)}} \SZ^{\#} \sketchz^\top \sigma_i(\sketchGramZ)\tilde{\mathbf{v}}_{i}^Z \\
        &= \frac{\sigma_i(\sketchGramZ)}{n} \tilde{e}_i^Z\,.
    \end{align}
    Moreover, we verify that $\spn\left(\tilde{e}_1^Z, \ldots, \tilde{e}_{\ranksketchGramZ}^Z\right)$ forms an orthonormal basis. Let $1 \leq i, j \leq \ranksketchGramZ$,
    \begin{align}
        \left\langle \tilde{e}_i^Z, \tilde{e}_j^Z \right\rangle_{\Hx} &= \left\langle \sqrt{\frac{n}{\sigma_i(\sketchGramZ)}} \SZ^{\#} \sketchz^\top \tilde{\mathbf{v}}_{i}^Z, \sqrt{\frac{n}{\sigma_j(\sketchGramZ)}} \SZ^{\#} \sketchz^\top \tilde{\mathbf{v}}_{j}^Z \right\rangle_{\Hz} \\
        &= \frac{n}{\sqrt{\sigma_i(\sketchGramZ) \sigma_j(\sketchGramZ)}} \tilde{\mathbf{v}}_{i}^{Z^\top} \sketchz \SZ \SZ^{\#} \sketchz^\top \tilde{\mathbf{v}}_{j}^Z \\
        &= \frac{n}{\sqrt{\sigma_i(\sketchGramZ) \sigma_j(\sketchGramZ)}} \tilde{\mathbf{v}}_{i}^{Z^\top} \left(\frac{1}{n} \sketchGramZ\right) \tilde{\mathbf{v}}_{j}^Z \\
        &= \frac{\sigma_j(\sketchGramZ)}{\sqrt{\sigma_i(\sketchGramZ) \sigma_j(\sketchGramZ)}} \tilde{\mathbf{v}}_{i}^{Z^\top} \tilde{\mathbf{v}}_{j}^Z \\
        &= \delta_{i j}\,,
    \end{align}
    where $\delta_{i j} = 0$ if $i \ne j$, and $1$ otherwise. \\ \\
    Finally, it is easy to check that the orthogonal projector onto $\spn\left(\tilde{e}_1^Z, \ldots, \tilde{e}_{\ranksketchGramZ}^Z\right)$, i.e. $\projZ : f \in \Hz \mapsto \sum_{i=1}^{\ranksketchGramZ} \left\langle f, \tilde{e}_i^Z \right\rangle_{\Hz} \tilde{e}_i^Z$ rewrites as
    \begin{equation}
        \projZ = n \SZ^{\#} \sketchz^\top \sketchGramZ^\dagger \sketchz \SZ = (\sketchz \SZ)^{\#} \left(\sketchz \SZ (\sketchz \SZ)^{\#}\right)^{\dagger} \sketchz \SZ\,.
    \end{equation}
\end{proof}
\begin{remark}
    With $\sketchx$ a sub-sampling matrix, we recover the linear operator $L_m$ introduced in \citet{Yang_NIPS2012_621bf66d} for the study of Nystr{\"o}m approximation and its eigendecomposition. Moreover, we also recover the projection operator $P_m$ from \citet{rudi2015less} and follow the footsteps of the proposed extension ``Nystr\"{o}m with sketching matrices''.
\end{remark}

\paragraph{Algorithm.} We here give the proof of \cref{prop:SISOKR_exp} that provides an expression of the SISOKR estimator $\tilde{h}$ as a linear combination of the $\psiy(y_i)$s.
\propsisokr*
\begin{proof}
    Recall that $\tilde{h}(x) = \projY \SY^{\#} \SX \projX(\projX \SX^\# \SX \projX + \lambda I_{\Hx})^{-1} \psix(x)$. By \cref{lemma:SIOKR_exp} and especially \eqref{eq:tildeH_algo}, we obtain that
    \begin{equation}
        \tilde{h}(x) = \sqrt{n} \projY \SY^{\#} \GramX \sketchx^\top \left(\sketchx \GramX^2 \sketchx^\top + n \lambda \sketchx \GramX \sketchx^\top\right)^\dagger \sketchx \SX \psix(x)\,.
    \end{equation}
    Finally, by \cref{lemma:ISOKR_exp} and with $\alpha(x) = \GramX \sketchx^\top \left(\sketchx \GramX^2 \sketchx^\top + n \lambda \sketchx \GramX \sketchx^\top\right)^\dagger \sketchx \SX \psix(x)$, we have that $\tilde{h}\left(x\right)=\sum_{i=1}^n \tilde{\alpha}_i\left(x\right) \psiy\left(y_i\right)$ where
    \begin{equation}
        \tilde{\alpha}\left(x\right) = \sketchy^\top \sketchGramY^\dagger \sketchy \GramY \GramX \sketchx^\top (\sketchx \GramX^2 \sketchx^\top + n \lambda \sketchGramX)^{\dagger} \sketchx \kernelvectX\,.
    \end{equation}
\end{proof}
Before stating and proving \cref{lemma:SIOKR_exp,lemma:ISOKR_exp}, and similarly to \citet{rudi2015less}, let $\sketchx \SX = U \Sigma V^{\#}$ be the SVD of $\sketchx \SX$ where $U : \reals^{\ranksketchGramX} \rightarrow \reals^{\mx}$, $\Sigma : \reals^{\ranksketchGramX} \rightarrow \reals^{\ranksketchGramX}$, $V : \reals^{\ranksketchGramX} \rightarrow \Hx$, and $\Sigma = \diag(\sigma_1(\sketchx \SX), \ldots, \sigma_{\ranksketchGramX}(\sketchx \SX))$ with $\sigma_1(\sketchx \SX) \geq \ldots \geq \sigma_{\ranksketchGramX}(\sketchx \SX) > 0$, $U U^\top = I_{\ranksketchGramX}$ and $V^{\#} V = I_{\ranksketchGramX}$.
We are now ready to prove the following lemma for the expansion induced by input sketching.
\begin{lemma}\label{lemma:SIOKR_exp}
     Let $\widetilde{H} = \projY \SY^{\#} \SX \projX(\projX \SX^\# \SX \projX + \lambda I_{\Hx})^{-1}$. The following two expansions hold true
    \begin{equation}\label{eq:tildeH_theory}
        \widetilde{H} = \projY \SY^{\#} \SX \tilde{\eta}(\empcovx)\,,
    \end{equation}
    where $\tilde{\eta}(\empcovx) = V (V^\# \empcovx V + \lambda I_{\Hx})^{-1}V^\#$ and for algorithmic purposes
    \begin{equation}\label{eq:tildeH_algo}
        \widetilde{H} = \sqrt{n} \projY \SY^{\#} \GramX \sketchx^\top \left(\sketchx \GramX^2 \sketchx^\top + n \lambda \sketchx \GramX \sketchx^\top\right)^\dagger \sketchx \SX\,.
    \end{equation}
\end{lemma}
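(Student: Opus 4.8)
The backbone of both expansions is the observation that, in the SVD coordinates $R_x S_X = U \Sigma V^\#$ just introduced, the projector of \cref{prop:proj_exp} collapses to $\widetilde{P}_X = V V^\#$. Substituting the SVD into \eqref{eq:proj_exp} and using $U^\top U = V^\# V = I_{p_x}$, the Moore--Penrose inverse of $R_x S_X (R_x S_X)^\# = U\Sigma^2 U^\top$ cancels the outer factors and leaves $V V^\#$. Writing $A := V^\# \widehat{C}_X V$, an operator on $\reals^{p_x}$, this turns the regularised operator inside $\widetilde H$ into $\widetilde{P}_X \widehat{C}_X \widetilde{P}_X + \lambda I_{\bmH_x} = V A V^\# + \lambda I_{\bmH_x}$, a form I can invert in closed form.

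\textbf{First expansion.} I would verify directly the resolvent identity
\[
(V A V^\# + \lambda I_{\bmH_x})^{-1} = \tfrac{1}{\lambda}\big(I_{\bmH_x} - V V^\#\big) + V (A + \lambda I_{p_x})^{-1} V^\#,
\]
by left-multiplying the right-hand side by $V A V^\# + \lambda I_{\bmH_x}$: the cross terms vanish thanks to $V^\# V = I_{p_x}$ and the remaining terms sum to $(I_{\bmH_x} - V V^\#) + V V^\# = I_{\bmH_x}$. Left-multiplying this identity by $\widetilde{P}_X = V V^\#$ annihilates the $\tfrac1\lambda (I_{\bmH_x} - V V^\#)$ part and, again by $V^\# V = I_{p_x}$, yields $\widetilde{P}_X (V A V^\# + \lambda I_{\bmH_x})^{-1} = V (A + \lambda I_{p_x})^{-1} V^\# = \tilde\eta(\widehat{C}_X)$ (note the inner identity lives on $\reals^{p_x}$, so the $I_{\bmH_x}$ in the statement of $\tilde\eta$ is really $I_{p_x}$). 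Substituting into the definition of $\widetilde H$ gives \eqref{eq:tildeH_theory} at once.

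\textbf{Second expansion.} Here I would translate $\tilde\eta(\widehat C_X)$ back into Gram-matrix quantities through the SVD bookkeeping $V = S_X^\# R_x^\top U \Sigma^{-1}$, $V^\# = \Sigma^{-1} U^\top R_x S_X$, together with $K_X = n S_X S_X^\#$. These give $S_X V = \tfrac1n K_X R_x^\top U \Sigma^{-1}$, $\Sigma^2 = \tfrac1n U^\top R_x K_X R_x^\top U$, and $A = \tfrac{1}{n^2} \Sigma^{-1} U^\top R_x K_X^2 R_x^\top U \Sigma^{-1}$. The key algebraic step is to recognise that, with $M := R_x K_X^2 R_x^\top + n\lambda R_x K_X R_x^\top$, one has $\Sigma (A + \lambda I_{p_x}) \Sigma = \tfrac{1}{n^2} U^\top M U$ and dually $M = n^2\, U \Sigma (A + \lambda I_{p_x}) \Sigma\, U^\top$. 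Since $U$ has orthonormal columns and $\Sigma(A+\lambda I_{p_x})\Sigma$ is invertible on $\reals^{p_x}$, the pseudoinverse reduces to a genuine inverse on $\mathrm{ran}(U) = \mathrm{ran}(M)$, yielding $U (U^\top M U)^{-1} U^\top = M^\dagger$. Assembling $S_X \tilde\eta(\widehat C_X) = S_X V (A + \lambda I_{p_x})^{-1} V^\#$ and feeding in these relations collapses the middle factor to $K_X R_x^\top M^\dagger R_x S_X$ up to an explicit scalar $n$-prefactor, which is \eqref{eq:tildeH_algo}.

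\textbf{Main obstacle.} The structural steps (the resolvent identity and $U (U^\top M U)^{-1} U^\top = M^\dagger$) are clean; the delicate part is the bookkeeping of the $1/\sqrt n$ normalisations carried by $S_X$ and $S_Y$ together with $K_X = n S_X S_X^\#$, which fix the scalar prefactor in \eqref{eq:tildeH_algo}. I would pin this constant down by a consistency check against the unsketched case $R_x = I_n$: there $M = K_X(K_X + n\lambda I)$, and the push-through identity $S_X (\widehat C_X + \lambda I)^{-1} = n (K_X + n\lambda I)^{-1} S_X$ must reproduce $\widehat H = S_Y^\# S_X \widehat C_{X\lambda}^{-1}$, which determines the prefactor unambiguously. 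A secondary point requiring care is that $M$ and $R_x S_X$ are genuinely rank-deficient (rank $p_x \le m_x$), so every inverse must be read as a Moore--Penrose inverse and justified through the SVD, exactly in the spirit of \citet{rudi2015less}.
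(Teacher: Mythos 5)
Your proof is correct and follows essentially the same route as the paper's: both collapse $\widetilde P_X$ to $VV^\#$ via the SVD $R_x S_X = U\Sigma V^\#$, obtain \eqref{eq:tildeH_theory} by a push-through/resolvent manipulation (your explicit resolvent identity is just an unpacked version of the paper's appeal to $(I+UV)^{-1}U = U(I+VU)^{-1}$), and obtain \eqref{eq:tildeH_algo} by rewriting $V,V^\#$ through $(R_xS_X)^\#$ and using that $U\bigl(U^\top M U\bigr)^{-1}U^\top = M^\dagger$ for $M = R_xK_X^2R_x^\top + n\lambda R_xK_XR_x^\top$.

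One substantive remark on the point you flag as delicate: your bookkeeping, and your proposed consistency check against $R_x = I_n$, both give the prefactor $n$ rather than $\sqrt{n}$. Indeed $S_X(R_xS_X)^\# = \frac{1}{n}K_XR_x^\top$ and $\bigl(R_xS_X(\widehat C_X+\lambda I_{\bmH_x})(R_xS_X)^\#\bigr)^\dagger = \bigl(\frac{1}{n^2}M\bigr)^\dagger = n^2 M^\dagger$, so the net constant is $\frac{1}{n}\cdot n^2 = n$; the $\sqrt{n}$ in \eqref{eq:tildeH_algo} is a typo. It is harmless downstream because the proof of \cref{prop:SISOKR_exp} simultaneously replaces $S_X\phi(x)$ by ${\kappa}_{X}^{x}$ instead of $\frac{1}{\sqrt{n}}{\kappa}_{X}^{x}$, and the two missing factors of $\sqrt{n}$ cancel in the final expression of $\tilde\alpha$. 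Your instinct to pin the constant down by specialising to the unsketched case is exactly the right check and resolves the ambiguity unambiguously.
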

\begin{proof} 
    Let us prove \eqref{eq:tildeH_theory} first.
    \begin{align}
        \tilde H &= \projY \SY^{\#}\SX \projX (\projX \SX^\# \SX \projX + \lambda I_{\Hx})^{-1} \\
        &= \projY \SY^{\#} \SX VV^\#(VV^\# \SX^\# \SX VV^\# + \lambda I_{\Hx})^{-1} \\
        &= \projY \SY^{\#} \SX V (V^\# \hat \covx V + \lambda I_{\Hx})^{-1}V^\# \\
        &= \projY \SY^{\#} \SX \tilde{\eta}(\empcovx)\,,
    \end{align}
    using the so-called push-through identity $(I + UV)^{-1}U = U(I + VU)^{-1}$. \\ \\
    Now, we focus on proving \eqref{eq:tildeH_algo}. First, we have that
    \begin{equation}
        \tilde H = \projY \SY^{\#} \SX V (V^\# \empcovxlambda V)^{\dagger}V^\#\,.
    \end{equation}
    Then, using the fact that $U$ has orthonormal columns, $U^\top$ has orthonormal rows and $\Sigma$ is a full-rank matrix, together with the fact that $U U^\top = I_{\ranksketchGramX}$ and $V^{\#} V = I_{\ranksketchGramX}$, we have that,
    \begin{equation}
        \tilde H = \projY \SY^{\#} \SX V \Sigma U^\top \left(U \Sigma V^\# \empcovxlambda V \Sigma U^\top\right)^\dagger U \Sigma V^\#\,.
    \end{equation}
    Then, since $\sketchx \SX = U \Sigma V^{\#}$,
    \begin{equation}
        \tilde H = \projY \SY^{\#} \SX (\sketchx \SX)^\# \left(\sketchx \SX \left(\empcovx + \lambda I_{\Hx}\right) (\sketchx \SX)^\#\right)^\dagger \sketchx \SX\,.
    \end{equation}
    Finally, using the fact that $\empcovx = \SX^\# \SX$ and $\GramX = n \SX \SX^\#$, we obtain that
    \begin{equation}
        \widetilde{H} = \sqrt{n} \projY \SY^\# \GramX \sketchx^\top \left(\sketchx \GramX^2 \sketchx^\top + n \lambda \sketchx \GramX \sketchx^\top\right)^\dagger \sketchx \SX\,.
    \end{equation}
\end{proof}
Now we state and prove the lemma for the expansion induced by output sketching.
\begin{lemma}\label{lemma:ISOKR_exp}
    For all $x \in \bmX$, for any $h \in \bmH$ that writes as $h(x) = \sqrt{n} \projY \SY^\# \alpha(x)$ with $\alpha : \bmX \rightarrow \reals^n$, then $h(x) = \sum_{i=1}^n \sketchy^\top \sketchGramY^\dagger \sketchy \GramY \alpha(x) \psiy(y_i)$.
\end{lemma}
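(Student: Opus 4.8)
The plan is to compute the action of the output projector $\widetilde{P}_Y$ explicitly using its closed form from \Cref{prop:proj_exp}, and then exploit the fact that $S_Y^\#$ maps coordinate vectors in $\reals^n$ to linear combinations of the $\psi(y_i)$. The hypothesis is that $h(x) = \sqrt{n}\,\widetilde{P}_Y S_Y^\# \alpha(x)$, so I would first substitute the expression for $\widetilde{P}_Y$ acting on $\bmH_y$. By \Cref{prop:proj_exp}, applied to the output space, $\widetilde{P}_Y = (R_y S_Y)^\#\big(R_y S_Y (R_y S_Y)^\#\big)^\dagger R_y S_Y$, or equivalently in the form $\widetilde{P}_Y = n\, S_Y^\# R_y^\top \widetilde{K}_Y^\dagger R_y S_Y$, which is the version most directly useful here.

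First I would write $h(x) = \sqrt{n}\,\widetilde{P}_Y S_Y^\# \alpha(x)$ and plug in $\widetilde{P}_Y = n\, S_Y^\# R_y^\top \widetilde{K}_Y^\dagger R_y S_Y$, giving
\begin{equation*}
    h(x) = \sqrt{n}\, \big(n\, S_Y^\# R_y^\top \widetilde{K}_Y^\dagger R_y S_Y\big) S_Y^\# \alpha(x)\,.
\end{equation*}
Next I would use the identity $K_Y = n S_Y S_Y^\#$ from the preliminary results, so that $n\, S_Y S_Y^\# = K_Y$, collapsing the middle $n\, S_Y S_Y^\#$ into $K_Y$. This turns the expression into $h(x) = \sqrt{n}\, S_Y^\# R_y^\top \widetilde{K}_Y^\dagger R_y K_Y \alpha(x)$. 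It remains to recognize that $S_Y^\#$ applied to any vector $\beta \in \reals^n$ yields $(1/\sqrt{n})\sum_{i=1}^n \beta_i \psi(y_i)$; the factor $\sqrt{n}$ in front cancels the $1/\sqrt{n}$ in $S_Y^\#$, leaving exactly $\sum_{i=1}^n \big(R_y^\top \widetilde{K}_Y^\dagger R_y K_Y \alpha(x)\big)_i \psi(y_i)$, which is the claimed form with coefficient vector $R_y^\top \widetilde{K}_Y^\dagger R_y K_Y \alpha(x)$.

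The only genuinely delicate point is justifying the two equivalent forms of $\widetilde{P}_Y$ and the appearance of the pseudoinverse $\widetilde{K}_Y^\dagger$: one must make sure that the substitution $\widetilde{P}_Y = n\, S_Y^\# R_y^\top \widetilde{K}_Y^\dagger R_y S_Y$ is legitimate, which is precisely the content of the final displayed equation in the proof of \Cref{prop:proj_exp}. Since that equation is already established, the remaining work is purely a matter of substituting $K_Y = n S_Y S_Y^\#$ and tracking the scalar factors of $\sqrt{n}$ and $n$, so no substantive obstacle arises; the main care required is bookkeeping of these constants and of the fact that $S_Y^\#$ linearizes over the training outputs.
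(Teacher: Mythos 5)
Your proposal is correct and follows exactly the paper's own proof: substitute the closed form $\widetilde{P}_Y = n\, S_Y^\# R_y^\top \widetilde{K}_Y^\dagger R_y S_Y$ from \cref{prop:proj_exp}, collapse $n\, S_Y S_Y^\#$ into $K_Y$, and expand $S_Y^\#$ over the $\psi(y_i)$ with the $\sqrt{n}$ factors cancelling. No differences worth noting.
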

\begin{proof}
    \begin{align}
        h(x) &= \sqrt{n} \projY \SY^\# \alpha(x) \\
        &= \sqrt{n} \SY^{\#} \sketchy^\top \sketchGramY^{\dagger} \sketchy \left(n \SY \SY^{\#}\right) \alpha(x) \\
        &= \sqrt{n} \SY^{\#} \sketchy^\top \sketchGramY^{\dagger} \sketchy \GramY \alpha(x) \\
        &= \sum_{i=1}^n \sketchy^\top \sketchGramY^\dagger \sketchy \GramY \alpha(x) \psiy(y_i)\,.
    \end{align}
\end{proof}
\section{SISOKR EXCESS-RISK BOUND}\label{apx:th1}

In this section, we provide the proof of \cref{th:sketched_ridge} which gives a bound on the excess-risk of the proposed approximated regression estimator with both input and output sketching (SISOKR).


\thmsisokr*

\begin{proof}  Our proofs consist of decompositions and then applying the probabilistic bounds given in Section \ref{apx:prob_bound}. 

We have
\begin{align}
    \E[\|\tilde h(x) - h^*(x)\|^{2}]^{1/2} = \|(\tilde H - H)S^\#\|_{\HS}
\end{align}
with $\tilde H = \widetilde P_Y \SY^{\#}\SX \tilde{\eta}(\empcovx)$.

\vspace{1em}
Then, defining $H_{\lambda}= H\covx(\covx + \lambda I)^{-1}$, we decompose
\begin{equation}\label{eq:decompo_SISOKR}
    \tilde H - H = \widetilde P_Y \left(\SY^\# \SX - H_{\lambda} \empcovx\right) \tilde{\eta}(\empcovx) + \widetilde P_Y H_{\lambda}\left(\empcovx \tilde{\eta}(\empcovx) - I_{\Hx}\right) + \left(\widetilde P_Y H_\lambda - H\right)
\end{equation}
such that
\begin{equation}
   \|(\tilde H - H)S^\#\|_{\HS} \leq (A) + (B) + (C)
\end{equation}
with
\begin{align}
   &(A) = \left\|\left(\SY^\# \SX - H_\lambda \empcovx\right) \tilde{\eta}(\empcovx) \covx^{1/2}\right\|_{\HS} \label{eq:reg_error} \\
   &(B) = \left\|H_\lambda\left(\empcovx \tilde{\eta}(\empcovx) - I_{\Hx}\right) \covx^{1/2}\right\|_{\HS} \label{eq:in_sketch_error} \\
   &(C) = \left\|(\widetilde P_Y H_\lambda - H)\covx^{1/2}\right\|_{\HS} \label{eq:out_sketch_error}
\end{align}

Then, from \cref{lem:bound_A,lem:bound_B,lem:bound_C}, we obtain 
\begin{align}
    \|(\tilde H - H)S^\#\|_{\HS} \leq  2\sqrt{3} M\log(4/\delta) n^{-\frac{1}{2(1 + \gammax)}} &+ 2 \sqrt{3} \|H\|_{\HS} \|(I - \widetilde P_X) \covx^{1/2} \|_{\op}\\
    &+ \E_y\left[\left\|\left(\projY -I_{\Hy}\right)\psiy(y)\right\|_{\Hy}^2\right]^{1/2}.
\end{align}
Then, notice that
\begin{align}
    \big\|(I - \widetilde P_X) \covx^{1/2} \big\|_{\op} &\leq \big\|(I - \widetilde P_X) \covx^{1/2} \big\|_{\HS}\\
    &= \E_x\left[\left\|\left(\projX -I_{\Hx}\right)\psix(x)\right\|_{\Hx}^2\right]^{1/2}.
\end{align}
We conclude by defining
\begin{align}
    c_1 &= 2\sqrt{3}M,\\
    c_2 &= 2\sqrt{3}\|H\|_{\HS}.
\end{align}

\end{proof}

\begin{lemma}[Bound (A)]\label{lem:bound_A} Let $\delta \in [0,1]$, $n \in \mathbb{N}$ sufficiently large such that $\lambda = n^{-1/(1+\gamma)} \geq \frac{9\kappax^2}{n} \log(\frac{n}{x})$ Under our set of assumptions, the following holds with probability at least $1-\delta$
\begin{align}
    (A) \leq 2 M \log(4/\delta) n^{-\frac{1}{2(1 + \gammax)}}.
\end{align}
where the constant $M$ depends on $\kappay, \|H\|_{\HS}, \delta$.

\end{lemma}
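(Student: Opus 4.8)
The plan is to split $(A)$ into a \emph{stochastic} Hilbert--Schmidt factor and an \emph{analytic} operator-norm factor by inserting $I = C_{X\lambda}^{-1/2}C_{X\lambda}^{1/2}$ (writing $C_{X\lambda} := C_X + \lambda I$ and $\widehat{C}_{X\lambda} := \widehat{C}_X + \lambda I$) between the two halves, so that
\[ (A) \le \big\|(S_Y^{\#}S_X - H_\lambda\widehat{C}_X)C_{X\lambda}^{-1/2}\big\|_{\HS}\;\big\|C_{X\lambda}^{1/2}\tilde\eta(\widehat{C}_X)C_X^{1/2}\big\|_{\op}. \]
I would then show the analytic factor is $O(1)$ while the stochastic factor is $O(\sqrt\lambda) = O(n^{-1/(2(1+\gamma_x))})$, both on the same high-probability event. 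All randomness is routed through the covariance concentration of \cref{sec:prob_bound}, valid under the stated condition $\lambda \ge \frac{9\kappa_x^2}{n}\log(n/\delta)$, which gives $\beta := \|C_{X\lambda}^{-1/2}(C_X-\widehat{C}_X)C_{X\lambda}^{-1/2}\|_{\op}\le 1/2$ with high probability, and hence $\widehat{C}_{X\lambda}\succeq(1-\beta)C_{X\lambda}$.

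For the analytic factor, the key observation is that, since $V^{\#}V = I$, one has $V^{\#}\widehat{C}_X V + \lambda I = V^{\#}\widehat{C}_{X\lambda}V$, so $\tilde\eta(\widehat{C}_X) = V(V^{\#}\widehat{C}_{X\lambda}V)^{-1}V^{\#}$. Because $\widehat{C}_{X\lambda}$ is positive definite and $V$ is an isometry, the restriction--inversion domination $V(V^{\#}\widehat{C}_{X\lambda}V)^{-1}V^{\#} \preceq \widehat{C}_{X\lambda}^{-1}$ holds, i.e. $\tilde\eta(\widehat{C}_X)\preceq\widehat{C}_{X\lambda}^{-1}$. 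This yields $\|C_{X\lambda}^{1/2}\tilde\eta(\widehat{C}_X)C_X^{1/2}\|_{\op}\le\|C_{X\lambda}^{1/2}\widehat{C}_{X\lambda}^{-1/2}\|_{\op}^2$, and from $\widehat{C}_{X\lambda}\succeq(1-\beta)C_{X\lambda}$ the latter is bounded by $1/(1-\beta)\le 2$.

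For the stochastic factor I would first separate the conditionally centered noise from the regularization bias via $S_Y^{\#}S_X - H_\lambda\widehat{C}_X = (S_Y^{\#}S_X - H\widehat{C}_X) + \lambda HC_{X\lambda}^{-1}\widehat{C}_X$, using $H - H_\lambda = \lambda HC_{X\lambda}^{-1}$. The bias term is deterministic: splitting $C_{X\lambda}^{-1}\widehat{C}_X C_{X\lambda}^{-1/2} = C_{X\lambda}^{-1/2}\,[\,C_{X\lambda}^{-1/2}\widehat{C}_X C_{X\lambda}^{-1/2}\,]$ and using $\|HC_{X\lambda}^{-1/2}\|_{\HS}\le\|H\|_{\HS}\lambda^{-1/2}$ together with $\|C_{X\lambda}^{-1/2}\widehat{C}_X C_{X\lambda}^{-1/2}\|_{\op}\le 1+\beta\le 3/2$ bounds it by $\tfrac32\|H\|_{\HS}\sqrt\lambda$. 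The noise term equals $\frac1n\sum_i\xi_i$ with $\xi_i = (\psi(y_i)-H\phi(x_i))\otimes C_{X\lambda}^{-1/2}\phi(x_i)$, an i.i.d.\ sum of centered Hilbert--Schmidt operators (centered because $h^*=H\phi$). A Bernstein inequality in Hilbert space, with $\|\xi_i\|_{\HS}\le 2\kappa_y\kappa_x\lambda^{-1/2}$ and $\E\|\xi_i\|_{\HS}^2\le 4\kappa_y^2\,\mathcal{N}(\lambda)$ where $\mathcal{N}(\lambda)=\Tr(C_X C_{X\lambda}^{-1})\le Q_x\lambda^{-\gamma_x}$ by \cref{asm:capacity}, gives a bound of order $\kappa_y\big(\sqrt{\mathcal{N}(\lambda)/n} + (\sqrt\lambda\,n)^{-1}\big)\log(2/\delta)$; the condition on $\lambda$ forces the offset term to be dominated by the variance term, so this is $\lesssim \kappa_y\sqrt{Q_x}\,\sqrt\lambda\,\log(2/\delta)$.

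Collecting the two pieces, the stochastic factor is $\lesssim(\|H\|_{\HS}+\kappa_y\sqrt{Q_x})\sqrt\lambda\,\log(2/\delta)$; multiplying by the $O(1)$ analytic factor and substituting $\lambda = n^{-1/(1+\gamma_x)}$ (so $\sqrt\lambda = n^{-1/(2(1+\gamma_x))}$) gives $(A)\lesssim M\log(4/\delta)\,n^{-1/(2(1+\gamma_x))}$, after a union bound over the covariance-concentration and Bernstein events and absorbing $\kappa_x,\kappa_y,\|H\|_{\HS},Q_x$ into $M$. I expect the main obstacle to be the analytic factor: establishing the domination $\tilde\eta(\widehat{C}_X)\preceq\widehat{C}_{X\lambda}^{-1}$ for the \emph{projected} regularized inverse (rather than the plain $\widehat{C}_{X\lambda}^{-1}$), and then transferring from the empirical $\widehat{C}_{X\lambda}$ to the population $C_{X\lambda}$ through $\beta\le 1/2$. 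The Bernstein step itself is routine once the effective-dimension bound $\mathcal{N}(\lambda)\le Q_x\lambda^{-\gamma_x}$ is in hand.
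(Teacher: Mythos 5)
Your proposal follows essentially the same route as the paper: the identical split $(A)\le (A.1)\times(A.2)$ via insertion of $C_{X\lambda}^{-1/2}C_{X\lambda}^{1/2}$, the same deterministic domination $\widehat C_{X\lambda}^{1/2}\tilde\eta(\widehat C_X)\widehat C_{X\lambda}^{1/2}\preceq I$ combined with the empirical-to-population transfer $\|\widehat C_{X\lambda}^{-1/2}C_{X\lambda}^{1/2}\|_{\op}^2\le 2$ to get $(A.2)\le 2$, and the same $O(\sqrt\lambda)$ rate for $(A.1)$. The only difference is that where the paper invokes Theorem~B.10 of \citet{ciliberto2020general} as a black box for $(A.1)$, you re-derive it via the bias--variance decomposition $S_Y^{\#}S_X-H_\lambda\widehat C_X=(S_Y^{\#}S_X-H\widehat C_X)+\lambda HC_{X\lambda}^{-1}\widehat C_X$ and a Hilbert-space Bernstein inequality with the effective-dimension bound $\mathcal N(\lambda)\le Q_x\lambda^{-\gamma_x}$ --- which is precisely how that cited theorem is proved, so the argument is sound and merely more self-contained.
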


\begin{proof}

We have 

\begin{align}
    (A) &\leq \underbrace{\left\|\left(\SY^\# \SX - H_\lambda \empcovx\right) \covxlambda^{-1/2}\right\|_{\HS}}_{(A.1)} \times \underbrace{\|\covxlambda^{1/2} \tilde{\eta}(\empcovx) \covx^{1/2}\|_{\op}}_{(A.2)}
\end{align}






Moreover, we have
\begin{align}
    (A.2) &\leq  \|\empcovxlambda^{1/2} \tilde{\eta}(\empcovx) \empcovxlambda^{1/2} \|_{\op} \|\empcovxlambda^{-1/2} \covxlambda^{1/2}\|_{\op}^2 \|\covxlambda^{-1/2} \covx^{1/2}\|_{\op}\\
    &\leq \|\empcovxlambda^{1/2} \tilde{\eta}(\empcovx) \empcovxlambda^{1/2} \|_{\op} \|\empcovxlambda^{-1/2} \covxlambda^{1/2}\|_{\op}^2 
\end{align}
because $\|\covxlambda^{-1/2} \covx^{1/2}\|_{\op} \leq 1$.


Finally, by using the probabilistic bounds given in \cref{lem:proba_bound_1,lem:proba_bound_2}, and Lemma \ref{lem:proba_bound_4}, we obtain
\begin{align}
    (A) \leq 2 M \log(4/\delta) n^{-\frac{1}{2(1 + \gammax)}}.
\end{align}

\end{proof}

\begin{lemma}[Bound (B)]\label{lem:bound_B} If $\frac{9}{n} \log \frac{n}{\delta} \leq \lambda \leq \|\covx\|_{\op}$, then with probability $1- \delta$
\begin{align}
    (B) \leq 2\sqrt{3}\|H\|_{\HS}(\lambda^{1/2} + \|(I - \widetilde P_X) \covx^{1/2} \|_{\op})
\end{align}
\end{lemma}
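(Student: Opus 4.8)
The plan is to reduce $(B)$ to an operator-norm estimate and then exploit the algebraic structure of the sketched regularized inverse $\tilde\eta(\widehat C_X)=V(V^\#\widehat C_X V+\lambda I)^{-1}V^\#$, where $R_xS_X=U\Sigma V^\#$ and $\widetilde P_X=VV^\#$. First I would factor $H_\lambda=HC_X C_{X\lambda}^{-1}$ and use $\|HZ\|_{\HS}\le\|H\|_{\HS}\|Z\|_{\op}$ together with $\|C_XC_{X\lambda}^{-1}\|_{\op}\le 1$ to obtain $(B)\le\|H\|_{\HS}\,\|(\widehat C_X\tilde\eta(\widehat C_X)-I)C_X^{1/2}\|_{\op}$. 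It then suffices to show the operator-norm factor is $\lesssim\lambda^{1/2}+\|(I-\widetilde P_X)C_X^{1/2}\|_{\op}$, with the probabilistic constant appearing through the covariance concentration of Section~\ref{sec:prob_bound}.

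Second, to pass from the population $C_X^{1/2}$ to the empirical covariance I would write $(I-\widehat C_X\tilde\eta)C_X^{1/2}=(I-\widehat C_X\tilde\eta)\widehat C_{X\lambda}^{1/2}\cdot\widehat C_{X\lambda}^{-1/2}C_X^{1/2}$ and bound the last factor by $\|\widehat C_{X\lambda}^{-1/2}C_{X\lambda}^{1/2}\|_{\op}=:\beta$, which is $O(1)$ with probability $1-\delta$ under $\frac9n\log\frac n\delta\le\lambda\le\|C\|_{\op}$. The heart of the argument is the identity, coming from $\widetilde P_X=VV^\#$ and the push-through, $\widehat C_X\tilde\eta=\widetilde P_X-\lambda\tilde\eta+(I-\widetilde P_X)\widehat C_X\tilde\eta$, so that $I-\widehat C_X\tilde\eta=(I-\widetilde P_X)+\lambda\tilde\eta-(I-\widetilde P_X)\widehat C_X\tilde\eta$. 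Multiplying on the right by $\widehat C_{X\lambda}^{1/2}$ splits $N:=\|(I-\widehat C_X\tilde\eta)\widehat C_{X\lambda}^{1/2}\|_{\op}$ into a pure projection term $\|(I-\widetilde P_X)\widehat C_{X\lambda}^{1/2}\|_{\op}$, a regularization term $\lambda\|\tilde\eta\widehat C_{X\lambda}^{1/2}\|_{\op}$, and a cross term $\|(I-\widetilde P_X)\widehat C_X\tilde\eta\widehat C_{X\lambda}^{1/2}\|_{\op}$.

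Third, the two non-projection pieces are controlled by the single algebraic fact $\tilde\eta\widehat C_{X\lambda}\tilde\eta=\tilde\eta$ (immediate from $\tilde\eta=V(A+\lambda)^{-1}V^\#$ with $A=V^\#\widehat C_X V$). This shows $\widehat C_{X\lambda}^{1/2}\tilde\eta\widehat C_{X\lambda}^{1/2}$ is an orthogonal projection, hence $\|\tilde\eta\widehat C_{X\lambda}^{1/2}\|_{\op}\le\lambda^{-1/2}$, so that $\lambda\|\tilde\eta\widehat C_{X\lambda}^{1/2}\|_{\op}\le\lambda^{1/2}$, and also $\|\widehat C_X^{1/2}\tilde\eta\widehat C_{X\lambda}^{1/2}\|_{\op}\le 1$; the latter lets me bound the cross term by $\|(I-\widetilde P_X)\widehat C_X^{1/2}\|_{\op}$. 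Finally I would convert the empirical projection terms to the population one via $\widehat C_X\preceq\widehat C_{X\lambda}$, the estimate $\|(I-\widetilde P_X)\widehat C_{X\lambda}^{1/2}\|_{\op}\le\|\widehat C_{X\lambda}^{1/2}C_{X\lambda}^{-1/2}\|_{\op}\|(I-\widetilde P_X)C_{X\lambda}^{1/2}\|_{\op}$, and $\|(I-\widetilde P_X)C_{X\lambda}^{1/2}\|_{\op}\le\|(I-\widetilde P_X)C_X^{1/2}\|_{\op}+\lambda^{1/2}$, again invoking the $O(1)$ bound on $\|\widehat C_{X\lambda}^{1/2}C_{X\lambda}^{-1/2}\|_{\op}$. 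Collecting the $\beta$-factors and the $\lambda^{1/2}$ contributions yields the constant $2\sqrt3$.

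The main obstacle, and the only genuinely non-routine step, is the cross term $(I-\widetilde P_X)\widehat C_X\tilde\eta$: a naive estimate $\|\widehat C_X\tilde\eta\|_{\op}\lesssim\|C_X\|/\lambda$ would blow up and destroy the $\lambda^{1/2}$ scale. The resolution is precisely the projection identity $\tilde\eta\widehat C_{X\lambda}\tilde\eta=\tilde\eta$, which forces $\|\widehat C_X^{1/2}\tilde\eta\widehat C_{X\lambda}^{1/2}\|_{\op}\le 1$ and thereby reduces the cross term to the benign projection error $\|(I-\widetilde P_X)\widehat C_X^{1/2}\|_{\op}$; everything else is triangle inequalities, submultiplicativity, and the covariance-concentration bounds already established in Section~\ref{sec:prob_bound}.
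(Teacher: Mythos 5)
Your proof is correct and follows essentially the same route as the paper: the same algebraic identity $I-\widehat C_X\tilde\eta = (I-\widetilde P_X)+\lambda\tilde\eta-(I-\widetilde P_X)\widehat C_{X\lambda}\tilde\eta$ (your version with $\widehat C_X$ in the cross term coincides with it since $(I-\widetilde P_X)\tilde\eta=0$), the same key fact $\|\widehat C_{X\lambda}^{1/2}\tilde\eta(\widehat C_X)\widehat C_{X\lambda}^{1/2}\|_{\op}\leq 1$ (which you re-derive from idempotency instead of citing \citet[Lemma~8]{rudi2015less}), and the same probabilistic bounds on $\|\widehat C_{X\lambda}^{-1/2}C_{X\lambda}^{1/2}\|_{\op}$ and $\|C_{X\lambda}^{-1/2}\widehat C_{X\lambda}^{1/2}\|_{\op}$. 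Your handling of the prefactor via $\|C_XC_{X\lambda}^{-1}\|_{\op}\leq 1$ is in fact slightly sharper than the paper's $\|H_\lambda\|_{\HS}\leq 2\|H\|_{\HS}$, and recovers the stated constant $2\sqrt{3}$ where the paper's own proof ends with $4\sqrt{3}$.
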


\begin{proof}

We do a similar decomposition than in \citet[Theorem 2]{rudi2015less}:
\begin{align}
    \empcovx \tilde{\eta}(\empcovx) - I_{\Hx} &= \empcovxlambda \tilde{\eta}(\empcovx) - \lambda \tilde{\eta}(\empcovx)  -  I_{\Hx}\\
    &= (I - \widetilde P_X) \empcovxlambda \tilde{\eta}(\empcovx) + \widetilde P_X \empcovxlambda \tilde{\eta}(\empcovx)  - \lambda \tilde{\eta}(\empcovx)  -  I_{\Hx}\\
    &= (I - \widetilde P_X) \empcovxlambda \tilde{\eta}(\empcovx)  - \lambda \tilde{\eta}(\empcovx)  -  (\widetilde P_X  - I_{\Hx})\,,
\end{align}
as $\widetilde P_X \empcovxlambda \tilde{\eta}(\empcovx) = \widetilde P_X $.

Then, we have
\begin{align}
    (B) &\leq \left\|H_\lambda\right\|_{\HS} \left\|\left(\empcovx \tilde{\eta}(\empcovx) - I_{\Hx}\right) \covx^{1/2}\right\|_{\op}\\
    &\leq \left\|H_\lambda\right\|_{\HS} \left(\big\|(I - \widetilde P_X) \empcovxlambda \tilde{\eta}(\empcovx)\covx^{1/2}\big\|_{\op}  + \lambda \big\|\tilde{\eta}(\empcovx)\covx^{1/2}\big\|_{\op} + \big\|(\widetilde P_X  - I_{\Hx})\covx^{1/2}\big\|_{\op}\right)
\end{align}

But,
\begin{align}
    \left\|H_\lambda\right\|_{\HS} &\leq \left\|H\left(\covx \covxlambda^{-1} - I_{\Hx}\right)\right\|_{\HS} +  \left\|H\right\|_{\HS}\\
    &= \left\|H\left(\covx - \covxlambda\right)\covxlambda^{-1}\right\|_{\HS} +  \left\|H\right\|_{\HS}\\
    &= \lambda \left\|H \covxlambda^{-1}\right\|_{\HS} +  \left\|H\right\|_{\HS}\\
    &\leq 2 \|H\|_{\HS}.
\end{align}

And,
\begin{align}
    \big\|(I - \widetilde P_X) \empcovxlambda \tilde{\eta}(\empcovx)\covx^{1/2}\big\|_{\op} &\leq  \big\|(I - \widetilde P_X) \empcovxlambda^{1/2} \big\|_{\op} \big\|\empcovxlambda^{1/2} \tilde{\eta}(\empcovx) \empcovxlambda^{1/2}\big\|_{\op} \big\|\empcovxlambda^{-1/2} \covx^{1/2}\big\|_{\op}.
\end{align}

And,
\begin{align}
    \big\|(I - \widetilde P_X) \empcovxlambda^{1/2} \big\|_{\op} &\leq  \big\|(I - \widetilde P_X) {C}_{X\lambda}^{1/2} \big\|_{\op} \big\| \covxlambda^{-1/2}  \empcovxlambda^{1/2} \big\|_{\op}.
\end{align}

And,
\begin{align}
    \big\|(I - \widetilde P_X) \covxlambda^{1/2} \big\|_{\op} &\leq \big\|(I - \widetilde P_X) \covx^{1/2} \big\|_{\op} + \lambda^{1/2}.
\end{align}

Moreover,
\begin{align*}
    \left\|\lambda \tilde{\eta}\left(\empcovx\right) \covx^{1/2}\right\|_{\op} &\leq \lambda \left\|\empcovxlambda^{-1/2}\right\|_{\op} \left\|\empcovxlambda^{1/2} \tilde{\eta}\left(\empcovx\right) \empcovxlambda^{1/2}\right\|_{\op} \left\|\empcovxlambda^{-1/2} \covxlambda^{1/2}\right\|_{\op} \left\|\covxlambda^{-1/2} \covx^{1/2}\right\|_{\op}\\
    &\leq \lambda^{1/2} \left\|\empcovxlambda^{1/2} \tilde{\eta}\left(\empcovx\right) \empcovxlambda^{1/2}\right\|_{\op} \left\|\empcovxlambda^{-1/2} \covxlambda^{1/2}\right\|_{\op}.
\end{align*}

\paragraph{Conclusion. } Using the probabilistic bounds given in Lemmas \ref{lem:proba_bound_2}, \ref{lem:proba_bound_3}, and Lemma \ref{lem:proba_bound_4}, we obtain 
\begin{align}
    (B) \leq 4\sqrt{3}\|H\|_{\HS}\big(\lambda^{1/2} + \big\|(I - \widetilde P_X) \covx^{1/2} \big\|_{\op}\big) 
\end{align}

\end{proof}

\begin{lemma}[Bound (C)]\label{lem:bound_C} We have
\begin{align}
    (C) &\leq \E_y\left[\left\|\left(\projY -I_{\Hy}\right)\psiy(y)\right\|_{\Hy}^2\right]^{1/2} + \lambda^{1/2} \left\|H\right\|_{\HS}.
\end{align}
\end{lemma}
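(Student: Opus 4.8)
The plan is to split the operator $\widetilde P_Y H_\lambda - H$ into a ridge-bias part controlled by $\lambda$ and an output-reconstruction part controlled by $\widetilde P_Y$. First I would write $\widetilde P_Y H_\lambda - H = \widetilde P_Y(H_\lambda - H) + (\widetilde P_Y - I_{\bmH_y})H$ and apply the triangle inequality to $(C)$, reducing the task to bounding $\|\widetilde P_Y(H_\lambda - H)C_X^{1/2}\|_{\HS}$ and $\|(\widetilde P_Y - I_{\bmH_y})H C_X^{1/2}\|_{\HS}$ separately.

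For the first term, since $\widetilde P_Y$ is an orthogonal projector we have $\|\widetilde P_Y\|_{\op}\le 1$, so it suffices to bound $\|(H_\lambda - H)C_X^{1/2}\|_{\HS}$. Using $H_\lambda = H C_X C_{X\lambda}^{-1}$ and $C_X - C_{X\lambda} = -\lambda I_{\bmH_x}$ gives $H_\lambda - H = -\lambda H C_{X\lambda}^{-1}$, whence $\|(H_\lambda - H)C_X^{1/2}\|_{\HS} \le \lambda\|H\|_{\HS}\,\|C_{X\lambda}^{-1}C_X^{1/2}\|_{\op}$. Factoring $C_{X\lambda}^{-1}C_X^{1/2} = C_{X\lambda}^{-1/2}\bigl(C_{X\lambda}^{-1/2}C_X^{1/2}\bigr)$ and using $\|C_{X\lambda}^{-1/2}\|_{\op}\le\lambda^{-1/2}$ together with $\|C_{X\lambda}^{-1/2}C_X^{1/2}\|_{\op}\le 1$ yields $\lambda^{1/2}\|H\|_{\HS}$, the second summand of the claim.

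For the second term, which is the crux, I would expand the squared Hilbert--Schmidt norm. Since $I_{\bmH_y} - \widetilde P_Y$ is a self-adjoint idempotent, a cyclic trace manipulation gives $\|(\widetilde P_Y - I_{\bmH_y})H C_X^{1/2}\|_{\HS}^2 = \Tr\bigl((I_{\bmH_y} - \widetilde P_Y)\,H C_X H^\#\bigr)$. Recognizing, via $C_X = \E_x[\phi(x)\otimes\phi(x)]$ and $h^* = H\phi$, that $H C_X H^\# = \E_x[h^*(x)\otimes h^*(x)]$, the trace collapses to $\E_x\bigl[\|(I_{\bmH_y} - \widetilde P_Y)h^*(x)\|_{\bmH_y}^2\bigr]$, again because $I_{\bmH_y}-\widetilde P_Y$ is an orthogonal projection.

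Finally I would pass from the input marginal to the output marginal. Using $h^*(x) = \E_{\rho(y\mid x)}[\psi(y)]$ and Jensen's inequality for the bounded linear map $I_{\bmH_y} - \widetilde P_Y$, I get $\|(I_{\bmH_y} - \widetilde P_Y)h^*(x)\|_{\bmH_y}^2 = \|\E_{y\mid x}[(I_{\bmH_y} - \widetilde P_Y)\psi(y)]\|_{\bmH_y}^2 \le \E_{y\mid x}[\|(I_{\bmH_y} - \widetilde P_Y)\psi(y)\|_{\bmH_y}^2]$; taking $\E_x$ and applying the tower property reduces the conditional expectation to the output marginal $\rho_Y$, giving $\E_y[\|(I_{\bmH_y}-\widetilde P_Y)\psi(y)\|_{\bmH_y}^2]$. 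Combining the two bounds and noting $\|(\widetilde P_Y - I_{\bmH_y})\psi(y)\|_{\bmH_y} = \|(I_{\bmH_y}-\widetilde P_Y)\psi(y)\|_{\bmH_y}$ yields the stated inequality. The main obstacle is exactly this conditional-Jensen step: the operator algebra is routine, but it is the passage through the conditional mean $h^*$ that links the input-side Hilbert--Schmidt quantity to the clean output-side reconstruction error $\E_y[\|(\widetilde P_Y - I_{\bmH_y})\psi(y)\|_{\bmH_y}^2]^{1/2}$ appearing in the sketching-error term of \Cref{th:sketched_ridge}.
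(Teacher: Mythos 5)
Your proof is correct and follows essentially the same route as the paper: the decomposition $\widetilde P_Y H_\lambda - H = \widetilde P_Y(H_\lambda - H) + (\widetilde P_Y - I_{\bmH_y})H$ is algebraically identical to the paper's use of $H_\lambda = H(I_{\bmH_x} - \lambda C_{X\lambda}^{-1})$, the $\lambda^{1/2}\|H\|_{\HS}$ bound on the ridge-bias term is obtained the same way, and the trace identity turning $\|(\widetilde P_Y - I_{\bmH_y})HC_X^{1/2}\|_{\HS}$ into $\E_x[\|(\widetilde P_Y - I_{\bmH_y})h^*(x)\|_{\bmH_y}^2]^{1/2}$ followed by conditional Jensen is exactly the paper's concluding step. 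You merely spell out some intermediate operator-norm factorizations that the paper leaves implicit.
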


\begin{proof} We have
\begin{align}
    (C) &= \left\|(\widetilde P_Y H(I_{\Hx} - \lambda \covxlambda^{-1}) - H) \covx^{1/2}\right\|_{\HS}\\
    &\leq \left\|(\widetilde P_Y -I_{\Hy}) H \covx^{1/2}\right\|_{\HS} + \lambda^{1/2} \left\|H\right\|_{\HS}\\
    &= \E[\|(\widetilde P_Y -I_{\Hy})h^*(x)\|_{\Hy}^2]^{1/2} + \lambda^{1/2} \left\|H\right\|_{\HS}.
\end{align}
We conclude the proof as follows. Using the fact that $h^*\left(x\right)=\E_{\rho\left(y|x\right)}\left[\psiy\left(y\right)\right]$, the linearity of $\projY -I_{\Hy}$ and the convexity of $\left\|\cdot\right\|_{\Hy}^2$, by the Jensen's inequality we obtain that
    \begin{align}
        \E_x\left[\left\|\left(\projY -I_{\Hy}\right)h^*\left(x\right)\right\|_{\Hy}^2\right] &= \E_x\left[\left\|\left(\projY -I_{\Hy}\right)\E_{\rho\left(y|x\right)}\left[\psiy\left(y\right)\right]\right\|_{\Hy}^2\right] \\
        &= \E_x\left[\left\|\E_{\rho\left(y|x\right)}\left[\left(\projY -I_{\Hy}\right)\psiy\left(y\right)\right]\right\|_{\Hy}^2\right] \\
        &\leq \E_x\left[\E_{\rho\left(y|x\right)}\left[\left\|\left(\projY -I_{\Hy}\right)\psiy\left(y\right)\right\|_{\Hy}^2\right]\right] \\
        &= \E_y\left[\left\|\left(\projY -I_{\Hy}\right)\psiy(y)\right\|_{\Hy}^2\right]\,.
    \end{align}

\end{proof}
\section{SKETCHING RECONSTRUCTION ERROR}\label{apx:th2}

We provide here a bound on the reconstruction error of a sketching approximation.


\thmsketching*

\begin{proof}
    For $t>0$, we have
    \begin{align}
        \E_z\left[\left\|\left(\projZ -I_{\Hz}\right)\psiz(z)\right\|_{\Hz}^2\right]
        &= \Tr\left(\left(\projZ -I_{\Hz}\right) \E_z\left[\psiz(z) \otimes \psiz(z)\right]\right) \\
        &= \left\|\left(\projZ -I_{\Hz}\right) \covz^{1/2}\right\|_{\HS}^2 \\
        &\leq \left\|\left(\projZ -I_{\Hz}\right)\empcovzt^{1/2}\right\|_{\op}^2 \left\|\empcovzt^{-1/2} \covzt^{1/2}\right\|_{\op}^2 \left\|\covzt^{-1/2} \covz^{1/2}\right\|_{\HS}^2\,.
    \end{align}
    Lemma \ref{lem:proba_bound_2} gives that, for $\delta \in \left(0, 1\right)$, if $\frac{9}{n}\log\left(\frac{n}{\delta}\right) \leq t \leq \left\|\covz\right\|_{\op}$, then with probability $1-\delta$
    \begin{equation}\label{eq:out_emp_exp_cov_op}
        \left\|\empcovzt^{-1/2} \covzt^{1/2}\right\|_{\op}^2 \leq 2\,.
    \end{equation}
    Moreover, since $\left\|{\covzt}^{-1/2} \covz^{1/2}\right\|_{\HS}^2 = \Tr\left({\covzt}^{-1}{\covz}\right) = \deffz(t)$, \cref{lem:int_dim} gives that
    \begin{equation}
        \left\|{\covzt}^{-1/2} \covz^{1/2}\right\|_{\HS}^2 \leq \Qz t^{-\gammaz}\,.
    \end{equation}
    Then, using the Lemma \ref{lemma:beta}, and multiplying the bounds, gives
    \begin{align}
        \E_y\left[\left\|\left(\projZ -I_{\Hy}\right)\psiz(z)\right\|_{\Hz}^2\right] &\leq 6 \Qz t^{1-\gammaz}.
    \end{align}
    Finally, choosing $t = n^{-\frac{1}{1+\gammaz}}$, defining $c_3=6 \Qz$, $c_4=576 {\mathfrak{C}}^2 \bz \Qz$, and noticing $\bmN_{\bmZ}^{\infty}(t) \leq \bz\Qz t^{-(\gammaz + \muz)}$ (from \cref{lem:int_dim,lem:emb}), allows to conclude the proof.
    
\end{proof}

\begin{lemma}\label{lemma:beta} Let $\bmN_{\bmZ}^{\infty}(t)$ be as in \cref{def:dim_eff}. For all $\delta \in \left(0, 1/e\right]$, $\frac{9}{n}\log\left(\frac{n}{\delta}\right) \leq t \leq \left\|\covz\right\|_{\op} - \frac{9}{n}\log\left(\frac{n}{\delta}\right)$ and $\mz \geq  \max\left(432 {\mathfrak{C}}^2 \nuz^2 \bmN_{\bmZ}^{\infty}(t), 576 {\mathfrak{C}}^2 \nuz^4 \log\left(1/\delta\right)\right)$, with probability at least $1-\delta$, 
    \begin{equation}
        \left\|\left(\projZ -I_{\Hz}\right)\empcovzt^{1/2}\right\|_{\op}^2 \leq 3t\,.
    \end{equation}
\end{lemma}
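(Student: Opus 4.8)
The plan is to follow the Nyström projection analysis of \citet{rudi2015less}, adapted to an arbitrary subgaussian sketch $R_z$. Throughout write $\widehat{C}_{Z t} = \widehat{C}_Z + tI_{\bmH_z}$ and $\widetilde{C}_{Z t} = \widetilde{C}_Z + t I_{\bmH_z}$, and recall from \cref{prop:proj_exp} that $\widetilde{P}_Z$ is the orthogonal projector onto $\operatorname{range}(\widetilde{C}_Z)$, with $\widetilde{C}_Z = S_Z^{\#} R_z^\top R_z S_Z$. First I would convert the projection error into a comparison between the sketched and the empirical covariance. Since $\widetilde{C}_Z \widetilde{C}_{Z t}^{-1} \preceq \widetilde{P}_Z$, one obtains the operator inequality $I_{\bmH_z} - \widetilde{P}_Z \preceq t\, \widetilde{C}_{Z t}^{-1}$. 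Using that $I_{\bmH_z} - \widetilde{P}_Z$ is an orthogonal projector, so that $\|(I_{\bmH_z} - \widetilde{P}_Z)\widehat{C}_{Z t}^{1/2}\|_{\op}^2 = \|\widehat{C}_{Z t}^{1/2}(I_{\bmH_z} - \widetilde{P}_Z)\widehat{C}_{Z t}^{1/2}\|_{\op}$, the inequality yields
\[
\|(I_{\bmH_z} - \widetilde{P}_Z)\widehat{C}_{Z t}^{1/2}\|_{\op}^2 \leq t\,\big\|\widehat{C}_{Z t}^{1/2}\widetilde{C}_{Z t}^{-1}\widehat{C}_{Z t}^{1/2}\big\|_{\op},
\]
so it suffices to show the comparison operator is at most $3$.

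Next I would control this comparison operator by a Neumann-series argument, exactly as in the push-through manipulations used elsewhere in the paper. Writing $\widetilde{C}_{Z t} = \widehat{C}_{Z t}^{1/2}(I_{\bmH_z} - E)\widehat{C}_{Z t}^{1/2}$ with $E = \widehat{C}_{Z t}^{-1/2}(\widehat{C}_Z - \widetilde{C}_Z)\widehat{C}_{Z t}^{-1/2}$, and setting $\beta = \|E\|_{\op}$, inversion gives $\widehat{C}_{Z t}^{1/2}\widetilde{C}_{Z t}^{-1}\widehat{C}_{Z t}^{1/2} = (I_{\bmH_z} - E)^{-1}$ whenever $\beta < 1$, hence $\|\widehat{C}_{Z t}^{1/2}\widetilde{C}_{Z t}^{-1}\widehat{C}_{Z t}^{1/2}\|_{\op} \leq (1-\beta)^{-1}$. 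The target bound $3t$ then follows as soon as $\beta \leq 2/3$, and the whole problem reduces to a concentration statement for $\beta$.

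The core step is bounding $\beta$. Since $\widehat{C}_Z - \widetilde{C}_Z = S_Z^{\#}(I_n - R_z^\top R_z)S_Z$, setting $M = \widehat{C}_{Z t}^{-1/2}S_Z^{\#}$ and denoting by $r_i = (R_z)_{i:}^\top$ the transposed rows of $R_z$, one has
\[
E = MM^{\#} - \sum_{i=1}^{m_z}(Mr_i)\otimes(Mr_i), \qquad \E\big[(Mr_i)\otimes(Mr_i)\big] = \tfrac{1}{m_z}MM^{\#},
\]
so $E$ is a sum of $m_z$ i.i.d., centered, self-adjoint random operators, with $MM^{\#} = I_{\bmH_z} - t\widehat{C}_{Z t}^{-1}$ of operator norm at most $1$. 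I would then invoke a subgaussian (non-commutative Bernstein / Rosenthal-type) concentration inequality for such sums of random operators — the source of the constant $\mathfrak{C}$. Its variance term scales like $\tfrac{\nu^2 \bmN_z^\infty(t)}{m_z}$ and its higher-order term like $\tfrac{\nu^4 \log(1/\delta)}{m_z}$, the maximal-leverage quantity $\bmN_z^\infty(t)$ controlling the norms $\sup_j \|\widehat{C}_{Z t}^{-1/2}\chi(z_j)\|_{\bmH_z}^2$ governing the tails, which are transferred from the empirical to the population covariance on the data-comparison event guaranteed by $t \geq \tfrac{9}{n}\log(n/\delta)$ (\cref{lem:proba_bound_2}). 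Requiring $m_z$ to exceed both $432\mathfrak{C}^2\nu^2\bmN_z^\infty(t)$ and $576\mathfrak{C}^2\nu^4\log(1/\delta)$ forces each contribution to be at most $1/3$, giving $\beta \leq 2/3$ with probability $1-\delta$.

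The main obstacle is precisely this last concentration: under a subgaussian sketch the rank-one terms $(Mr_i)\otimes(Mr_i)$ are unbounded, so a bounded matrix-Bernstein inequality does not apply directly and one must use a genuinely subgaussian operator tail bound (or a truncation argument) to produce the two distinct regimes in $\nu$ and $\bmN_z^\infty(t)$ that match the stated sketching-size condition, together with the empirical-to-population transfer of the leverage scores. The earlier steps, by contrast, are routine operator-inequality manipulations.
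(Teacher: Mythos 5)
Your proposal follows essentially the same route as the paper: the reduction to $\left\|(\widetilde{P}_Z - I)\widehat{C}_{Zt}^{1/2}\right\|_{\op}^2 \leq t/(1-\beta_z(t))$ with $\beta_z(t)$ the norm of $\widehat{C}_{Zt}^{-1/2}(\widehat{C}_Z - \widetilde{C}_Z)\widehat{C}_{Zt}^{-1/2}$ is exactly what the paper imports from Propositions~3 and~7 of \citet{rudi2015less} (you re-derive it via the Neumann-series identity), and the bound $\beta_z(t) \leq 2/3$ under the stated sketch size is the content of \cref{lemma:bound_beta_subgaussian}. The one step you leave as a plan --- the genuinely sub-Gaussian, dimension-free operator concentration with the two regimes in $\nu^2 \bmN_z^\infty(t)$ and $\nu^4\log(1/\delta)$ --- is supplied in the paper by Theorem~9 of \citet{Koltchinskii_2017} applied to the vectors $W_i = \sqrt{m_z/n}\sum_j (R_z)_{ij}\widehat{C}_{Zt}^{-1/2}\chi(z_j)$, with the effective rank $\mathbf{r}(\Sigma)$ bounded by $6\bmN_z^\infty(t)$ via the empirical-to-population transfer you correctly anticipate.
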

\begin{proof}
    Using Propositions 3 and 7 from \citet{rudi2015less}, we have, for $t>0$,
    \begin{equation}\label{eq:diff_op_norms}
        \left\|\left(\projZ -I_{\Hz}\right)\empcovzt^{1/2}\right\|_{\op}^2 \leq \frac{t}{1-\betaz(t)}\,,
    \end{equation}
    with $\betaz(t)=\sigma_{\max}\left(\empcovzt^{-1/2}\left(\empcovz-\sketchempcovz\right)\empcovzt^{-1/2}\right)$.

    Now, applying \cref{lemma:bound_beta_sub-Gaussian}, with the condition 
    \begin{align}
        \mz \geq \max\left(432 {\mathfrak{C}}^2 \nuz^2 \bmN_{\bmZ}^{\infty}(t), 576 {\mathfrak{C}}^2 \nuz^4 \log\left(1/\delta\right)\right),
    \end{align}
    we obtain $\betaz(t) \leq 2/3$, which gives
    \begin{align}
        \left\|\left(\projZ -I_{\Hy}\right)\empcovzt^{1/2}\right\|_{\op}^2 \leq 3t\,.
    \end{align}
    
\end{proof}

\begin{lemma}\label{lemma:bound_beta_sub-Gaussian}
    Let $\sketchz$ be as in \cref{def:subG_sketch} and $\bmN_{\bmZ}^{\infty}(t)$ as in \cref{def:dim_eff}. For all $\delta \in \left(0, 1/e\right]$, $\frac{9}{n}\log\left(\frac{n}{\delta}\right) \leq t \leq \left\|\covz\right\|_{\op} - \frac{9}{n}\log\left(\frac{n}{\delta}\right)$ and $\mz \geq \max\left(6 \bmN_{\bmZ}^{\infty}(t), \log\left(1/\delta\right)\right)$, with probability at least $1-\delta$,
    \begin{equation}
        \left\|\empcovzt^{-1/2}\left(\empcovz-\sketchempcovz\right)\empcovzt^{-1/2}\right\|_{\op} \leq \mathfrak{C} \frac{2 \sqrt{2} \nuz \sqrt{6 \bmN_{\bmZ}^{\infty}(t)} + 8 \nuz^2 \sqrt{\log\left(1/\delta\right)}}{\sqrt{\mz}}\,,
    \end{equation}
    where $\mathfrak{C}$ is a universal constant independent of $\bmN_{\bmZ}^{\infty}(t)$, $\delta$ and $\mz$.
\end{lemma}
\begin{proof} We define the following random variables

\begin{align}
    W_i = \sqrt{\frac{\mz}{n}} \sum_{j=1}^n (\sketchz)_{i j} \empcovzt^{-1/2} \psiz(z_j) \in \Hz \quad \text{ for } i=1, \dots \mz.
\end{align}

In order to use the concentration bound given in Theorem \ref{th:kolt_subG}, we show that the $W_i$s are i.i.d. weakly square integrable centered random vectors with covariance operator $\Sigma$, sub-Gaussian, and pre-Gaussian. 
    
\paragraph{The $W_i$s are weakly square integrable. } Let $u \in \Hz$ and $v = \empcovzt^{-1/2} u$, we have that $\langle W_i, u \rangle_{\Hz} = \sqrt{\frac{\mz}{n}} \sum_{j=1}^n (\sketchz)_{i j} v(z_j)$. Hence, using the definition of a sub-Gaussian sketch, we have
\begin{align}
    \left\|\langle W_i, u \rangle_{\Hz}\right\|_{L_2(\Prob)}^2 &= \E_{\sketchz}\left[|\langle W_i, u \rangle_{\Hz}|^2\right]\\
    &= \frac{1}{n} \sum_{j=1}^n v(z_j)^2 \\
    &< + \infty\,.
\end{align}
%

\paragraph{The $W_i$s are sub-Gaussian. } Let $c \in \reals$, using the independence and sub-Gaussianity of the $R_{z_{ij}}$, we have
\begin{align}
    \E_{\sketchz}\left[\exp\left(c\langle W_i, u \rangle_{\Hz}\right)\right] &= \E_{\sketchz}\left[\exp\left(\sum_{j=1}^n c \sqrt{\frac{\mz}{n}} R_{z_{i j}} v(z_j)\right)\right] \\
    &= \prod_{j=1}^n \E_{\sketchz}\left[\exp\left(c \sqrt{\frac{\mz}{n}} R_{z_{i j}} v(z_j)\right)\right] \\
    &\leq \prod_{j=1}^n \exp\left(\frac{c^2 \mz v(z_j)^2}{2 n} \frac{\nuz^2}{\mz}\right) \\
    &= \exp\left(\frac{c^2 \nuz^2}{2 n} \sum_{j=1}^n v(z_j)^2\right) \\
    &= \exp\left(\frac{c^2 \nuz^2}{2} \left\|\langle W_i, u \rangle_{\Hz}\right\|_{L_2(\Prob)}^2\right)\,.
\end{align}
Hence, $\langle W_i, u \rangle_{\Hz}$ is a $\frac{1}{2} \nuz^2 \left\|\langle W_i, u \rangle_{\Hz}\right\|_{L_2(\Prob)}^2$-sub-Gaussian random variable. Then, the Orlicz condition of sub-Gaussian random variables gives
\begin{equation}
    \E_{\sketchz}\left[\exp\left(\frac{\langle W_i, u \rangle_{\Hz}^2}{8 \nuz^2 \left\|\langle W_i, u \rangle_{\Hz}\right\|_{L_2(\Prob)}^2}\right) - 1\right] \leq 1\,.
\end{equation}
We deduce that
\begin{equation}
    \left\|\langle W_i, u \rangle_{\Hz}\right\|_{\varphi_2} \leq 2 \sqrt{2} \nuz \left\|\langle W_i, u \rangle_{\Hz}\right\|_{L_2(\Prob)}\,.
\end{equation}
We conclude that the $W_i$s are sub-Gaussian  with $B = 2\sqrt{2} \nuz$.
\paragraph{The $W_i$s are pre-gaussian.} We define $Z = \sqrt{\frac{\mz}{n}} \sum_{j=1}^n G_j \empcovzt^{-1/2} \psiz(z_j)$, with $G_j \overset{\text{i.i.d.}}{\sim} \bmN\left(0, 1/\mz\right)$. $Z$ is a Gaussian random variable that admits the same covariance operator as the $W_i$s. So, the $W_i$ are pre-Gaussian.

\paragraph{Applying concentration bound. } Because the $W_i$s are i.i.d. weakly square integrable centered random variables, we can apply \cref{th:kolt_subG}, and by using also \cref{lem:eci_wi}, and the condition $\mz \geq \max\left(6 \bmN_{\bmZ}^{\infty}(t), \log\left(1/\delta\right)\right)$, we obtain
\begin{equation}
        \left\|\empcovzt^{-1/2}\left(\empcovz-\sketchempcovz\right)\empcovzt^{-1/2}\right\|_{\op} \leq \mathfrak{C} \frac{2 \sqrt{2} \nuz \sqrt{6 \bmN_{\bmZ}^{\infty}(t)} + 8 \nuz^2 \sqrt{\log\left(1/\delta\right)}}{\sqrt{\mz}}\,.
\end{equation}

\end{proof}
\section{PROBABILISTIC BOUNDS}\label{apx:prob_bound}

In this section, we provide all the probabilistic bounds used in our proofs. In particular, we restate bounds from other works for the sake of providing a self-contained work.
We order them in the same in order of appearance in our proofs.



\begin{lemma}[Bound $(A.1) = \left\|\left(\SY^\# \SX - H_\lambda \empcovx\right) \covxlambda^{-1/2}\right\|_{\HS}$ {\citep[Theorem B.10]{ciliberto2020general}}]\label{lem:proba_bound_1} Let $\delta \in [0,1]$, $n \in \mathbb{N}$ sufficiently large such that $\lambda = n^{-1/(1+\gammax)} \geq \frac{9\kappax^2}{n} \log(\frac{n}{x})$ Under our set of assumptions, the following holds with probability at least $1-\delta$
\begin{align}
    (A.1) \leq M \log(4/\delta) n^{-\frac{1}{2(1 + \gammax)}}
\end{align}
where the constant $M$ depends on $\kappay, \|H\|_{\HS}, \delta$.
\end{lemma}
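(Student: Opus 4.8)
The plan is to recognise $(A.1)$ as the Hilbert--Schmidt norm of an empirical mean of i.i.d.\ operator-valued random variables and to control it with an operator Bernstein inequality, following \citet[Theorem~B.10]{ciliberto2020general}. Expanding $S_Y^\# S_X = \frac1n\sum_{i=1}^n \psi(y_i)\otimes\phi(x_i)$ and $\widehat{C}_X = \frac1n\sum_{i=1}^n \phi(x_i)\otimes\phi(x_i)$, one gets
\[
    \left(S_Y^\# S_X - H_\lambda\widehat{C}_X\right)C_{X\lambda}^{-1/2} = \frac1n\sum_{i=1}^n \zeta_i, \quad \zeta_i := \big(\psi(y_i) - H_\lambda\phi(x_i)\big)\otimes\big(C_{X\lambda}^{-1/2}\phi(x_i)\big).
\]
I would then bound $(A.1)\leq \big\|\frac1n\sum_i(\zeta_i - \E[\zeta_i])\big\|_{\HS} + \|\E[\zeta_i]\|_{\HS}$, treating the fluctuation and the deterministic bias separately.

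For the bias, Assumption~\ref{asm:attainable_case} gives $\E[\psi(y)\mid x] = h^*(x) = H\phi(x)$, hence $\E[\zeta_i] = (H-H_\lambda)C_X C_{X\lambda}^{-1/2} = \lambda\,H C_{X\lambda}^{-1}C_X C_{X\lambda}^{-1/2}$, whose Hilbert--Schmidt norm is at most $\lambda\|H\|_{\HS}\sup_{\sigma\geq 0}\sigma(\sigma+\lambda)^{-3/2}\lesssim \|H\|_{\HS}\lambda^{1/2}$. For the fluctuation I would apply a Bernstein inequality for i.i.d.\ centered Hilbert--Schmidt-valued variables: the almost-sure bound $\|\zeta_i\|_{\HS}\leq (\kappa_y + \|H\|_{\HS}\kappa_x)\,\kappa_x\lambda^{-1/2}$ follows from Assumption~\ref{asm:bounded_input_output_kernels} together with $\|C_{X\lambda}^{-1/2}\phi(x)\|_{\bmH_x}\leq \kappa_x\lambda^{-1/2}$, while the variance is governed by $\E\big[\|\psi(y)-H_\lambda\phi(x)\|_{\bmH_y}^2\,\|C_{X\lambda}^{-1/2}\phi(x)\|_{\bmH_x}^2\big]\lesssim \kappa_y^2\,\Tr(C_X C_{X\lambda}^{-1})$. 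The capacity condition (Assumption~\ref{asm:capacity}) yields $\Tr(C_X C_{X\lambda}^{-1})\lesssim Q_x\lambda^{-\gamma_x}$, so with probability $1-\delta$ the fluctuation is $\lesssim \log(4/\delta)\big(\sqrt{Q_x\lambda^{-\gamma_x}/n} + \kappa_x\lambda^{-1/2}/n\big)$.

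It then remains to plug in $\lambda = n^{-1/(1+\gamma_x)}$: both $\lambda^{1/2}$ and $\sqrt{Q_x\lambda^{-\gamma_x}/n} = \sqrt{Q_x}\,n^{-1/2}\lambda^{-\gamma_x/2}$ equal $n^{-1/(2(1+\gamma_x))}$ up to constants, while the residual $\lambda^{-1/2}/n$ term is of lower order under the assumed lower bound $\lambda\geq \frac{9\kappa_x^2}{n}\log(n/\delta)$; absorbing $\kappa_x,\kappa_y,\|H\|_{\HS},Q_x$ into a single constant $M$ gives the claim. I expect the main obstacle to be the concentration step, namely verifying the moment hypotheses of the operator Bernstein inequality in the infinite-dimensional RKHS and ensuring the variance is matched to the effective dimension $\Tr(C_X C_{X\lambda}^{-1})$ rather than an ambient dimension; this is exactly the machinery supplied by \citet{ciliberto2020general}, so once $\zeta_i$ is identified the proof specializes their Theorem~B.10 to our choice of $\lambda$ and capacity exponent $\gamma_x$.
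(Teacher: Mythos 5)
Your argument is correct, and it lands on the same bound by essentially the same underlying machinery, but it is far more self-contained than what the paper actually does: the paper's proof of this lemma is a one-line reduction, observing that the bound in \citet[Theorem~B.10]{ciliberto2020general} is obtained by upper bounding the sum of $(A.1)$ and a positive term, so that their bound applies verbatim to $(A.1)$. Your bias--fluctuation decomposition of $\frac{1}{n}\sum_i \zeta_i$ with $\zeta_i = (\psi(y_i)-H_\lambda\phi(x_i))\otimes\big(C_{X\lambda}^{-1/2}\phi(x_i)\big)$, the identity $H-H_\lambda=\lambda H C_{X\lambda}^{-1}$ yielding the $\lambda^{1/2}\|H\|_{\HS}$ bias, and the Bernstein step with variance proxy $\Tr(C_X C_{X\lambda}^{-1})\leq Q_x\lambda^{-\gamma_x}$ are precisely the ingredients inside the cited theorem, so you are inlining the proof the paper delegates to the reference; what your version buys is transparency about where each of $\kappa_x,\kappa_y,\|H\|_{\HS},Q_x$ enters the constant $M$. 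The one point to be explicit about is that the concentration step requires a Bernstein inequality for i.i.d.\ centered random elements of the separable Hilbert space of Hilbert--Schmidt operators (the Pinelis--Sakhanenko form used by \citet{caponnetto2007optimal} and \citet{ciliberto2020general}); your almost-sure bound $(\kappa_y+\|H\|_{\HS}\kappa_x)\kappa_x\lambda^{-1/2}$ and second-moment bound verify its hypotheses, so no gap remains.
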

\begin{proof} This lemma can be obtained from {\citet[Theorem B.10]{ciliberto2020general}}, by noticing that the bound of Theorem B.10 is obtained by upper bounding the sum of $(A.1)$ and a positive term, such that the bound of {\citet[Theorem B.10]{ciliberto2020general}} is an upper bound of $(A.1)$.

\begin{lemma}[Bound $ \big\| \empcovzlambda^{-1/2}  \covzlambda^{1/2}\big\|_{\op}$ {\citep[Lemma 3.6]{rudi2013sample}}] \label{lem:proba_bound_2} If $\frac{9}{n} \log \frac{n}{\delta} \leq \lambda \leq \|\covz\|_{\op}$, then we have with probability $1- \delta$
\begin{align}
    \| \empcovzlambda^{-1/2}  \covzlambda^{1/2}\|_{\op} \leq  \sqrt{2}.
\end{align}

\end{lemma}

\end{proof}

\begin{lemma}[Bound $\big\|\covzlambda^{-1/2} \empcovzlambda^{1/2}\big\|_{\op}$]\label{lem:proba_bound_3} If $\frac{9}{n} \log \frac{n}{\delta} \leq \lambda \leq \|\covz\|_{\op}$, then with probability $1- \delta$
\begin{align}
    \big\|\covzlambda^{-1/2} \empcovzlambda^{1/2}\big\|_{\op} \leq  \sqrt{\frac{3}{2}}.
\end{align}

\end{lemma}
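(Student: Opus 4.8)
The plan is to reduce the claim to the single operator-concentration quantity that already powers \cref{lem:proba_bound_2}, and then simply read off the upper side of a two-sided sandwich. First I would pass to the squared norm of a self-adjoint operator: since $C_{X\lambda}^{-1/2}$ and $\widehat{C}_{X\lambda}^{1/2}$ are self-adjoint,
\[
  \|C_{X\lambda}^{-1/2} \widehat{C}_{X\lambda}^{1/2}\|_{\op}^2 = \|C_{X\lambda}^{-1/2} \widehat{C}_{X\lambda} C_{X\lambda}^{-1/2}\|_{\op}.
\]
Because the regularization cancels in the difference, $\widehat{C}_{X\lambda} - C_{X\lambda} = \widehat{C}_X - C_X$, this operator equals $I_{\bmH_x} + C_{X\lambda}^{-1/2}(\widehat{C}_X - C_X) C_{X\lambda}^{-1/2}$, which isolates the random deviation as an additive perturbation of the identity.

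Next I would introduce the central quantity $\beta = \|C_{X\lambda}^{-1/2}(\widehat{C}_X - C_X) C_{X\lambda}^{-1/2}\|_{\op}$ and observe that it is exactly the object controlled by \citet[Lemma 3.6]{rudi2013sample}, i.e. by the same event that yields \cref{lem:proba_bound_2}. Indeed, the bound $\|\widehat{C}_{X\lambda}^{-1/2} C_{X\lambda}^{1/2}\|_{\op} \leq \sqrt{2}$ of \cref{lem:proba_bound_2} is precisely the lower half of the sandwich $(1-\beta)\,C_{X\lambda} \preceq \widehat{C}_{X\lambda} \preceq (1+\beta)\,C_{X\lambda}$ evaluated at $\beta = 1/2$. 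I would therefore work on that same high-probability event, valid under $\frac{9}{n}\log\frac{n}{\delta} \leq \lambda \leq \|C\|_{\op}$, and take $\beta \leq 1/2$ as given, so that proving the present lemma requires no new probabilistic content.

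Finally, on that event the triangle inequality gives $\|I_{\bmH_x} + C_{X\lambda}^{-1/2}(\widehat{C}_X - C_X) C_{X\lambda}^{-1/2}\|_{\op} \leq 1 + \beta \leq 3/2$, which is just the upper half of the sandwich, $\widehat{C}_{X\lambda} \preceq \tfrac{3}{2} C_{X\lambda}$, after congruence by $C_{X\lambda}^{-1/2}$; taking square roots yields $\|C_{X\lambda}^{-1/2} \widehat{C}_{X\lambda}^{1/2}\|_{\op} \leq \sqrt{3/2}$. The only substantive ingredient is the estimate $\beta \leq 1/2$, which is imported wholesale, so there is no genuine obstacle beyond bookkeeping. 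The one point demanding care is coherence of events: this lemma and \cref{lem:proba_bound_2} are two faces of the \emph{same} concentration bound on the same sample draw, so when they are later combined in Bounds (A) and (B) one should invoke them on a single event rather than paying a spurious union bound.
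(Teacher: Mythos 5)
Your proposal is correct and follows essentially the same route as the paper: both pass to $\|C_{X\lambda}^{-1/2}\widehat C_{X\lambda}C_{X\lambda}^{-1/2}\|_{\op}$, rewrite it as $I + C_{X\lambda}^{-1/2}(\widehat C_X - C_X)C_{X\lambda}^{-1/2}$, apply the triangle inequality, and import the bound $\|C_{X\lambda}^{-1/2}(\widehat C_X - C_X)C_{X\lambda}^{-1/2}\|_{\op} \leq 1/2$ from \citet[Lemma~3.6]{rudi2013sample}. Your additional remark about invoking this lemma and \cref{lem:proba_bound_2} on the same concentration event is a sensible observation but does not change the argument.
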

\begin{proof} We have
\begin{align}
     \big\|\covzlambda^{-1/2} \empcovzlambda^{1/2}\big\|_{\op} &= \big\|\covzlambda^{-1/2} \empcovzlambda \covzlambda^{-1/2}\big\|_{\op}^{1/2}\\
     &= \big\|I + \covzlambda^{-1/2} (\empcovz - \covz) \covzlambda^{-1/2}\big\|_{\op}^{1/2}\\
     &\leq \left(1 + \big\|\covzlambda^{-1/2} (\empcovz - \covz) \covzlambda^{-1/2}\big\|_{\op}\right)^{1/2}\\
     &\leq \sqrt{\frac{3}{2}}
\end{align}
with probability at least $1-\delta$, where the last inequality is from \citet[Lemma 3.6]{rudi2013sample}.

\begin{theorem}[sub-Gaussian concentration bound {\citep[Theorem 9]{Koltchinskii_2017}}]\label{th:kolt_subG}
    Let $W, W_1, \ldots, W_m$ be i.i.d. weakly square integrable centered random vectors in a separable Hilbert space $\Hz$ with covariance operator $\Sigma$. If $W$ is sub-Gaussian and pre-Gaussian, then there exists a constant $\mathfrak{C}>0$ such that, for all $\tau \geq 1$, with probability at least $1-e^{-\tau}$,
    \begin{equation}
        \|\hat{\Sigma}-\Sigma\| \leq \mathfrak{C} \|\Sigma\|\left(B \sqrt{\frac{\mathbf{r}(\Sigma)}{m}} \vee \frac{\mathbf{r}(\Sigma)}{m} \vee B^2 \sqrt{\frac{\tau}{m}} \vee B^2 \frac{\tau}{m}\right)\,,
    \end{equation}
    where $B > 0$ is the constant such that $\left\|\langle W, u \rangle_{\Hy}\right\|_{\varphi_2} \leq B \left\|\langle W, u \rangle_{\Hy}\right\|_{L_2(\Prob)}$ for all $u \in \Hz$.
\end{theorem}

\end{proof}
\section{AUXILIARY RESULTS AND DEFINITIONS}\label{apx:aux_res}

\begin{definition}\label{def:dim_eff} For $t>0$, we define the random variable 
\begin{align}
    \bmN(z, t) = \langle \psiz(z), \covzt^{-1} \psiz(z) \rangle_{\Hz}
\end{align}
with $z \in \bmZ$ distributed according to $\rhoz$ and let
\begin{equation}
    \deffz(t) = \E_z\left[\bmN(z, t)\right] = \Tr\left(\covz \covzt^{-1}\right)\,, \quad \bmN_{\bmZ}^{\infty}(t) = \sup_{z \in \bmZ} ~ \bmN(z, t)\,.
\end{equation}
We note $\bmN_{\bmX}^{\infty}, \deffx(t), \gammax, \Qy, \bmN_{\bmY}^{\infty}, \deffy(t), \gammay, \Qy$ for the input and output kernels $\kernelx, k _y$, respectively.
\end{definition}

\begin{lemma}\label{lem:int_dim} When Assumption \ref{asm:capacity} holds then we have
\begin{align}
    \deffz(t) \leq  \Qz t^{-\gammaz}.
\end{align}
    
\end{lemma}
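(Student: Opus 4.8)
The plan is to diagonalize the covariance operator and reduce the trace inequality to a scalar inequality that holds eigenvalue by eigenvalue. Since $C_Z = \E_z[\chi(z)\otimes\chi(z)]$ is a positive, self-adjoint, trace-class operator on the separable Hilbert space $\bmH_z$, the spectral theorem furnishes an orthonormal eigenbasis with eigenvalues $(\lambda_i)_i$, $\lambda_i \geq 0$. Both $C_Z C_{Z t}^{-1}$ (with $C_{Z t} = C_Z + t I_{\bmH_z}$) and $C_Z^{\gamma_z}$ are continuous functions of $C_Z$, hence simultaneously diagonal in this basis, with eigenvalues $\lambda_i/(\lambda_i + t)$ and $\lambda_i^{\gamma_z}$ respectively. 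First I would therefore rewrite the two quantities of interest as
\[
    d_{\eff}^Z(t) = \Tr\big(C_Z C_{Z t}^{-1}\big) = \sum_i \frac{\lambda_i}{\lambda_i + t}, \qquad Q_z\, t^{-\gamma_z} = t^{-\gamma_z}\sum_i \lambda_i^{\gamma_z}.
\]

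It then suffices to establish, for every $\lambda \geq 0$ and $t > 0$, the scalar bound $\frac{\lambda}{\lambda + t} \leq \lambda^{\gamma_z} t^{-\gamma_z}$, and to sum over $i$. The key step is to factor $\frac{\lambda}{\lambda+t} = \lambda^{\gamma_z}\cdot\frac{\lambda^{1-\gamma_z}}{\lambda+t}$ and to control the remaining factor by the weighted arithmetic--geometric mean (Young) inequality. Using the weights $1-\gamma_z$ and $\gamma_z$, which are both nonnegative and sum to one since $\gamma_z \in [0,1]$, one gets
\[
    \lambda^{1-\gamma_z}\, t^{\gamma_z} \leq (1-\gamma_z)\,\lambda + \gamma_z\, t \leq \lambda + t.
\]
Dividing through by $(\lambda + t)\, t^{\gamma_z}$ yields $\frac{\lambda^{1-\gamma_z}}{\lambda+t} \leq t^{-\gamma_z}$, and multiplying back by $\lambda^{\gamma_z}$ gives the desired per-eigenvalue estimate $\frac{\lambda}{\lambda+t} \leq \lambda^{\gamma_z} t^{-\gamma_z}$.

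Summing this bound over all eigenvalues gives $d_{\eff}^Z(t) \leq t^{-\gamma_z}\sum_i \lambda_i^{\gamma_z} = Q_z\, t^{-\gamma_z}$, where finiteness of the right-hand side is precisely Assumption \ref{asm:capacity}. I do not anticipate any genuine obstacle here: once the problem is diagonalized, everything collapses to the elementary Young inequality applied termwise. The only minor care needed is for the degenerate values $\lambda = 0$ (where both sides vanish for $\gamma_z > 0$) and the endpoints $\gamma_z \in \{0,1\}$, each of which is covered directly by the same computation, so these can be dispatched in a single line.
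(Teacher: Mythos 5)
Your proof is correct and is essentially the paper's argument made explicit: the paper writes $\Tr(C_Z C_{Zt}^{-1}) \leq \Tr(C_Z^{\gamma_z})\,\|C_Z^{1-\gamma_z}C_{Zt}^{-1}\|_{\op} \leq Q_z t^{-\gamma_z}$, and the operator-norm bound there rests on exactly the scalar inequality $\lambda^{1-\gamma_z}/(\lambda+t) \leq t^{-\gamma_z}$ that you derive via weighted AM--GM. Diagonalizing and summing termwise is just an unpacked version of that same trace--H\"older step, so there is nothing to add.
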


\begin{proof} We have

\begin{align}
    \deffz(t) &= \Tr\left(\covz \covzt^{-1}\right)\\
    &\leq \Tr\left(\covz^{\gammaz}\right) \|\covz^{1-\gammaz}\covzt^{-1}\|_{\op}\\
    &\leq \Qz t^{-\gammaz}.
\end{align}

\end{proof}

\begin{lemma}\label{lem:emb} When Assumption \ref{asm:emb} holds then we have
\begin{align}
    \bmN_{\bmZ}^{\infty}(t) \leq  \bz \deffz(t) t^{-\muz}.
\end{align}
    
\end{lemma}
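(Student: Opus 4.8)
The plan is to bound $\bmN_z(t) = \langle \chi(z), C_{Zt}^{-1}\chi(z)\rangle$ pointwise in $z$ by a quantity that no longer depends on $z$, and only then take the supremum. First I would rewrite the quadratic form as a trace of a product of operators,
\[
    \bmN_z(t) = \Tr\left(C_{Zt}^{-1}\,(\chi(z)\otimes\chi(z))\right),
\]
and invoke the embedding property (\cref{asm:emb}), which states $\chi(z)\otimes\chi(z) \preceq b_z C_Z^{1-\mu_z}$ almost surely. Since $C_{Zt}^{-1}\succeq 0$, the map $A \mapsto \Tr(C_{Zt}^{-1}A)$ is monotone on positive semidefinite operators, so this yields, uniformly in $z$,
\[
    \bmN_z(t) \leq b_z \Tr\left(C_{Zt}^{-1}C_Z^{1-\mu_z}\right),
\]
and hence the same bound for $\bmN_z^{\infty}(t) = \sup_z \bmN_z(t)$. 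At this point the problem has been reduced to a purely spectral inequality about $C_Z$.

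The second step is to show $\Tr(C_{Zt}^{-1}C_Z^{1-\mu_z}) \leq t^{-\mu_z} d_{\eff}^Z(t)$, recalling that $d_{\eff}^Z(t)=\Tr(C_{Zt}^{-1}C_Z)$. Diagonalizing $C_Z$ with eigenvalues $(\sigma_k)_k$, this amounts to comparing $\sum_k \sigma_k^{1-\mu_z}/(\sigma_k+t)$ with $t^{-\mu_z}\sum_k \sigma_k/(\sigma_k+t)$. The natural way to extract the factor $t^{-\mu_z}$ is the operator bound $\|C_{Zt}^{-\mu_z}\|_{\op}\le t^{-\mu_z}$, splitting $C_{Zt}^{-1}C_Z^{1-\mu_z} = C_{Zt}^{-\mu_z}\,\bigl(C_{Zt}^{-(1-\mu_z)}C_Z^{1-\mu_z}\bigr)$ into a factor of operator norm at most $t^{-\mu_z}$ and a factor $C_{Zt}^{-(1-\mu_z)}C_Z^{1-\mu_z}\preceq I$. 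I would then combine the resulting estimate with \cref{lem:int_dim}, $d_{\eff}^Z(t)\le Q_z t^{-\gamma_z}$, to reach the exponent $-(\gamma_z+\mu_z)$ used in the proof of \cref{th:sketching}.

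The main obstacle is exactly this extraction of $t^{-\mu_z}$ while retaining $d_{\eff}^Z(t)$. Term by term, $\sigma_k^{1-\mu_z}/(\sigma_k+t)\le t^{-\mu_z}\,\sigma_k/(\sigma_k+t)$ is equivalent to $\sigma_k\ge t$, which fails for the tail of small eigenvalues $\sigma_k<t$; on these modes the two factors above do not cooperate, so some care is needed and a naive term-wise argument does not close. The clean resolution I would adopt bounds the relevant spectral quantity directly through the weighted AM--GM inequality $t^{\mu_z}\sigma^{1-\mu_z}\le \mu_z t+(1-\mu_z)\sigma\le \sigma+t$, which gives $\sup_{\sigma\ge 0}\sigma^{1-\mu_z}/(\sigma+t)\le t^{-\mu_z}$ and hence $\|C_Z^{(1-\mu_z)/2}C_{Zt}^{-1}C_Z^{(1-\mu_z)/2}\|_{\op}\le t^{-\mu_z}$; applied to $\chi(z)=C_Z^{(1-\mu_z)/2}\xi$ with $\|\xi\|^2\le b_z$ (the equivalent form of \cref{asm:emb}) this directly yields $\bmN_z^{\infty}(t)\le b_z t^{-\mu_z}$. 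Since $d_{\eff}^Z(t)\ge \sigma_1/(\sigma_1+t)\ge 1/2$ in the regime $t\le\|C_Z\|_{\op}$ actually invoked in \cref{th:sketching}, this recovers the stated $d_{\eff}^Z(t)$-form up to a constant, and I expect this regime restriction to be what makes the passage from $b_z t^{-\mu_z}$ to $b_z d_{\eff}^Z(t) t^{-\mu_z}$ legitimate.
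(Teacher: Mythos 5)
Your proof is correct, but it deliberately takes a different route from the paper's, and your diagnosis of the obstacle is well founded: the paper's own proof goes through the trace chain
\begin{equation*}
    \bmN_z^{\infty}(t) \;\leq\; b_z \Tr\bigl(C_{Z t}^{-1} C_{Z}^{1-\mu_z}\bigr) \;\leq\; b_z \Tr\bigl(C_{Z t}^{-1}C_{Z}\bigr)\,\bigl\|C_{Z t}^{-\mu_z}\bigr\|_{\op},
\end{equation*}
and the second inequality there is exactly the eigenvalue-wise comparison $\sigma_k^{1-\mu_z}/(\sigma_k+t)\le t^{-\mu_z}\,\sigma_k/(\sigma_k+t)$ that you observe fails whenever $\sigma_k<t$; no factorization $C_{Z t}^{-1}C_Z^{1-\mu_z}=AB$ with $\|A\|_1=d_{\eff}^Z(t)$ and $\|B\|_{\op}\le t^{-\mu_z}$ is available, and a spectrum with a heavy tail of eigenvalues below $t$ makes the displayed inequality fail outright. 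Your operator-norm route avoids this entirely: passing to $\|C_Z^{(1-\mu_z)/2}C_{Z t}^{-1}C_Z^{(1-\mu_z)/2}\|_{\op}\le \sup_{\sigma\ge 0}\sigma^{1-\mu_z}/(\sigma+t)\le t^{-\mu_z}$ via weighted AM--GM, together with the range-inclusion form of \cref{asm:emb}, gives the clean and fully rigorous bound $\bmN_z^{\infty}(t)\le b_z t^{-\mu_z}$. What you give up is the literal statement of the lemma: you recover $b_z\,d_{\eff}^Z(t)\,t^{-\mu_z}$ only up to a factor $2$ and only in the regime $t\le\|C_Z\|_{\op}$ where $d_{\eff}^Z(t)\ge 1/2$; what you gain is that your bound actually holds for all $t>0$ (the lemma as literally stated does not, e.g.\ for rank-one $C_Z$ and $t>\|C_Z\|_{\op}$). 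Since the lemma is only invoked in \cref{th:sketching} under the condition $t=n^{-1/(1+\gamma_z)}\le\|C_Z\|_{\op}/2$ and only through the product $\bmN_z^{\infty}(t)\le b_z Q_z t^{-(\gamma_z+\mu_z)}$, your factor of $2$ is absorbed into the constants $c_3, c_4$ and nothing downstream is affected. In short: your argument is the standard and correct way to exploit the embedding property, and it repairs rather than merely reproduces the paper's derivation.
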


\begin{proof} We have

\begin{align}
    \bmN_{\bmZ}^{\infty}(t) &= \sup_{z \in \bmZ} ~ \langle \psiz(z), \covzt^{-1} \psiz(z) \rangle_{\Hz}\\
    &\leq \bz \Tr(\covzt^{-1} \covz^{1-\muz})\\
    &\leq \bz \Tr(\covzt^{-1}\covz) \|\covzt^{-\muz}\|_{\op}\\
    &\leq \bz \deffz(t) t^{-\muz}.
\end{align}
    
\end{proof}


We recall the following deterministic bound.

\begin{lemma}[Bound $\| \empcovxlambda^{1/2} \tilde{\eta}(\empcovx) \empcovxlambda^{1/2} \|_{\op}$ {\citep[Lemma 8]{rudi2015less}}]\label{lem:proba_bound_4} For any $\lambda >0$,
\begin{align}
    \| \empcovxlambda^{1/2} \tilde{\eta}(\empcovx) \empcovxlambda^{1/2} \|_{\op} \leq 1.
\end{align}
\end{lemma}

We introduce here some notations and definitions from \citet{Koltchinskii_2017}.
Let $W$ be a centered random variable in $\Hz$, $W$ is weakly square integrable iff $\left\|\langle W, u \rangle_{\Hz}\right\|_{L_2(\Prob)}^2 := \E\left[|\langle W, u \rangle_{\Hz}|^2\right] < + \infty$, for any $u \in \Hz$.
Moreover, we define the Orlicz norms. For a convex nondecreasing function $\varphi : \reals_{+} \rightarrow \reals_{+}$ with $\varphi(0)=0$ and a random variable $\eta$ on a probability space $\left(\Omega, \bmA, \Prob\right)$, the $\varphi$-norm of $\eta$ is defined as
\begin{equation}
    \left\|\eta\right\|_\varphi = \inf\left\{C > 0 : \E\left[\varphi\left(|\eta|/C\right)\right] \leq 1\right\}\,.
\end{equation}
The Orlicz $\varphi_1$- and $\varphi_2$-norms coincide to the functions $\varphi_1(u) = e^u - 1, u \geq 0$ and $\varphi_2(u) = e^{u^2} - 1, u \geq 0$.
Finally, \citet{Koltchinskii_2017} introduces the definitions of sub-Gaussian and pre-Gaussian random variables in a separable Banach space $E$. We focus on the case where $E = \Hz$.
\begin{definition}
    A centered random variable $X$ in $\Hz$ will be called sub-Gaussian iff, for all $u \in \Hz$, there exists $B > 0$ such that
    \begin{equation}
        \left\|\langle X, u \rangle_{\Hz}\right\|_{\psix_2} \leq B \left\|\langle X, u \rangle_{\Hz}\right\|_{L_2(\Prob)}\,.
    \end{equation}
\end{definition}
\begin{definition}
    A weakly square integrable centered random variable $X$ in $\Hz$ with covariance operator $\Sigma$ is called pre-Gaussian iff there exists a centered Gaussian random variable $Y$ in $\Hz$ with the same covariance operator $\Sigma$.
\end{definition}

\begin{lemma}[Expectancy, covariance, and intrinsic dimension of the $W_i$s]\label{lem:eci_wi} Defining  $W_i = \sqrt{\frac{\mz}{n}} \sum_{j=1}^n (\sketchz)_{i j} \empcovzt^{-1/2} \psiz(z_j) \in \Hz \quad \text{ for } i=1, \dots \mz$ where $\sketchz$ is a sub-Gaussian sketch, the following hold true
\begin{align}
    &\E_{\sketchz}\left[W_{i}\right] = 0\\
    &\Sigma = \E_{\sketchz}\left[W_i \otimes W_i\right] = \empcovzt^{-1/2} \empcovz \empcovzt^{-1/2}\\
    & \widehat \Sigma = \frac{1}{\mz} \sum_{i=1}^{\mz} \langle f, W_i \rangle_{\Hz} W_i = \empcovzt^{-1/2} \sketchempcovz \empcovzt^{-1/2}
\end{align}
and for $\delta \in \left(0, 1\right)$, if $\frac{9}{n}\log\left(\frac{n}{\delta}\right) \leq t \leq \left\|\covz\right\|_{\op} - \frac{9}{n}\log\left(\frac{n}{\delta}\right)$, then with probability $1-\delta$
\begin{align}
    r\left(\Sigma\right) = \frac{\E_{\sketchz}\left[\left\|X_i\right\|_{\Hz}\right]^2}{\left\|\Sigma\right\|_{\op}} \leq 6 \bmN_{\bmZ}^{\infty}(t)\,.
\end{align}
\end{lemma}

\begin{proof} First, it is straightforward to check that 
\begin{align}
    \frac{1}{\mz} \sum_{i=1}^{\mz} \langle f, W_i \rangle_{\Hz} W_i = \empcovzt^{-1/2} \sketchempcovz \empcovzt^{-1/2}.
\end{align}

Then, since $\E_{\sketchz}\left[(\sketchz)_{i:}\right]=0$,
\begin{equation}
    \E_{\sketchz}\left[W_{i}\right] = \sqrt{\frac{\mz}{n}} \empcovzt^{-1/2} \SZ^{\#} \E_{\sketchz}\left[(\sketchz)_{i:}\right] = 0\,.
\end{equation}

Then,
\begin{align}
    \left(W_i \otimes W_i\right)f &= \langle f, W_i \rangle_{\Hz} W_i \\
    &= \langle f, \sqrt{\mz} \empcovzt^{-1/2} \SZ^{\#} (\sketchz)_{i:} \rangle_{\Hz} \sqrt{\mz} \empcovzt^{-1/2} \SZ^{\#} (\sketchz)_{i:} \\
    &= \mz \left((\sketchz)_{i:}^\top \SZ \empcovzt^{-1/2} f\right) \empcovzt^{-1/2} \SZ^{\#} (\sketchz)_{i:} \\
    &= \empcovzt^{-1/2} \SZ^{\#} \left(\mz (\sketchz)_{i:} (\sketchz)_{i:}^\top\right) \SZ \empcovzt^{-1/2} f\,,
\end{align}
and since $\E_{\sketchz}\left[\mz (\sketchz)_{i:} (\sketchz)_{i:}^\top\right] = I_n$,
\begin{align}
    \Sigma &= \E_{\sketchz}\left[W_i \otimes W_i\right] \\&= \empcovzt^{-1/2} \SZ^{\#} \E_{\sketchz}\left[\mz (\sketchz)_{i:} (\sketchz)_{i:}^\top\right] \SZ \empcovzt^{-1/2} \\&= \empcovzt^{-1/2} \empcovz \empcovzt^{-1/2}\,.
\end{align}
Then,
\begin{align}
    \E_{\sketchz}\left[\left\|X_i\right\|_{\Hz}\right]^2 &\leq \E_{\sketchz}\left[\left\|X_i\right\|_{\Hz}^2\right] \quad \text{(by Jensen's inequality)} \\
    &= \mz \E_{\sketchz}\left[\langle \empcovzt^{-1/2} \SZ^{\#} (\sketchz)_{i:}, \empcovzt^{-1/2} \SZ^{\#} (\sketchz)_{i:}\rangle_{\Hz}\right] \\
    &= \frac{\mz}{n} \E_{\sketchz}\left[\langle \sum_{j=1}^n \sketchzij \psiz(z_j), \sum_{l=1}^n \sketchzil \empcovzt^{-1} \psiz(z_l) \rangle_{\Hz}\right] \\
    &= \frac{\mz}{n} \E_{\sketchz}\left[\sum_{j,l=1}^n \sketchzij \sketchzil \langle \psiz(z_j), \empcovzt^{-1} \psiz(z_l) \rangle_{\Hy}\right] \\
    &= \frac{\mz}{n} \sum_{j=1}^n \frac{1}{\mz} \langle \psiz(z_j), \empcovzt^{-1} \psiz(z_j) \rangle_{\Hz} \\
    &= \Tr\left(\empcovzt^{-1} \empcovz\right) \\
    &= \left\| \empcovzt^{-1/2} \empcovz^{1/2}\right\|_{\HS}^2 \\
    &\leq \left\| \empcovzt^{-1/2} \covzt^{1/2}\right\|_{\op}^2 \left\| \covzt^{-1/2} \empcovz^{1/2}\right\|_{\HS}^2\,.
\end{align}
But,
\begin{align}
    \left\| \covzt^{-1/2} \empcovz^{1/2}\right\|_{\HS}^2 &= \Tr\left(\covzt^{-1} \empcovz\right) \\
    &= \Tr\left(\covzt^{-1} \left(\frac{1}{n} \sum_{i=1}^n \psiz(z_i) \otimes \psiz(z_i)\right)\right) \\
    &= \frac{1}{n} \sum_{i=1}^n \Tr\left(\covzt^{-1} \left(\psiz(z_i) \otimes \psiz(z_i)\right)\right) \\
    &= \frac{1}{n} \sum_{i=1}^n \left\langle \psiz(z_i), \covzt^{-1} \psiz(z_i) \right\rangle_{\Hy} \\
    &= \frac{1}{n} \sum_{i=1}^n \bmN(z_i, t) \\
    &\leq \bmN_{\bmZ}^{\infty}(t)\,.
\end{align}
Then, from \cref{lem:proba_bound_2}, for $\delta \in \left(0, 1\right)$, and $\frac{9}{n}\log\left(\frac{n}{\delta}\right) \leq t \leq \left\|\covz\right\|_{\op}$, then with probability $1-\delta$,
\begin{align}
    \E_{\sketchz}\left[\left\|X_i\right\|_{\Hz}\right]^2 \leq 2 \bmN_{\bmZ}^{\infty}(t).
\end{align}
Then, $\left\|\Sigma\right\|_{\op} = \left\|\empcovzt^{-1/2} \empcovz^{1/2}\right\|_{\op}^2 \geq 1/3$ for $t \leq 2 \left\|\empcovz\right\|_{\op}$.

We conclude that
\begin{align}
    \frac{\E_{\sketchz}\left[\left\|W_i\right\|_{\Hz}\right]^2}{\left\|\Sigma\right\|_{\op}} \leq 6 \bmN_{\bmZ}^{\infty}(t).
\end{align}

Finally, in order to obtain a condition on $t$ that does not depend on empirical quantities, we use \cref{lem:proba_bound_2} which gives that, for any $\frac{9}{n}\log\left(\frac{n}{\delta}\right) \leq t^\prime \leq \left\|\covz\right\|_{\op}$, then $\covztprime \preceq 2 \empcovztprime$, which implies $2 \left\|\empcovz\right\|_{\op} \geq \left\|\covz\right\|_{\op} - t^\prime$. Now, taking $t^\prime = \frac{9}{n}\log\left(\frac{n}{\delta}\right)$, we obtain $\left\|\covz\right\|_{\op} - \frac{9}{n}\log\left(\frac{n}{\delta}\right) \leq 2 \left\|\empcovz\right\|_{\op}$.

\end{proof}

\section{CONTRIBUTIONS AND PREVIOUS WORKS}\label{subsec:related_work}

Excess-risk bounds for sketched kernel ridge regression have been provided in \citet{rudi2015less} in the case of Nyström subsampling, and scalar-valued ridge regression. Our proofs consist in similar derivations than in \citet{rudi2015less}. Nevertheless, we cannot apply directly their results in our setting. More precisely, we do the following additional derivations.
\begin{enumerate}
    \item Additional decompositions to deal with:
    \begin{enumerate}
        \item vector-valued regression instead of scalar-valued regression as in \citet{rudi2015less}
        \item input and output approximated feature maps
    \end{enumerate}
    \item Novel probabilistic bounds to deal with gaussian and sub-Gaussian sketching instead of Nyström sketching as in \citet{rudi2015less}.
\end{enumerate}


\section{ADDITIONAL EXPERIMENTS}
\label{apx:expes}

\subsection{Simulated Data Set for Least Squares Regression}
\label{apx:synthetic}

We report here some results about statistical performance on the synthetic data set described in \cref{sec:expes}.
First, we give an additional figure showing the MSE with respect to $\mx$ and $\my$ of the SISOKR model, see \cref{fig:mse_SISOKR}.
As reported in \cref{fig:mse_SIOKR_ISOKR}, SIOKR outperforms IOKR from $\mx = 100$, and ISOKR obtains very similar result to IOKR from $\my = 250$.

\begin{figure}[!ht]
\centering
%
\includegraphics[width=0.45\columnwidth]{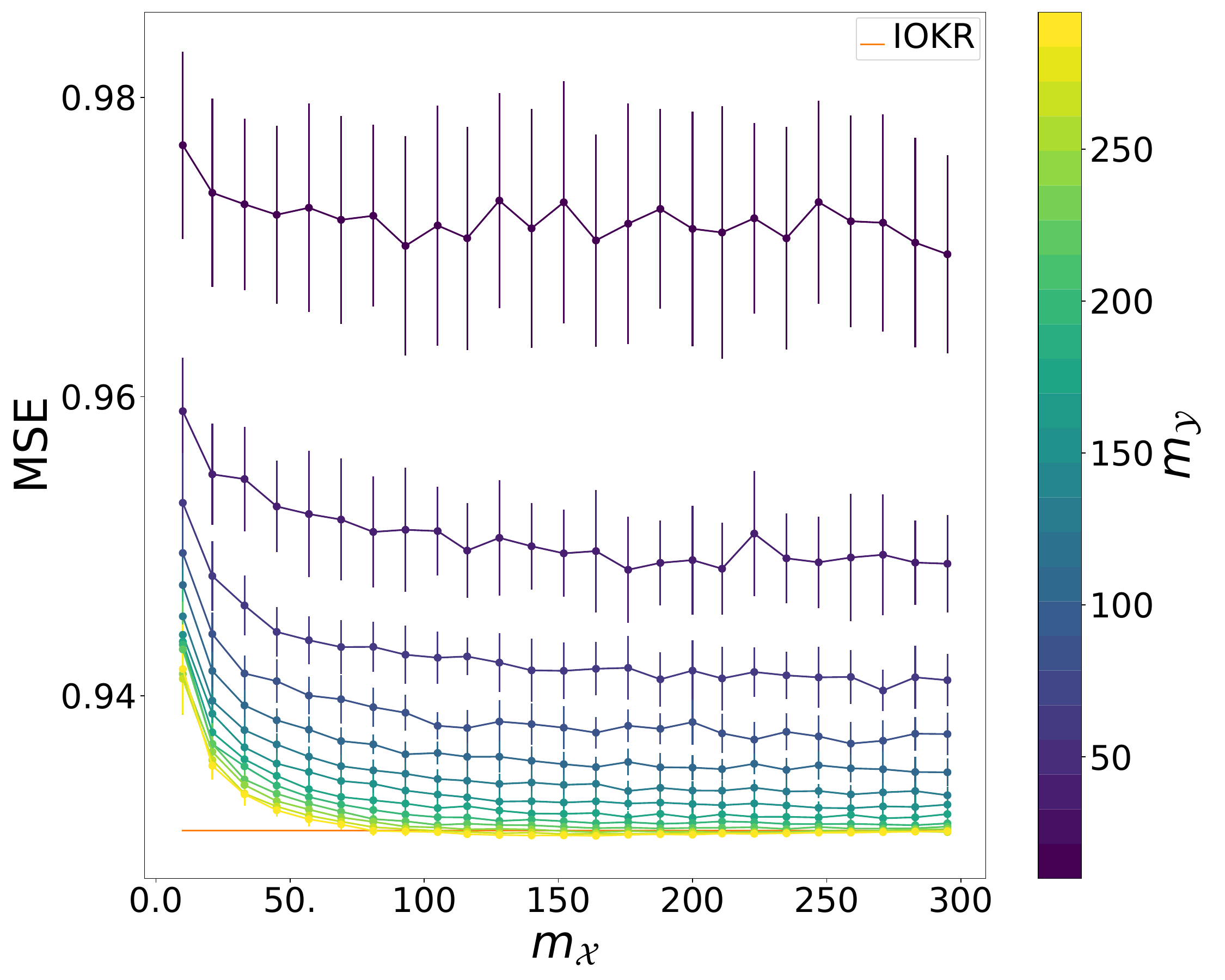}
\caption{Test MSE with respect to $\mx$ and $\my$ for the SISOKR model with $(2 \cdot 10^{-3})$-SR input and output sketches.}
\label{fig:mse_SISOKR}
\end{figure}

\begin{figure}[!ht]
\centering
\subfigure
{\label{fig:mse_SIOKR}\includegraphics[width=0.45\columnwidth]{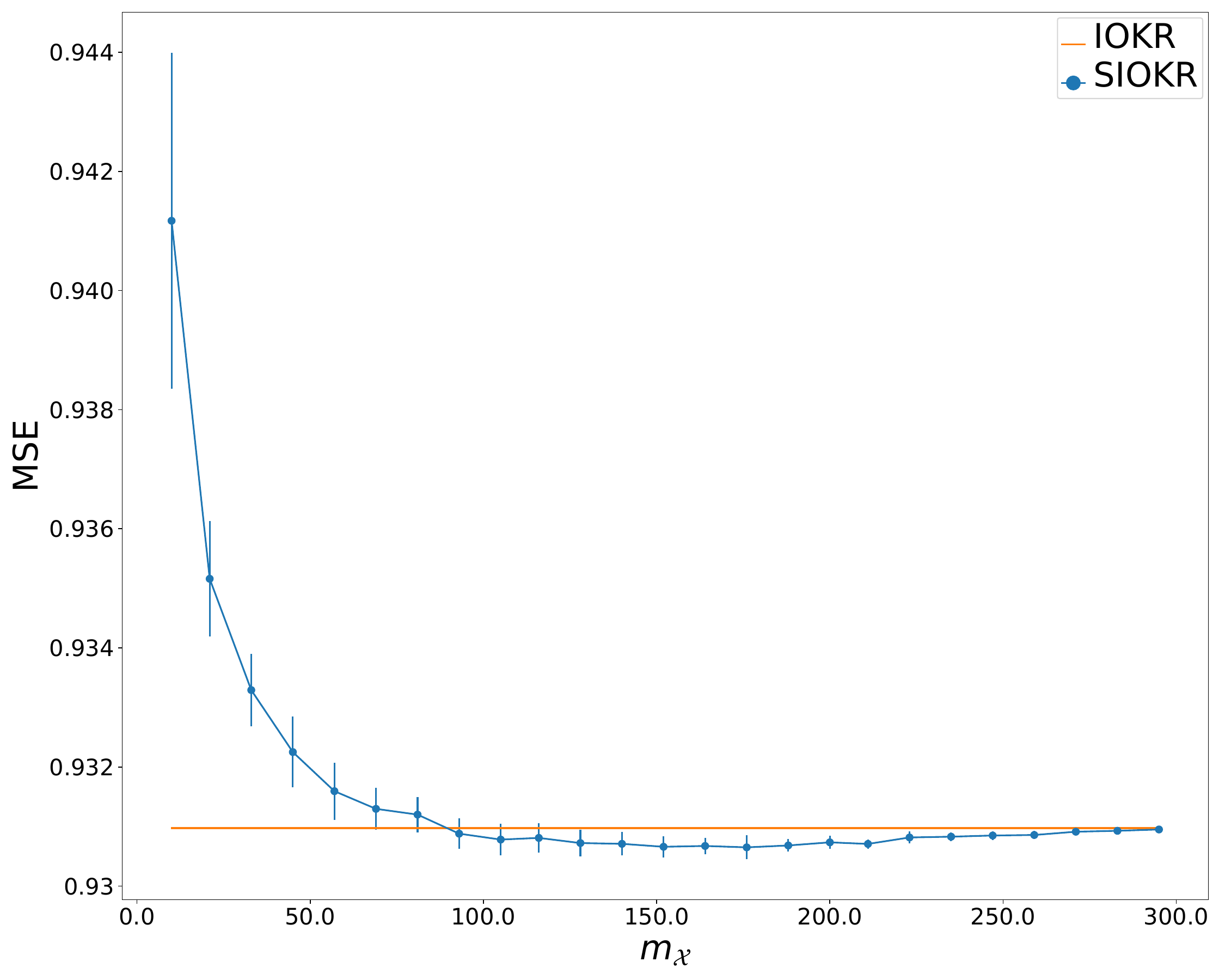}}
\qquad
\subfigure
{\label{fig:mse_ISOKR}\includegraphics[width=0.45\columnwidth]{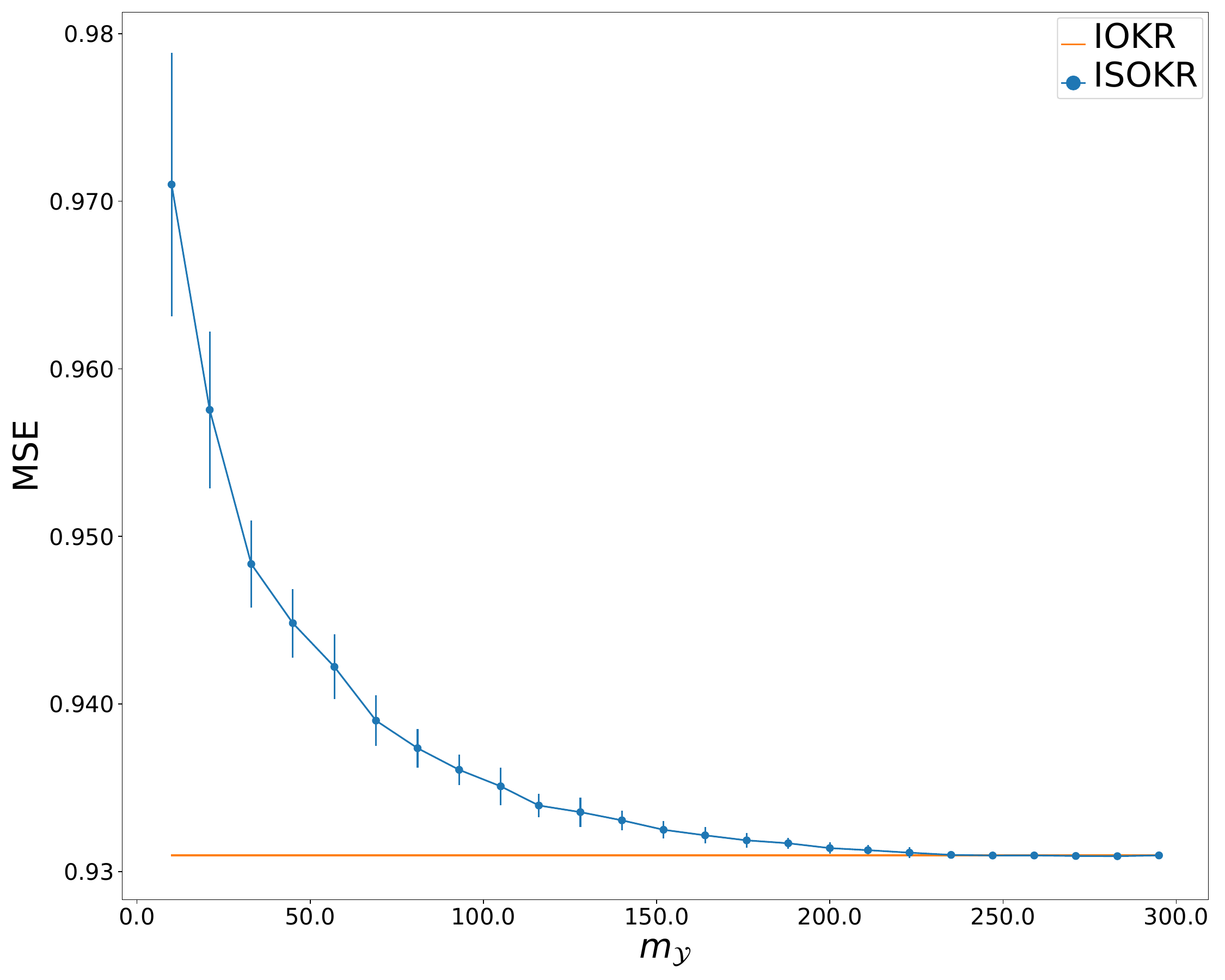}}
\caption{Test MSE with respect to $\mx$ and $\my$ for a SIOKR and ISOKR model respectively with $(2 \cdot 10^{-3})$-SR input and output sketches.}
\centering
\label{fig:mse_SIOKR_ISOKR}
\end{figure}

\subsection{More Details about Multi-Label Classification Data Set}
\label{apx:multi-label}

In this section, you can find more details about training and testing sizes, the number of features of the inputs, and the number of labels to predict of Bibtex, Bookmarks, and Mediamill data sets in \cref{table:multi-label-data-desc}.

\begin{table}[!ht]
\caption{Multi-label data sets description.}
\begin{adjustbox}{center}
\begin{small}
\begin{tabular}{ c c c c c c }
    \toprule
    Data set & $n$ & $n_{te}$ & $n_{features}$ & $n_{labels}$ \\ 
    \midrule
    Bibtex & $4880$ & $2515$ & $1836$ & $159$ \\
    Bookmarks & $60000$ & $27856$ & $2150$ & $298$ \\
    Mediamill & $30993$ & $12914$ & $120$ & $101$ \\
    \bottomrule
\end{tabular}
\end{small}
\end{adjustbox}
\label{table:multi-label-data-desc}
\end{table}

\end{document}